\documentclass{article}

\usepackage[numbers, sort&compress]{natbib}
\usepackage{float}

\usepackage[preprint,nonatbib]{anonymous}

\usepackage{multirow}
\usepackage{enumitem}
\usepackage[utf8]{inputenc} 
\usepackage[T1]{fontenc}    
\usepackage{hyperref}       
\usepackage{nicefrac}       
\usepackage{xcolor}         

\usepackage{amsmath,amsfonts}
\usepackage[ruled,vlined]{algorithm2e}
\SetKwComment{Comment}{$\triangleright$\ }{}   
\usepackage{array}
\usepackage{textcomp}
\usepackage{stfloats}
\usepackage{url}
\usepackage{verbatim}
\usepackage{graphicx}
\usepackage{natbib}
\usepackage{amsmath}
\usepackage{amssymb}
\usepackage{mathtools}
\usepackage{amsthm}
\usepackage{microtype}
\usepackage{graphicx}
\usepackage{subcaption}
\usepackage{booktabs} 
\usepackage[capitalize,noabbrev]{cleveref}
\usepackage{wrapfig}
\theoremstyle{plain}
\newtheorem{theorem}{Theorem}[section]
\newtheorem{proposition}[theorem]{Proposition}
\newtheorem{lemma}[theorem]{Lemma}

\theoremstyle{definition}

\newtheorem{assumption}[theorem]{Assumption}

\theoremstyle{remark}

\title{Federated Learning by Utility-Constrained Stochastic Aggregation for
Improving Rational Participation}

\author{%
\makebox[\textwidth][c]{%
\begin{tabular}{@{}ccccc@{}}
M Yashwanth$^{1}$ &
Arunabh Singh$^{1}$ &
Ashok Nayak$^{2}$ &
Sai Kiran Bulusu$^{3}$ &
Anirban Chakraborty$^{1}$ \\[1.5mm]
\multicolumn{5}{c}{\small $^{1}$Indian Institute of Science, Bangalore \quad
$^{2}$Indian Institute of Technology Bombay \quad
$^{3}$IIIT Hyderabad} \\[1mm]
\multicolumn{5}{c}{\scriptsize
\texttt{yashwanthm@iisc.ac.in} \quad
\texttt{arunabhsingh25@gmail.com} \quad
\texttt{22b0455@iitb.ac.in}} \\[-0.5mm]
\multicolumn{5}{c}{\scriptsize
\texttt{saikiran.bulusu@iiit.ac.in} \quad
\texttt{anirban@iisc.ac.in}}
\end{tabular}%
}
}
\begin{document}
\maketitle
\begin{abstract}
Federated Learning (FL) algorithms implicitly assume that clients passively comply with server-side orchestration by sharing local model updates upon server request. However, this overlooks an important aspect in real-world cross-silo environments: clients are often rational agents who may prioritize their utilities such as local model performance over that of the global model. In settings with significant statistical heterogeneity, rational clients may opt out of the federation if the perceived benefits of collaboration fail to meet their local utility thresholds. Such attrition degrades the global model performance and can lead to the collapse of the federated training process. In this work, we introduce \textit{FedUCA (\underline{Fed}erated Learning by \underline{U}tility-\underline{C}onstrained Stochastic \underline{A}ggregation for Improving Rational Participation)}, a framework that formalizes the server's role as an optimizer seeking to maximize global model performance by sustaining client participation. We substantiate our framework through extensive experiments on standard datasets demonstrating that by prioritizing participation feasibility, FedUCA achieves significantly higher client retention and, consequently a superior global model performance.
\end{abstract}
\section{Introduction}
\label{sec:introduction}
Federated Learning (FL) enables multiple data holders to collaboratively train models without sharing raw data~\citep{mcmahan2017communication}. In the standard paradigm, a central server coordinates iterative rounds, aggregating local updates to learn a global model across decentralized data silos. This framework is essential in settings constrained by privacy, data sovereignty, and institutional restrictions. Most prior work focuses on improving statistical efficiency, communication, and robustness~\citep{acar2021federated,karimireddy2020scaffold, hard2019federatedlearningmobilekeyboard}, while implicitly assuming passive client participation. This assumption is reasonable in cross-device FL, but breaks down in cross-silo settings, where participation incurs non-trivial operational costs.

In cross-silo FL, participants such as hospitals, research labs, and enterprises face costs related to computation, networking, compliance, and infrastructure. As a result, participation becomes a strategic decision based on whether collaboration improves local performance. Empirical deployments reflect this behavior: clinical FL systems prioritize intermediate utility and deployment readiness~\citep{dayan2021federated}, while platforms such as Fed-BioMed emphasize governance and integration constraints~\citep{cremonesi2025fed}. Similar considerations arise in industrial settings, including energy and infrastructure costs~\citep{cao2023towards}.

This cost-driven behavior motivates viewing cross-silo FL through a mechanism-design lens. In settings where external payments are infeasible or undesirable, the primary basis for sustained collaboration is the value of the model itself. We therefore model each client as a rational agent that participates when the perceived utility of the global model exceeds its local baseline. Sustaining the federation then requires satisfying individual rationality (IR) constraints across clients.
Each client acts as a rational agent who evaluates a simple, performance-based proposition: they will participate if and only if their perceived utility from the global model strictly eclipses their local baseline. Sustaining the federation thus hinges on satisfying individual rationality (IR) constraints across all nodes. However, under the severe statistical heterogeneity typical of cross-silo data, a single deterministic aggregation vector often cannot simultaneously satisfy the diverse IR thresholds of all participants, making attrition likely. 
Breaking this deadlock requires the right model of client preferences. As detailed in Section~\ref{est_client_ut}, we model client utility as a concave function of its assigned aggregation weight. Concavity is the standard primitive in mechanism design~\citep{myerson1981optimal,bolton2005contract}, capturing risk aversion and diminishing marginal returns: a client benefits from increased influence up to an optimal informational threshold, beyond which over-representation degrades the model's generalization. We adopt this primitive directly and verify empirically that it holds in the federated learning setting. 
In practice, this concave abstraction captures the real-world dynamics of federated collaboration: a client benefits from increased influence up to an optimal informational threshold, beyond which over-representation forces the model to overfit to local data. 

However, a single deterministic weight vector often fails to satisfy the diverse utility thresholds of all clients simultaneously. Crucially, this inherent concavity allows us to bypass this restrictive feasibility space by introducing a stochastic aggregation framework that exploits the \emph{Jensen’s gap}, the difference between the utility of the expected aggregation and the expected utility across a distribution of weights. By optimizing a set of candidate aggregation strategies evaluated under a Dirichlet-sampled mixture, the server uses this gap to generate a utility surplus that acts as a safety margin. The magnitude of this induced surplus is structurally regulated via the Dirichlet concentration parameter $\delta$, sustaining participation across heterogeneous clients.

Our key contributions are 
\begin{itemize} 
\item We introduce \textbf{FedUCA}, a federated aggregation framework that formulates server-side coordination as a constrained feasibility problem. Within this formulation, the server operates as an optimizer tasked with identifying aggregation parameters that satisfy the individual rationality (IR) demands of self-interested clients.

\item We show that optimizing a Jensen lower-bound over multiple aggregation strategies induces an implicit utility surplus under concave-modeled client utilities. This surplus acts as a robustness buffer that helps satisfy participation constraints under heterogeneity.

\item We introduce a stochastic aggregation mechanism with Dirichlet-sampled mixture weights. The Dirichlet concentration parameter $\delta$ modulates the strategy-mixture geometry, thereby indirectly regulating the induced Jensen surplus and participation feasibility. We validate our theoretical findings through empirical evaluations on standard datasets.

\item We provide a convergence characterization for this fully endogenous system where participation sets and aggregation weights co-evolve. The mechanism-induced aggregation discrepancy is the mathematical footprint of rational participation management and is directly controlled by the Dirichlet concentration parameter $\delta$, closing the loop between mechanism design and optimization theory. To our knowledge, this is the first convergence characterization for a federated system of this endogenous nature.

\end{itemize}
\section{Related Work}
Federated Learning (FL) was introduced via FedAvg~\citep{mcmahan2017communication}, which assumes fully compliant clients. Subsequent work improves robustness to data heterogeneity, including FedProx~\citep{li2020federated}, SCAFFOLD~\citep{karimireddy2020scaffold}, and FedDyn~\citep{acar2021federated}.
While these methods address statistical drift, they implicitly assume passive participation. Fairness and contribution-aware approaches have also been studied. q-FFL~\citep{li2020fair} reweights client losses to improve equity, while Data Shapley~\citep{ghorbani2019data} and its extensions to FL~\citep{wang2020principled} quantify data contribution via cooperative game theory. A separate line of work models rational participation. IncFL~\citep{cho2022to} incentivizes clients based on performance gains, but tends to favor clients whose local models align with the global model, potentially degrading performance under heterogeneity.

Other approaches incorporate explicit incentives. Model-quality rewards~\citep{kong2022incentivizing} and sequential decision-based aggregation~\citep{hahn2024pursuingoverallwelfarefederated} aim to improve participation and fairness. Economic formulations include Stackelberg frameworks~\citep{sarikaya2019motivatingworkersfederatedlearning}, pricing-based methods~\citep{luo2023incentive}, and contract-theoretic approaches~\citep{kang2019incentive,10570831}, which handle heterogeneous costs and enforce truthful reporting. Broader surveys are provided in~\citep{zeng2021comprehensivesurveyincentivemechanism}. Auction-based methods~\citep{10591978} and hierarchical game formulations~\citep{10323190} further extend this paradigm. Recent work also considers dynamic and learning-based incentives. DaringFed~\citep{xin2025daringfed} uses Bayesian persuasion with monetary rewards, while accuracy-shaping mechanisms~\citep{karimireddy2022mechanisms} penalize free-riders through degraded model quality. Across these approaches, randomness is primarily used for client selection rather than as a mechanism for shaping client utilities. 

\begin{table}[htp]
\vspace{-10pt}
\centering
\caption{Comparison of incentive mechanisms in federated learning.}
\label{tab:comparison}
\resizebox{\columnwidth}{!}{
\begin{tabular}{lcccc}
\toprule
Method & Uses Monetary Incentives & Requires Equilibrium Modeling & Handles Rational Dropout & Direct Aggregation Optimization \\
\midrule
Contract-based~\citep{kang2019incentive} & \checkmark & \checkmark & \checkmark & Indirect \\
Pricing-based~\citep{luo2023incentive} & \checkmark & \checkmark & \checkmark & Indirect \\
DaringFed & \checkmark & \checkmark & \checkmark & Pricing-driven \\
IncFL & $\times$ & $\times$ & \checkmark & Similarity-based \\
FedUCA (Ours) & $\times$ & $\times$ & \checkmark & \checkmark \\
\bottomrule
\end{tabular}
}
\vspace{-10pt}
\end{table}

Table~\ref{tab:comparison} summarizes key differences between FedUCA and existing approaches. FedUCA adopts a non-monetary, performance-driven formulation and does not rely on equilibrium modeling. Instead, it encodes rationality as performance-based constraints and directly optimizes aggregation weights to satisfy participation requirements. Unlike pricing and contract-based approaches that depend on external rewards, FedUCA sustains participation purely through model utility. Accordingly, we focus on baselines within the same non-monetary setting, while noting that monetary and aggregation-based approaches are complementary.

\section{Method}
FedUCA uses stochastic aggregation to exploit concavity-induced Jensen surplus in client utilities, enabling participation beyond deterministic aggregation. We first show how aggregation weights define client utility and rational participation (Sections~\ref{sec:impact_agg_weight}, \ref{est_client_ut}), then formulate the resulting constrained optimization problem (Section~\ref{frame_opt}), describe the client-server protocol (Section~\ref{algo_protocol}), and provide theoretical guarantees. We consider a standard FL setting with $N$ rational clients and a global objective $f(\theta)=\sum_{k=1}^N \beta_k f_k(\theta)$, where $f_k$ is the local objective at client $k$ and $\beta_{k=1}^{N}$ is a simplex. Training proceeds in rounds with server-side aggregation of client updates.

\subsection{Rational Participation}
\label{sec:impact_agg_weight}
In standard FL, the server broadcasts the aggregated global model $\boldsymbol{\theta}^t = \sum_{k=1}^{N} \alpha_k^{t-1}\boldsymbol{\theta}_k^{t-1}.$ A rational client evaluates this model against its current local model using a validation-based rationality check:
\begin{equation}
{acc}(\boldsymbol{\theta}^{t},\mathcal{D}_k^{\text{val}}) >
{acc}(\boldsymbol{\theta}_{k}^{t-1},\mathcal{D}_k^{\text{val}}),
\label{RC}
\end{equation}
where ${acc}(\cdot)$ denotes accuracy and $\mathcal{D}_k^{\text{val}}$ is the held-out validation set of client $k$. If Eq.~\ref{RC} holds, the client accepts $\boldsymbol{\theta}^t$, trains from it, and shares an update in the next round; otherwise, it rejects the global model and does not participate.


FedUCA makes this rationality check weight-aware by sharing the aggregation weight $\alpha_k^{t-1}$ with client $k$. The client uses this information to estimate a utility function $u_k(r;\zeta_k)$ over possible aggregation weights, and reports the fitted utility parameters $\zeta_k$ together with a minimum acceptable threshold $\gamma_k$. The threshold $\gamma_k$ captures the client’s individual rationality requirement: client $k$ participates only if
\begin{equation}
u_k(r_k;\zeta_k) \geq \gamma_k .
\label{eq:ir_constraint}
\end{equation}
By collecting these utilities and thresholds, the server observes heterogeneous participation requirements and can optimize aggregation weights to satisfy individual rationality constraints.
\subsection{Estimation of Clients' Utility Functions}
\label{est_client_ut}
The assumption that client utility is concave in its aggregation weight is standard in the mechanism design and contract theory literature~\cite{myerson1981optimal,bolton2005contract}. Concavity captures the economic property of diminishing marginal returns: a client benefits from increased weight up to its optimal contribution level, beyond which additional weight reduces the informational benefit drawn from the rest of the federation. This is the natural analogue of risk aversion in expected utility theory~\cite{pratt1964risk} and is the standard primitive in incentive-compatible mechanism design. We adopt this assumption throughout, and verify empirically in Appendix~\ref{app:emp_verify} that it holds in the federated deep learning setting studied here.

If the client $k$  has the information of the aggregation weight $\alpha_k^{t-1}$ used in aggregating the client models, it can exploit this information to design the utility as a function of the aggregation weight $r$. We now estimate the utility function of the client based on the global performance on its held-out dataset. Server sends the value $\alpha_k^{t-1}$ for the client $k$ based on the distribution $\mathbf{s}$ along with the global model {$\boldsymbol{\theta}^t$}. 
Client $k$ estimates its impact on the aggregated global model by computing the model that would result if its own update were excluded, denoted as $\boldsymbol{\theta}^t_{-k}$. Since the client does not know the exact weight, it uses the expected weight to remove its own contribution. 
\begin{equation}
\boldsymbol{\theta}_{-k}^t = \frac{ \boldsymbol{\theta}^t - \alpha_k^{t-1} \boldsymbol{\theta}^{t-1}_k } {1 -  \alpha_k^{t-1} } 
\end{equation}
where $\boldsymbol{\theta}_k^{t-1}$ is the client model in round $t-1$. 
\footnote{This leave-one-out construction follows the broader counterfactual contribution-analysis perspective used in influence functions and data valuation methods~\citep{koh2017understanding,ghorbani2019data,wang2020principled}. Unlike these works, which quantify the contribution or value of data sources, we use the leave-one-out model as an anchor for estimating how client $k$'s utility varies with its aggregation weight.} The client now creates a model $\tilde{\boldsymbol{\theta}}_k(r)$ with an aggregation parameter $r$ as $\tilde{\boldsymbol{\theta}}_k(r) = (1-r) \boldsymbol{\theta}^t_{-k} + r \boldsymbol{\theta}_k^{t}$, $\tilde{\boldsymbol{\theta}}_k(r)$ can be interpreted as the global model when the aggregation weight given to client $k$ is $r$. The client uses $\tilde{\boldsymbol{\theta}}_k(r)$ to assess its utility from the server model as a function of its aggregation weight. For a specific value of $r$, this is computed by evaluating the validation loss/accuracy on the client held-out data, $acc(.)$ denotes the accuracy. 
\begin{equation}
\label{eq:10}
\hat{u}_k(r) = \underset{(\mathbf{x},y) \sim \mathcal{D}_k^{val}}{\mathbb{E}}[acc(h(\mathbf{x};\tilde{\boldsymbol{\theta}}_k(r)),y)]
\end{equation} 
$h(x;\tilde{\boldsymbol{\theta}}_k(r))$ denotes the prediction made by client $k$ on data point $\mathbf{x}$ with the model $\tilde{\boldsymbol{\theta}}_k(r)$.
We observe that the clients attain best accuracies when $r \in (0,1)$. When $r$ is $0$, the accuracy will be lower as the client model is not utilized; on the other hand, when $r$ is $1$, the server model is ignored. This motivates us to parameterize the utility curve 
by $\zeta_k$ for each individual client $k$. 

The empirical utility $\hat u_k(r)$ is evaluated along the interpolation path between the leave-one-out global model $\theta^t_{-k}$ and the locally updated model $\theta^t_k$. For convex models such as linear logistic regression, linear SVMs with convex surrogate losses, or last-layer fine-tuning with a convex loss, the validation loss along this path is convex in $r$; hence a loss-based utility, e.g., negative validation loss, is concave. For deep models and accuracy-based utilities, exact concavity is not guaranteed. However, parameter-space interpolations in deep networks are often observed to traverse smooth low-loss regions~\citep{garipov2018loss,li2018visualizing}, and we empirically observe smooth, unimodal utility profiles across clients, rounds, and datasets (Appendix~\ref{app:emp_verify}). 

We model client utility as a concave function of the aggregation weight $r$, parameterized by $\zeta_k$. The parameters $\zeta_k$ are estimated per round from held-out validation data:
\vspace{-0.02in}
\begin{equation}
\zeta_k := \arg\min_{\tilde\zeta_k} \mathbb{E}_{r \sim \mathcal{U}(0,1)} \left[ \|u_k(r; \tilde\zeta_k) - \hat u_k(r)\|_2^2 \right],
\end{equation}
\vspace{-0.02in}
where $\mathcal{U}(0,1)$ denotes the uniform distribution. The concavity assumption is made at the modeling level, it characterizes client preferences in the sense of mechanism design, and any concave parametric family is admissible. The fitted parameters $\zeta_k$ are transmitted to the server as a lightweight proxy for the client's utility. The fitted utility $u_k^t(r; \zeta_k^t)$ tracks the empirical utility $\hat u_k^t(r)$ closely across rounds (Appendix~\ref{app:emp_verify}); improvements in the surrogate therefore translate to improvements in realized client utility, ensuring server-side optimization aligns with client-side rationality.



\textit{Discussion:} This work is situated within the Cross-Silo Federated Learning (FL) paradigm, which involves a limited number of high-capacity clients such as institutions or large corporate entities. In this setting, clients are equipped with substantial computational infrastructure and high-bandwidth connectivity. As a result, unlike the resource-constrained Cross-Device regime, considerations such as bandwidth scarcity, local compute limitations, and power constraints are not primary bottlenecks and therefore are not incorporated into the design of the proposed utility function.
\subsection{Solving Optimization}
\label{frame_opt}
To maximize the participation of rational clients, the server must move beyond traditional deterministic aggregation. A single weight vector $r_k$ often fails to satisfy the diverse utility thresholds $\gamma_k$ of all silos simultaneously. Rather than optimizing over a single, static weight vector $r_k$, the server generates a set of $n_s$ candidate strategies $\{r^j_k\}_{k=1}^N$ for $j=1, \dots, n_s$. The influence of these strategies is determined by a simplex vector $\mathbf{s} = \{ s_1, s_2, \dots, s_{n_s} \}$, where $s_j$ represents the probability of selecting the $j^\text{th}$ strategy. Crucially, in our framework, the distribution $\mathbf{s}$ is not a learned parameter but is kept fixed, with values drawn from a Dirichlet distribution characterized by a concentration parameter $\delta$. By fixing this distribution, the server provides a predictable yet diverse set of outcomes. This diversity is essential for exploiting the Jensen's Gap; as we will show in Proposition.~\ref{prop:jgap}, the concentration parameter $\delta$ modulates the mixture geometry of $\mathbf{s}$, which indirectly affects the induced Jensen surplus through the optimized strategy matrix $R$.
\begin{equation}
\sum_{i=1}^{n_s}{ s_i{u_k(r_k^i)}}  \geq \gamma_k + \epsilon
\label{thr_eq}
\end{equation}
Here $\epsilon \ge 0$. \textit{The server communicates to client k the expected aggregation weight $\alpha_k = \sum_i s_i r_k^i$, which determines the client’s perceived utility.} Since the utilities are concave, we have the following by Jensen's inequality, $u_k(\sum_{i=1}^{n_s}s_ir_k^i) \geq \sum_{i=1}^{n_s}{ s_i{u_k(r_k^i)}}$, implying if we satisfy Eq.~\ref{thr_eq} then the utility is satisfied. 
We formulate the final optimization below via the slack variables $\mathbf{t}\coloneqq \{t_1,...,t_N \}$ 
\begin{equation}
\begin{aligned}
& \underset{\mathbf{t,R}}{\text{minimize}}
&& {\frac{1}{2} \lVert \mathbf{t} \rVert}_{2}^2\\
& \text{subject to}
&& \mathbf{R}\mathbf{1}_N = \mathbf{1}_{n_s},  \mathbf{1}^{T}_{n_s} \mathbf{R} = \frac{n_s}{N}\mathbf{1}_N^{T} ,\mathbf{r}_k \succeq 0,  \mathbf{t} \succeq 0\\
&&& \sum_{i=1}^{n_s}{ s_i{u_k(r_k^i)}}  \geq \epsilon + \gamma_k -t_k, \forall k \in [N]\\
\end{aligned}
\label{main_opt}
\end{equation}

The above optimization becomes deterministic by setting $n_s = 1$ and ignoring the constraint $\mathbf{1}^{T}_{n_s} \mathbf{R} = \frac{n_s}{N}\mathbf{1}_N^{T}$.  
The constraint  $\mathbf{1}^{T}_{n_s} \mathbf{R} = \frac{n_s}{N}\mathbf{1}_N^{T}$ ensures that rows of $\mathbf{R}$ differ from each other, if $N=n_s$ it implies $\mathbf{R}$ is doubly stochastic. The server recomputes $\mathbf{r}$ and $\mathbf{R}$ at every round; however, we suppress the temporal index to maintain readability.


\paragraph{Fallback Mechanism.}
If the optimization problem yields an outcome in which no client satisfies the individual rationality constraint, i.e., $u_k(\sum_{i=1}^{n_s} s_i r_k^i) < \gamma_k^t, \; \forall k \in [N]$ in a given round $t$, the server invokes a fallback strategy. It uniformly samples a client $k$ and constructs a strategy that prioritizes this client by setting $r_k^i = r_k$ for all strategies $i$, where $r_k$ is chosen such that $u_k(r_k;\zeta_k) \ge \gamma_k^t$. The remaining aggregation mass is distributed among the other clients so as to satisfy the aggregation constraints of $\mathbf{R}$. This guarantees that at least one client satisfies its individual rationality constraint and participates in the next round, preventing degenerate updates with zero participation.
\subsection{Client-Server Algorithms and Communication Protocol}
\label{algo_protocol}

In the Client-Update of Algorithm~\ref{alg:feduca_routines}, at each round $t$, client $k$ evaluates the validation accuracy of the global model $\boldsymbol{\theta}^{t}$ and its local model $\boldsymbol{\theta}_k^{t-1}$ on $\mathcal{D}_k^{\mathrm{val}}$. Using the announced expected aggregation weight $\alpha_k^{t-1} = \mathbb{E}_{i \sim \mathbf{s}^{t-1}}[r_k^{i,t-1}]$, it computes its utility $u_k(\alpha_k^{t-1}, \zeta_k^{t-1})$, reflecting the expected benefit under randomized aggregation. The client accepts the global model if the rationality condition holds; otherwise, it continues from its local model. From the chosen initialization, it performs $\tau$ local SGD steps to obtain an updated model, and updates its participation threshold and utility parameters using the realized validation performance and $\hat{u}_k(r)$. 

If the rationality condition is satisfied, the client transmits both its updated model and $\zeta_k$; otherwise, it reports only $\zeta_k$. This captures rational, utility-driven participation. In Server Update routine of Algorithm~\ref{alg:feduca_routines}, the server aggregates updates $\{\boldsymbol{\theta}_k^{t}, \gamma_k^t, \zeta_k^t\}_{k \in S_t}$, where $S_t$ denotes participating clients. For non-participating clients, the server reuses their most recent updates. Retaining these stale updates, rather than re-normalizing over active clients, is critical for consistent utility estimation and sustained participation. Since utility depends on a client’s historical contribution (tracked via past updates), discarding this information would disrupt future utility estimation and weaken the participation mechanism. Incorporating stale updates preserves continuity and supports long-term engagement, as also observed in prior work such as FedVarp~\citep{jhunjhunwala2022fedvarp}. The full FedUCA algorithm is provided in Appendix~\ref{app:FedUCA}.

\begin{wrapfigure}{r}{0.5\textwidth} 
\vspace{-15pt}
\begin{algorithm}[H]
\footnotesize
\SetCommentSty{scriptsize}
\SetKwFunction{ClientUpdate}{ClientUpdate}
\SetKwFunction{ServerUpdate}{ServerUpdate}
\SetKwProg{Fn}{Function}{:}{}

\Fn{\ClientUpdate{$\boldsymbol{\theta}^{t},\,\alpha_k^{t-1},\,\boldsymbol{\theta}_{k}^{t-1},\,\mathcal{D}_k,\,\mathcal{D}_k^{\text{val}}$}}{

    \Comment{Evaluate metrics \& check rationality limit}
    Evaluate accuracy $acc(\boldsymbol{\theta}^{t},\mathcal{D}_k^{\text{val}})$ and $acc(\boldsymbol{\theta}_{k}^{t},\mathcal{D}_k^{\text{val}})$\;
    $\boldsymbol{\theta}_{k}^{t,0} \gets \boldsymbol{\theta}^{t} \text{ if } {u_{k}}(\alpha_k^{t-1},\zeta_{k}^{t-1}) \ge \gamma_{k}^{t-1} \text{ else } \boldsymbol{\theta}_{k}^{t-1}$\;

    \Comment{Local SGD execution}
    \For{$l \gets 1$ \KwTo $\tau$}{
        $\boldsymbol{\theta}_{k}^{t,l} \gets \boldsymbol{\theta}_{k}^{t,l-1} - \eta_l \nabla f_k(\boldsymbol{\theta}_{k}^{t,l-1};\xi_k^{t,l-1})$\;
    }

    \Comment{Update utility parameters \& transmit}
    Evaluate $\hat{u}_k(r)$ as in~\eqref{eq:10} to compute $\zeta_k^t$\;
    $\gamma_{k}^{t} \gets u_k(\alpha_k^{t-1}, \zeta_k^t)$\;

    \KwRet $(\boldsymbol{\theta}_{k}^{t,\tau},\, \gamma_{k}^{t},\, \zeta_k^t)$ if ${u_{k}}(\alpha_k^{t-1},\zeta_{k}^{t-1}) \ge \gamma_{k}^{t-1}$ else $(\gamma_{k}^{t},\, \zeta_k^t)$\;
}

\vspace{5pt} 

\Fn{\ServerUpdate{$(\boldsymbol{\theta}_{k}^{t},\gamma_{k}^t,\zeta_k^t)_{k\in S_t},
(\gamma_{k}^t,\zeta_k^t)_{k \in S_t^c}$}}{

    \Comment{ compute pseudo-gradients}
    compute $\{\alpha_1^t,\ldots,\alpha_N^t\}$ from $\mathbf{R}^{t}$ and $\mathbf{s}^{t}$\;
    $\mathbf{d}_k^t \gets \frac{\boldsymbol{\theta}^{t} - \boldsymbol{\theta}_k^{t}}{\eta_c} \quad (\forall k \in S_t)$\;

    \Comment{Aggregate active and stale updates}
    $\mathbf{g}^t \gets \sum_{k\in S_t} \alpha_k^t \mathbf{d}_k^t + \sum_{k\in S_t^c} \alpha_k^t \mathbf{y}_k^{t}$\;
    $\boldsymbol{\theta}^{t+1} \gets \boldsymbol{\theta}^{t,0} - \eta \tau \mathbf{g}^t$\;

    \Comment{Update stale trackers \& solve for next round}
    $\mathbf{y}_{k}^{t+1} \gets \mathbf{d}_k^t \text{ if } k \in S_t \text{ else } \mathbf{y}_{k}^{t}$\;
    Sample $\mathbf{s}^{t+1}$ and solve $\mathbf{R}^{t+1}$ as in Eq.~\eqref{main_opt}\;

    \KwRet $\boldsymbol{\theta}^{t+1},\,\mathbf{s}^{t+1},\,\mathbf{R}^{t+1}$\;
}

\caption{FedUCA: Client-Server Updates}
\label{alg:feduca_routines}
\end{algorithm}
\vspace{-25pt}
\end{wrapfigure}

\textbf{Scope:} We assume truthful reporting of thresholds, utility parameters, and model updates. Appendix~\ref{app:misreporting} discusses why threshold inflation can be self-defeating in the proposed feasibility framework. Robustness to Byzantine behavior is orthogonal to directions and can be incorporated using existing verification or robust aggregation techniques.



\vspace{-0.05in}
\subsection{Analysis}
\label{analysis}
\vspace{-0.05in}
We present our theoretical guarantees in three complementary parts. Proposition~\ref{prop:kkt}~\ref{prop:jgap} characterizes participation conditions and the expected Jensen's gap as a continuous, tunable function of the Dirichlet concentration $\delta$, establishing that surplus is available at moderate $\delta$ and vanishes at the extremes. Proposition~\ref{prop:prob_jgap} confirms that the event of a beneficial Jensen surplus occurs with non-zero probability, ensuring the mechanism is non-degenerate. Proposition~\ref{informal_prop} is a \emph{conditional convergence result}: it characterizes the optimization landscape FedUCA operates in, given that participation sustains a minimum cohort size $S \ge 1$ per round. Together, the first two results motivate why FedUCA can sustain participation; the third characterizes convergence quality once participation is sustained.

\begin{proposition}
Suppose $\epsilon = 0$, and all utilities $u_k(.)$ are concave then the solution to optimization~\ref{main_opt} satisfies either 
$\sum_{i=1}^{n_s}{ s_i{u_k(r_k^i)}}  =  \gamma_k -t_k$,  with $t_k > 0$ or $\sum_{i=1}^{n_s}{ s_i{u_k(r_k^i)}}  \geq  \gamma_k$, with $t_k = 0$. In the former case, the constraint implies $u_k(r_k^i) = \frac{s_i}{\sum_{j=1}^{n_s}s_j^2} (\gamma_k - t_k)$. If we further assume that $u_k(.)$  is strictly concave, the optimal solution $\mathbf{R}^*, \mathbf{t}^*$ is unique, if the former holds $\forall k \in [N] $.

\label{prop:kkt}
\end{proposition}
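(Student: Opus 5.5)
The plan is to treat \eqref{main_opt} as a convex program in $(\mathbf{R},\mathbf{t})$ and to read off every claim from its KKT conditions. First I will check convexity. The constraints $\mathbf{R}\mathbf{1}_N=\mathbf{1}_{n_s}$, $\mathbf{1}_{n_s}^T\mathbf{R}=\frac{n_s}{N}\mathbf{1}_N^T$, $\mathbf{r}_k\succeq 0$ and $\mathbf{t}\succeq 0$ are affine or polyhedral. Each utility constraint can be written as
\[
g_k(\mathbf{R},\mathbf{t}) := \sum_{i=1}^{n_s} s_i u_k(r_k^i) + t_k - \gamma_k \ge 0 .
\]
Here $g_k$ is jointly concave, because it is a nonnegative combination of concave functions plus a linear term. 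Its superlevel set is therefore convex. The objective $\frac12\|\mathbf{t}\|_2^2$ is convex, and strictly convex in $\mathbf{t}$.

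Next I will establish a Slater point. Take any feasible $\mathbf{R}$, for example $r_k^i=1/N$, and choose $t_k$ large. Strong duality then holds and the KKT conditions are necessary and sufficient.

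Let $\lambda_k\ge 0$ be the multiplier of $g_k$ and $\mu_k\ge 0$ that of $t_k\ge 0$. Stationarity in $t_k$ gives $t_k=\lambda_k+\mu_k$. Complementary slackness gives $\mu_k t_k=0$ and $\lambda_k g_k=0$. If $t_k>0$, then $\mu_k=0$, so $\lambda_k=t_k>0$. The constraint is then active, i.e. $\sum_i s_i u_k(r_k^i)=\gamma_k-t_k$. If $t_k=0$, feasibility alone gives $\sum_i s_i u_k(r_k^i)\ge\gamma_k$. This is the claimed dichotomy.

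For the explicit expression in the former case, I will regard the vector $\mathbf{v}_k=(u_k(r_k^1),\dots,u_k(r_k^{n_s}))$ as lying on the hyperplane $\mathbf{s}^T\mathbf{v}_k=\gamma_k-t_k$. The stated formula is the minimum-norm (orthogonal projection) point of that hyperplane, namely $\mathbf{v}_k=\mathbf{s}(\gamma_k-t_k)/\|\mathbf{s}\|_2^2$. I will derive it by minimizing $\frac12\|\mathbf{v}_k\|^2$ subject to $\mathbf{s}^T\mathbf{v}_k=\gamma_k-t_k$, whose Lagrangian condition gives $\mathbf{v}_k=\nu\mathbf{s}$. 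Substituting back fixes $\nu=(\gamma_k-t_k)/\sum_j s_j^2$.

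\textbf{Main obstacle.} This is the step where the statement is weakest. Active constraints alone only place $\mathbf{v}_k$ on the hyperplane; they do not pin it to the projection. So I will present the formula as the canonical (least-norm) representative of the active constraint. To justify that choice, I will add a selection argument: among all optimal $\mathbf{R}$, this is the one consistent with the stationarity condition in $\mathbf{R}$ when the multipliers of the row and column constraints are absorbed. If that fails, I will state it explicitly as the minimum-norm realization.

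For uniqueness, first note that $\mathbf{t}^*$ is unique, since the objective is strictly convex in $\mathbf{t}$ over a convex feasible set. Now suppose $(\mathbf{R}^1,\mathbf{t}^*)$ and $(\mathbf{R}^2,\mathbf{t}^*)$ are both optimal with $\mathbf{R}^1\neq\mathbf{R}^2$. Then some client $k$ has $\mathbf{r}_k^1\neq\mathbf{r}_k^2$, and the midpoint $\bar{\mathbf{R}}$ is feasible. For that $k$, strict concavity gives
\[
\sum_i s_i u_k(\bar r_k^i) > \gamma_k - t_k^* ,
\]
using $s_i>0$, which holds almost surely under the Dirichlet distribution. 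Since $t_k^*>0$ by hypothesis (the former case holds for every $k$), I can decrease $t_k$ slightly while keeping feasibility. This strictly lowers the objective, contradicting optimality. Hence $\mathbf{R}^*$ is unique.
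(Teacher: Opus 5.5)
Your dichotomy and uniqueness arguments follow the paper's route: stationarity in $t_k$ plus complementary slackness for the first part, and for uniqueness the midpoint of two optimal $\mathbf{R}$'s, where strict concavity leaves room to lower $t_k$. Your version is tighter. The paper simply posits that two optimal matrices share the same $\mathbf{t}^*$, whereas you derive this from strict convexity of $\frac12\|\mathbf{t}\|_2^2$. You also make explicit that $s_i>0$ and $t_k^*>0$ are exactly what let the slack be decreased.

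The gap is the middle claim, and the selection argument you sketch cannot close it. Stationarity in $r_k^i$ has the form $\lambda_k s_i u_k'(r_k^i)=\rho_i+\kappa_k-\omega_k^i$, with $\rho_i,\kappa_k,\omega_k^i$ the row, column and sign multipliers. It constrains derivatives of $u_k$, not the values $u_k(r_k^i)$, so no absorption of multipliers yields $u_k(r_k^i)\propto s_i$.

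In fact, under the proposition's own uniqueness hypotheses there is nothing left to select, and the formula is false. Take $N=n_s=2$, so the constraints force $\mathbf{R}$ to be doubly stochastic: $r_1^1=r_2^2=a$ and $r_1^2=r_2^1=1-a$. Let $u_1=u_2=u$ with $u(r)=1-(r-\tfrac12)^2$, $\gamma_1=\gamma_2=2$ and $\mathbf{s}=(0.7,0.3)$. Since $u(a)=u(1-a)$, both constraint left-hand sides equal $u(a)$. The unique optimum is therefore $a=\tfrac12$, with $t_1^*=t_2^*=1>0$ and $u_k(r_k^i)=1$ for both $i$. The formula instead demands $u_k(r_k^1)=0.7/0.58\approx 1.21>\max_r u(r)$, a value no weight attains.

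So your fallback, calling $\mathbf{s}(\gamma_k-t_k)/\|\mathbf{s}\|_2^2$ the minimum-norm realization, is only a fact about the hyperplane $\{\mathbf{v}:\mathbf{s}^T\mathbf{v}=\gamma_k-t_k\}$, not about the optimizer. That converse direction (the vector satisfies the active constraint) is all the paper's own proof verifies. State it as such and drop the claimed implication rather than trying to derive it.
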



\begin{proof}[Proof Sketch] 
The proof follows by analyzing the KKT conditions. The details are provided in the appendix in proposition~\ref{app:prop_kkt}. The generality of the result holds independent of $\epsilon$ as it can always be absorbed into the $\gamma_k$. 
\end{proof}
\vspace{-0.1in}
 The proposition states that the solution exists in two classes either satisfying $\sum_{i=1}^{n_s}{ s_i{u_k(r_k^i)}}  \geq  \gamma_k$ (case $1$) or $\sum_{i=1}^{n}{ s_i{u_k(r_k^i)}}  =  \gamma_k -t_k$ (case $2$) with $t_k >0$. The server when gives the client $k$ the aggregation weight $\alpha_k = \sum_{i=1}^{n}s_ir_k^i$. Owing to the Jensen's inequality, in the case $1$ the clients participate as we have  $
u_k(\sum_{i=1}^{n_s}s_ir_k^i) \geq \sum_{i=1}^{n_s}{ s_i{u_k(r_k^i)}}  \geq \gamma_k
$. In the case $2$, we have $u_k(\sum_{i=1}^{n_s}s_ir_k^i) \geq \sum_{i=1}^{n_s}{ s_i{u_k(r_k^i)}}  = \gamma_k -t_k
$. In this scenario, the client may or may not participate depending on whether $u_k(\sum_{i=1}^{n}s_ir_k^i) \geq \gamma_k$ or $u_k(\sum_{i=1}^{n}s_ir_k^i) < \gamma_k$. 
Especially the participation is possible if $u_k(\sum_{i=1}^{n}s_ir_k^i)$ exceeds  $\sum_{i=1}^{n}{ s_i{u_k(r_k^i)}}$ by at least $t_k$. Towards this end, we analyze the induced Jensen's gap associated with client $k$,

\begin{equation}
{\mathcal{J}_k}(\mathbf{s}) = u_k(\sum_{i=1}^{n_s}s_ir_k^i) - \sum_{i=1}^{n_s}{ s_i{u_k(r_k^i)}},
\label{eq:jgap}
\end{equation}
as a function of $s$. Since $s$ is randomly chosen from the Dirichlet distribution parameterized by $\delta$. Unlike standard stochastic regularization, the Jensen's gap arises from the concavity of client utilities and directly induces a surplus in expected utility, which is explicitly leveraged to satisfy participation constraints. We note that ${\mathcal{J}_k}(\mathbf{s})$ is a random variable, and we characterize its expectation under the Dirichlet mixture distribution.

\begin{proposition}
Let the expected Jensen's gap for client $k$ be defined as $\bar{\mathcal{J}}_k(\delta) \coloneqq \mathbb{E}_{\mathbf{s}\sim \text{Dir}(\delta)} [\mathcal{J}_k(\mathbf{s})]$ for a concentration parameter $\delta > 0$. The instantaneous gap is given by $\mathcal{J}_k(\mathbf{s}) \coloneqq u_k\big(\sum_{i=1}^{n_s} s_i r_k^i\big) - \sum_{i=1}^{n_s} s_i u_k(r_k^i)$, where $u_k(\cdot)$ is a strictly concave utility function and the aggregation weights $r_k^i$ are obtained by solving Optimization~\ref{main_opt}. 
The expected gap $\bar{\mathcal{J}}_k(\delta)$ vanishes in the limiting regimes $\delta\to 0$ and $\delta\to\infty$, and is small for sufficiently extreme finite values $\delta\le\delta_{\min}$ or $\delta\ge\delta_{\max}$. Furthermore, $\bar{\mathcal{J}}_k(\delta)$ is a continuous function of $\delta$ on the open interval $(\delta_{min}, \delta_{max})$.

\label{prop:jgap}
\end{proposition}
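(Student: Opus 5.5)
Throughout, I treat the strategy matrix $\mathbf{R}$ as fixed while $\mathbf{s}$ varies: it is the optimizer of Optimization~\ref{main_opt}, computed for a given realization or held fixed in the expectation. I will flag the case where $\mathbf{R}$ depends on $\mathbf{s}$ at the end.

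\textbf{Boundedness.} For fixed $\mathbf{R}$, the map $\mathbf{s}\mapsto\mathcal{J}_k(\mathbf{s})$ is continuous on the simplex $\Delta_{n_s}$. This holds because $u_k$ is concave on an open interval containing $[0,1]$, hence continuous there, and $\sum_i s_i r_k^i\in[0,1]$ since each $r_k^i\in[0,1]$ (the rows of $\mathbf{R}$ are probability vectors). The gap is nonnegative by Jensen and is bounded by
\[
M_k := 2\max_{r\in[0,1]}|u_k(r)|.
\]
It also vanishes at every vertex $e_i$ of the simplex, since there $u_k(r_k^i)-u_k(r_k^i)=0$.

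\textbf{Continuity.} Write
\[
\bar{\mathcal{J}}_k(\delta)=\int_{\Delta}\mathcal{J}_k(\mathbf{s})\,p_\delta(\mathbf{s})\,d\mathbf{s},
\]
where $p_\delta$ is the symmetric Dirichlet density. For each interior $\mathbf{s}$, the density $p_\delta(\mathbf{s})$ is continuous in $\delta$ through $\Gamma(n_s\delta)/\Gamma(\delta)^{n_s}$ and $\prod_i s_i^{\delta-1}$. On a compact subinterval $[a,b]\subset(0,\infty)$, the family $\{p_\delta\}$ is dominated by an integrable function: take a constant times $\prod_i s_i^{a-1}$. Together with boundedness of $\mathcal{J}_k$, dominated convergence gives continuity of $\bar{\mathcal{J}}_k$ on $(0,\infty)$, and in particular on $(\delta_{\min},\delta_{\max})$. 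More simply, $\mathrm{Dir}(\delta)$ is weakly continuous in $\delta$ and $\mathcal{J}_k$ is bounded continuous, which gives the same conclusion.

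\textbf{Limits.} As $\delta\to\infty$, the variance of $s_i$ is $(n_s-1)/(n_s^2(n_s\delta+1))$, which tends to $0$, so $\mathbf{s}\to\frac1{n_s}\mathbf{1}$ in probability. The bounded convergence theorem then gives $\bar{\mathcal{J}}_k\to\mathcal{J}_k(\frac1{n_s}\mathbf 1)$. This is where the main obstacle lies: the limit need not be zero for fixed $\mathbf{R}$, because the gap at the barycenter is positive whenever the $r_k^i$ differ. To recover the stated vanishing, I will use the structure of Optimization~\ref{main_opt}. Under near-uniform $\mathbf{s}$, the constraint $\mathbf{1}^T\mathbf{R}=\frac{n_s}{N}\mathbf{1}^T$ fixes $\sum_i s_i r_k^i\approx 1/N$. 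By the KKT characterization in Proposition~\ref{prop:kkt}, under strict concavity the uniqueness statement lets me argue that the optimizer has no incentive to spread the $r_k^i$, so the spread $\max_i r_k^i-\min_i r_k^i$ tends to $0$. Strict concavity then gives $\mathcal{J}_k\to0$. I expect this step to need an explicit assumption, namely that the solver returns the minimal-dispersion optimal $\mathbf{R}$ in the degenerate regime. I would state that assumption rather than hide it.

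As $\delta\to0$, the Dirichlet mass concentrates near the vertices. For any $\eta>0$, $\mathbb{P}(\max_i s_i<1-\eta)\to0$; this can be checked via the stick-breaking or gamma representation, in which one normalized gamma variable dominates with probability tending to $1$. By continuity of $\mathcal{J}_k$ and its vanishing at the vertices, for any $\varepsilon$ there is an $\eta$ with $\mathcal{J}_k<\varepsilon$ on $\{\max_i s_i\ge 1-\eta\}$. Hence
\[
\bar{\mathcal{J}}_k(\delta)\le \varepsilon+M_k\,\mathbb{P}(\max_i s_i<1-\eta)\to\varepsilon ,
\]
so $\bar{\mathcal{J}}_k(\delta)\to 0$.

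\textbf{Thresholds and dependence of $\mathbf{R}$ on $\mathbf{s}$.} The statements about $\delta\le\delta_{\min}$ and $\delta\ge\delta_{\max}$ follow directly from the two limits: for a given tolerance $\varepsilon$, take $\delta_{\min}$ and $\delta_{\max}$ as the corresponding thresholds. If $\mathbf{R}$ is allowed to depend on $\mathbf{s}$, the same arguments go through provided $\mathbf{R}(\mathbf{s})$ is a measurable selection. Continuity in $\delta$ additionally requires $\mathbf{s}\mapsto\mathcal{J}_k(\mathbf{s},\mathbf{R}(\mathbf{s}))$ to be continuous. This follows from Berge's maximum theorem together with the uniqueness in Proposition~\ref{prop:kkt} in the strictly concave case, and I would invoke it there.
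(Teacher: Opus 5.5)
Your continuity argument is the paper's: dominated convergence with the dominating function $C\prod_i s_i^{a-1}$ on $[a,b]$. Your $\delta\to0$ argument is a rigorous version of the paper's heuristic ``one $s_i\approx1$''. It does transfer to an $\mathbf{s}$-dependent $\mathbf{R}(\mathbf{s})$, but you should say why. If $\max_i s_i\ge1-\eta$, then $\mathcal{J}_k(\mathbf{s})\le\omega_k(\eta)+2\eta\max_{[0,1]}|u_k|$ for \emph{every} $\mathbf{R}$ with entries in $[0,1]$, where $\omega_k$ is the modulus of continuity of $u_k$.

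The genuine gap is the $\delta\to\infty$ limit. You correctly observe that for fixed $\mathbf{R}$ the limit is $\mathcal{J}_k(\frac1{n_s}\mathbf{1})$, which is positive whenever column $k$ is not constant. The step meant to repair this is not an argument, for three reasons.
First, the uniqueness part of Proposition~\ref{prop:kkt} is asserted only when every $t_k>0$, and even then it tells you nothing until you know \emph{which} matrix is the optimizer.
Second, ``no incentive to spread'' does not stop an optimizer from spreading column $k$ when client $k$'s constraint is slack, because spreading then costs nothing.
Third, postulating that the solver returns a minimal-dispersion optimizer simply assumes the conclusion.
The paper's own device for this limit would not help either: setting $s_i=1/n_s$ in $u_k(r_k^i)=\frac{s_i}{\sum_j s_j^2}(\gamma_k-t_k)$ fails because that formula is one particular solution of the single equation $\sum_i s_iu_k(r_k^i)=\gamma_k-t_k$, not a consequence of it.

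The missing idea is an exact computation at the barycenter $\bar{\mathbf{s}}=\frac1{n_s}\mathbf{1}$.
\begin{itemize}
\item There the column-sum constraint gives $\frac1{n_s}\sum_i r_k^i=\frac1N$ for \emph{every} feasible $\mathbf{R}$. Strict concavity then yields $\frac1{n_s}\sum_i u_k(r_k^i)\le u_k(1/N)$, with equality iff $r_k^i=1/N$ for all $i$.
\item The uniform matrix $\frac1N\mathbf{1}_{n_s}\mathbf{1}_N^T$ attains this bound for all clients at once. Hence every optimizer has $t_k=\max(0,\gamma_k-u_k(1/N))$ for each $k$.
\item If $\gamma_k\ge u_k(1/N)$, this forces column $k$ of every optimizer to equal $\frac1N\mathbf{1}$. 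This condition is essentially the ``binding'' hypothesis carried by the appendix version of the statement.
\item Now use Berge's theorem for upper hemicontinuity, not for continuity. The feasible polytope does not depend on $\mathbf{s}$, and the reduced objective $\frac12\sum_j\max(0,\gamma_j-\sum_i s_iu_j(r_j^i))^2$ is jointly continuous. So for $\mathbf{s}$ near $\bar{\mathbf{s}}$, every optimizer has $|r_k^i-1/N|\le\varepsilon$.
\item Therefore $\mathcal{J}_k(\mathbf{s},\mathbf{R}(\mathbf{s}))\le2\omega_k(\varepsilon)$ for any selection. Chebyshev with your variance formula then gives $\bar{\mathcal{J}}_k(\delta)\to0$.
\end{itemize}

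Without such a hypothesis the claim fails. Suppose $\gamma_j<u_j(1/N)$ for every $j$. Then the optimal value is $0$ for all $\mathbf{s}$ near $\bar{\mathbf{s}}$. A selection that returns there a fixed feasible $\mathbf{R}$, close to uniform but with non-constant $k$-th column, keeps $\bar{\mathcal{J}}_k(\delta)$ bounded away from $0$. So your instinct that an extra assumption is needed is right, but it should be the binding condition, not a tie-breaking rule.

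Finally, drop the claim that continuity in $\delta$ needs $\mathbf{s}\mapsto\mathcal{J}_k(\mathbf{s},\mathbf{R}(\mathbf{s}))$ to be continuous. Your dominated-convergence argument needs only measurability and boundedness. The paper gets measurability from a measurable selection via the Measurable Maximum Theorem. By contrast, Berge plus uniqueness would break whenever some $t_k=0$, since the argmin is then typically not a singleton.
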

\begin{proof}[Proof Sketch] 
The proof follows by analyzing the cases $\delta \ll 1 $ and $\delta \gg 1$. We then argue the continuity of Jensen's gap as a function of the concentration parameter $\delta$ by appealing to the Dominated Convergence Theorem. Since the gap is positive and due to continuity it attains the maximum when $\delta \in (\delta_{min}, \delta_{max})$. The complete proof is provided in the proposition~\ref{app:jgap_dct}. 
\end{proof}
\vspace{-0.1in}
The theorem implies that choosing $\delta$ which is neither too small or too big is better. Empirical measurements of $\bar{\mathcal{J}}_k(\delta)$ on CIFAR-10 confirm the predicted non-monotonic profile, with the gap vanishing at $\delta = 0.01$ and $\delta = 100$ and peaking at $\delta \in [0.5, 1.0]$ (Section~\ref{emp:jen_gap}, Table~\ref{tab:jensens_gap_distribution}).

We now provide the sufficient conditions under which the Jensen's gap ${\mathcal{J}_k}(\mathbf{s})$ exceeds a particular threshold. In particular, we provide the lower and upper bounds on the probability that  ${\mathcal{J}_k}(\mathbf{s}) \geq t$.

\begin{proposition}
Suppose $u_k(\cdot)$ is concave, $J_k(s)\ge 0$, and $t \leq \underset{\mathbf{s}\sim {Dir}(\delta)} {\mathbb{E}} [\mathcal{J}_k(\mathbf{s})]$ and $n_s \le N$ then $Pr[{\mathcal{J}_k}(\mathbf{s}) > t] \geq { \underset{\mathbf{s}\sim {Dir}(\delta)} {\mathbb{E}^2} [\mathcal{J}_k(\mathbf{s})] -t)}/{M_J^2} $, for some $M_J \le1$.
\label{prop:prob_jgap}
\end{proposition}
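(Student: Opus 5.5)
The plan is a Paley--Zygmund (second-moment) argument applied to the nonnegative random variable $X \coloneqq \mathcal{J}_k(\mathbf{s})$ with $\mathbf{s}\sim \mathrm{Dir}(\delta)$. The bound has the form (mean minus threshold)$^2$ over a second-moment proxy, which is exactly what Paley--Zygmund gives. The role of $M_J$ will be an almost-sure bound $0 \le X \le M_J$. The term $\mathbb{E}^2[\mathcal{J}_k]-t$ in the statement should presumably be read as $(\mathbb{E}[\mathcal{J}_k]-t)^2$. I will prove that version, and note that it is at least $\mathbb{E}^2-t$-type bounds in the regime where they are meaningful.

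\textbf{Step 1 (boundedness).} First I show $0\le \mathcal{J}_k(\mathbf{s})\le M_J\le 1$ for every $\mathbf{s}$ in the simplex.
\begin{itemize}
\item Nonnegativity is assumed in the statement, and also follows from concavity of $u_k$ via Jensen.
\item For the upper bound, the utilities are accuracies, so $u_k(\cdot)\in[0,1]$. Hence $\mathcal{J}_k(\mathbf{s}) = u_k(\sum_i s_i r_k^i) - \sum_i s_i u_k(r_k^i) \le 1 - 0$.
\item So I may take $M_J \coloneqq \sup_{\mathbf{s}} \mathcal{J}_k(\mathbf{s}) \le 1$. This supremum is attained because the simplex is compact and $\mathcal{J}_k$ is continuous; continuity holds because a concave function on the open interval is continuous, and the arguments $\sum_i s_i r_k^i$ lie in $[0,1]$.
\item The hypothesis $n_s\le N$ is used only to ensure the constraint set of Optimization~\ref{main_opt} is consistent. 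With $\mathbf{1}^T\mathbf{R} = \tfrac{n_s}{N}\mathbf{1}^T$, every $r_k^i\in[0,1]$, so the $r_k^i$ are well-defined fixed weights from Proposition~\ref{prop:kkt}, and the argument of $u_k$ stays in its domain.
\end{itemize}

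\textbf{Step 2 (second-moment inequality).} Let $\mu=\mathbb{E}[X]$, and fix $t\in[0,\mu]$. Split the mean on the event $\{X>t\}$:
\[
\mu = \mathbb{E}[X\mathbf{1}_{X\le t}] + \mathbb{E}[X\mathbf{1}_{X>t}] \le t + \mathbb{E}[X\mathbf{1}_{X>t}].
\]
Then apply Cauchy--Schwarz to the last term:
\[
\mathbb{E}[X\mathbf{1}_{X>t}] \le \sqrt{\mathbb{E}[X^2]\,\Pr[X>t]}.
\]
Since $t\le\mu$, the quantity $\mu-t$ is nonnegative, and squaring gives
\[
\Pr[X>t]\ge \frac{(\mu-t)^2}{\mathbb{E}[X^2]}.
\]

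\textbf{Step 3 (remove the second moment).} Bound $\mathbb{E}[X^2]\le M_J^2$ using Step 1, which yields
\[
\Pr[\mathcal{J}_k(\mathbf{s})>t]\ge (\mu-t)^2/M_J^2.
\]
A sharper alternative is $\mathbb{E}[X^2]\le M_J\,\mu$, giving $(\mu-t)^2/(M_J\mu)$. I will mention it but state the weaker form to match the proposition.

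The main obstacle is not the inequality, which is standard. It is justifying Step 1 cleanly. The $r_k^i$ come from an optimization that itself depends on $\mathbf{s}$, so for the probability to be over $\mathbf{s}\sim\mathrm{Dir}(\delta)$ I must either treat $\mathbf{R}$ as fixed (conditioning on it), or argue that the bound $M_J$ holds uniformly over all feasible $\mathbf{R}$. I would take the uniform route: the bound $u_k\in[0,1]$ holds for any $\mathbf{R}$, so $M_J\le 1$ works regardless. Nontriviality, i.e.\ $\mu>0$ so that the bound is informative for $t<\mu$, follows from strict concavity together with Proposition~\ref{prop:jgap} for moderate $\delta$.
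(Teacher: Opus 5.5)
Your proposal is correct and follows the paper's proof: the paper applies Paley--Zygmund to the nonnegative $\mathcal{J}_k(\mathbf{s})$ with fraction $t/\mathbb{E}[\mathcal{J}_k(\mathbf{s})]$, obtains $(\mathbb{E}[\mathcal{J}_k(\mathbf{s})]-t)^2/\mathbb{E}[\mathcal{J}_k(\mathbf{s})^2]$, and then bounds the second moment by $M_J^2$ using a uniform bound on the gap (your Cauchy--Schwarz step is the standard proof of that inequality, and your squared reading of the numerator matches the paper's display). The one substantive difference is the constant: the paper takes $M_J=2M_u$ from the proof of Proposition~\ref{app:jgap_dct}, which does not by itself give $M_J\le 1$, whereas your bound $0\le\mathcal{J}_k\le\sup u_k-\inf u_k\le 1$ does, provided the fitted surrogate $u_k$ (not merely the empirical accuracy $\hat u_k$) takes values in $[0,1]$ --- some such normalization is unavoidable, since rescaling $u_k$ rescales $\mathcal{J}_k$ --- and, as a side note, $n_s\le N$ is not needed for feasibility, since $\mathbf{R}=\frac{1}{N}\mathbf{1}_{n_s}\mathbf{1}_N^T$ is feasible for every $n_s$ and $r_k^i\le 1$ already follows from the row-sum constraint.
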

\vspace{-0.15in}
\begin{proof}[Proof Sketch] The key is to note that ${\mathcal{J}_k}(\mathbf{s})$ is a non-negative random variable, the proof follows by using the Paley-Zygmund inequality~\citep{PETROV20072703}. The proof details are in the proposition~\ref{app:pz} of the Appendix. The key is to note that this probability is non-zero.
\end{proof}
\vspace{-0.15in}
\textit{Discussion:} The stochastic formulation does not enlarge the simplex of aggregation weights. Rather, under concave utilities, it enables Jensen's gap surplus that can satisfy IR constraints when a single deterministic aggregation vector cannot. Thus, randomization can induce better participation. 

\textbf{Assumptions for Convergence:}. The standard non-convex FL assumptions used in partial-participation analyses~\citep{jhunjhunwala2022fedvarp,nguyen2022federated}: each $f_k$ is $L$-smooth, stochastic gradients are unbiased with variance at most $\sigma^2$, client heterogeneity satisfies $\|\nabla f_k(\boldsymbol{\theta})-\nabla f(\boldsymbol{\theta})\|^2\le \sigma_g^2$, local gradients are bounded as $\|\nabla f_k(\boldsymbol{\theta})\|^2\le G$, stale delays satisfy $\nu_k^t\le \nu_{\max}$. We assume that the participation set $S_t$ is uniformly lower bounded, i.e., $1 \le S \le |S_t| \le N$, where \(S\) is the minimum number of participating clients per round. In our fallback implementation, the weakest guarantee corresponds to \(S=1\). 

\begin{proposition}[Convergence decomposition]
\label{prop:convergence_decomposition}
Under standard smoothness, bounded variance, bounded gradient, and bounded stale-delay assumptions, FedUCA satisfies
\[
\frac{1}{T}\sum_{t=0}^{T-1}
\mathbb{E}\|\nabla f(\boldsymbol{\theta}^{(t,0)})\|^2
\le
\mathcal{E}_{\mathrm{init}}
+
\mathcal{E}_{\mathrm{agg}}
+
\mathcal{E}_{\mathrm{var}}
+
\mathcal{E}_{\mathrm{drift}}
+
\mathcal{E}_{\mathrm{sys}} .
\]
Here, $f(\boldsymbol{\theta}) = \sum_{k=1}^{N}\beta_kf_k(\boldsymbol{\theta})$, $\sum_{k=1}^{N} \beta_k = 1$ , $T$ denotes rounds, $\mathcal{E}_{\mathrm{init}}=\mathcal{O}(1/\sqrt(T))$ is the optimization term, 
$\mathcal{E}_{\mathrm{agg}}$ is proportional to
$\frac{1}{T}\sum_{t=0}^{T-1}\mathbb{E}\|\boldsymbol{\beta}-\boldsymbol{\alpha}^t\|^2$,
and the remaining terms capture stochastic variance, local-update drift, stale-update drift, and data heterogeneity. The $\mathcal{E}_{\mathrm{sys}}$ contains a non-vanishing bias term $\mathcal{O}((N-S) \sigma^2/\tau $. The full statement and proof are in the Appendix~\ref {conv_analysis}.
\label{informal_prop}
\end{proposition}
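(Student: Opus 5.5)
The plan is to follow the standard one-round descent template for partial-participation FL with stale updates, in the style of FedVarp~\citep{jhunjhunwala2022fedvarp}, and then telescope over the $T$ rounds. We treat the server step $\boldsymbol{\theta}^{t+1}=\boldsymbol{\theta}^{t,0}-\eta\tau\mathbf{g}^t$ as an inexact gradient step on $f$, with $\mathbf{g}^t=\sum_{k\in S_t}\alpha_k^t\mathbf{d}_k^t+\sum_{k\in S_t^c}\alpha_k^t\mathbf{y}_k^t$. $L$-smoothness of $f$ gives
\[
\mathbb{E} f(\boldsymbol{\theta}^{t+1})\le \mathbb{E} f(\boldsymbol{\theta}^{t,0})-\eta\tau\,\mathbb{E}\langle\nabla f(\boldsymbol{\theta}^{t,0}),\mathbf{g}^t\rangle+\tfrac{L\eta^2\tau^2}{2}\mathbb{E}\|\mathbf{g}^t\|^2 .
\]
The inner product is handled with the polarization identity $\langle a,b\rangle=\tfrac12(\|a\|^2+\|b\|^2-\|a-b\|^2)$, taking $a=\nabla f(\boldsymbol{\theta}^{t,0})$ and $b=\mathbb{E}\mathbf{g}^t$. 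This produces the $-\tfrac{\eta\tau}{2}\|\nabla f\|^2$ progress term. What remains is to bound $\mathbb{E}\|\nabla f(\boldsymbol{\theta}^{t,0})-\mathbf{g}^t\|^2$, together with the second-moment term controlled by bounded gradients $G$ and the variance $\sigma^2$.

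We split the deviation $\nabla f-\mathbf{g}^t$ into five pieces and apply Young's/Jensen's inequality on the sum:
\begin{itemize}
\item an \emph{aggregation mismatch} $\sum_k(\beta_k-\alpha_k^t)\nabla f_k(\boldsymbol{\theta}^{t,0})$;
\item a \emph{local drift} term $\sum_{k\in S_t}\alpha_k^t(\nabla f_k(\boldsymbol{\theta}^{t,0})-\tfrac{1}{\tau}\sum_l\nabla f_k(\boldsymbol{\theta}_k^{t,l}))$;
\item a \emph{stochastic noise} term from the $\xi_k^{t,l}$;
\item a \emph{staleness} term $\sum_{k\in S_t^c}\alpha_k^t(\nabla f_k(\boldsymbol{\theta}^{t,0})-\mathbf{y}_k^t)$;
\item the non-participant noise frozen inside $\mathbf{y}_k^t$.
\end{itemize}
For the first piece, Cauchy--Schwarz together with $\|\nabla f_k\|^2\le G$ gives $N G\|\boldsymbol{\beta}-\boldsymbol{\alpha}^t\|^2$, which becomes $\mathcal{E}_{\mathrm{agg}}$ after averaging. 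The drift piece is bounded by the usual recursion $\mathbb{E}\|\boldsymbol{\theta}_k^{t,l}-\boldsymbol{\theta}^{t,0}\|^2\lesssim \eta_l^2\tau^2(\sigma^2/\tau+\sigma_g^2+\|\nabla f\|^2)$. This holds because clients that reject the global model start from $\boldsymbol{\theta}_k^{t-1}$, but that initial gap is itself a stale displacement bounded by $\nu_{\max}\eta\tau\sqrt{G}$; this yields $\mathcal{E}_{\mathrm{drift}}$ after the $\|\nabla f\|^2$ part is absorbed into the left side for small $\eta_l$. For staleness, $\mathbf{y}_k^t$ was computed at $\boldsymbol{\theta}^{t-\nu_k^t,0}$, so $L$-smoothness gives $\|\nabla f_k(\boldsymbol{\theta}^{t,0})-\nabla f_k(\boldsymbol{\theta}^{t-\nu,0})\|^2\le L^2\nu_{\max}\sum_{j}\|\boldsymbol{\theta}^{t-j+1}-\boldsymbol{\theta}^{t-j}\|^2\le L^2\nu_{\max}^2\eta^2\tau^2 G$.

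The stale stochastic noise in $\mathbf{y}_k^t$ is where the non-vanishing term comes from. Unlike fresh noise, it is not averaged down and is not independent across rounds in a useful way. Counting at most $N-S$ stale clients with $\alpha_k^t\le 1$, each carrying variance $\sigma^2/\tau$ from a $\tau$-step average, gives the $\mathcal{O}((N-S)\sigma^2/\tau)$ bias in $\mathcal{E}_{\mathrm{sys}}$. Heterogeneity $\sigma_g^2$ enters through the same term and through drift.

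The final step is to sum over $t=0,\dots,T-1$, divide by $\eta\tau T/2$, and choose $\eta=\Theta(1/\sqrt{T})$ with $\eta_l=\mathcal{O}(1/(L\tau))$, which gives $\mathcal{E}_{\mathrm{init}}=\mathcal{O}((f(\boldsymbol{\theta}^0)-f^*)/\sqrt{T})$. The main obstacle I expect is the endogeneity: $S_t$, $\boldsymbol{\alpha}^t$ and the initialization choice all depend on the current iterates through the utilities. That dependence breaks the unbiasedness that is normally used when conditioning on $S_t$. I would first try to avoid any expectation over the participation pattern by conditioning on the filtration up to round $t$, under which $\mathbf{s}^t,\mathbf{R}^t,S_t$ are measurable. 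Every bound above is then taken worst-case over $S_t$ using only $|S_t|\ge S$ and $\alpha_k^t\ge 0$, $\sum_k\alpha_k^t=1$. This is exactly why the aggregation and staleness errors remain as non-vanishing but explicit terms rather than cancelling.
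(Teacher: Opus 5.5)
Your proposal is correct and follows essentially the paper's route. The shared steps are: the smoothness descent step; polarization on the inner product; the Cauchy--Schwarz bound $NG\|\boldsymbol{\beta}-\boldsymbol{\alpha}^t\|^2$ for the aggregation mismatch; active drift plus stale drift (server delay and local drift at the stale round); the frozen stale noise as the source of the non-vanishing $(N-S)\sigma^2/\tau$ floor; conditioning on $H_t$ using only $|S_t|\ge S$ and the simplex constraint on the weights; and telescoping. The differences are cosmetic: the paper writes $\mathbf{v}^t=\bar{\mathbf{v}}^t+\mathbf{e}^t_{\mathrm{fresh}}+\mathbf{e}^t_{\mathrm{stale}}$, kills the fresh part by the tower rule, bounds $\langle\nabla f,\mathbf{e}^t_{\mathrm{stale}}\rangle$ by Young, and uses a cruder bounded-gradient drift bound instead of your recursion with absorption.

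Three small cleanups:
\begin{enumerate}
\item Keep $\|\nabla f-\mathbb{E}[\mathbf{g}^t\mid H_t]\|^2$ rather than passing to $\mathbb{E}\|\nabla f-\mathbf{g}^t\|^2$. Otherwise the fresh noise leaves an extra $\eta$-independent term, whereas in the paper it only enters $\mathcal{E}_{\mathrm{var}}$ with a factor $\eta$.
\item Use $G+\sigma^2$ rather than $G$ for the per-round server displacement, since $\mathbf{g}^t$ is built from stochastic gradients.
\item Drop the remark about rejecting clients starting from $\boldsymbol{\theta}_k^{t-1}$. $\mathbf{y}_k$ is only refreshed in rounds where $k$ accepted the global model, so those local trajectories never enter $\mathbf{g}^t$.
\end{enumerate}
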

\vspace{-0.1in}
\begin{proof}[Proof Sketch]
\vspace{-0.15in}
 We write the exact FedUCA update direction as
$\bar{\mathbf{v}}^t=
\sum_{k=1}^N
\Big(
I_k^t\alpha_k^t\mathbf{h}_k^t+
(1-I_k^t)\alpha_k^t\mathbf{h}_k^{t-\nu_k^t}
\Big)$
where $I_k^t$ denotes participation, $\alpha_k^t$ is the FedUCA aggregation weight, and $\nu_k^t\le\nu_{\max}$ is the stale delay. Applying the smoothness descent lemma to this trajectory separates the progress term from stochastic variance, local drift, stale-update drift, and aggregation discrepancy.

Conditioning on the history ${H}_t$ makes the reported utilities, thresholds, participation decisions, and aggregation weights fixed for round $t$. We then add and subtract the full- gradient
$\nabla f(\boldsymbol{\theta}^{(t,0)})=
\sum_{k=1}^N \beta_k\nabla f_k(\boldsymbol{\theta}^{(t,0)}),$
which yields the aggregation-discrepancy penalty $\mathcal{E}_{agg}$ upper bounded by $\mathbb{E}\|\boldsymbol{\beta}-\boldsymbol{\alpha}^t\|^2$. If $\boldsymbol{\beta} = (1/N)\mathbf{1}_N$. From the Lemma.~\ref{lem:agg_discrepancy}, $\mathbb{E}\|\boldsymbol{\beta}-\boldsymbol{\alpha}^t\|^2 \leq (n_s - 1)/[N(n_s\delta + 1)]$ and eventually, we have  $\mathcal{E}_{agg} \leq C(n_s - 1)/[S(n_s\delta + 1)]$, where $C$ is a constant. 
The active-client drift follows from standard local-SGD bounds, while stale-client drift is controlled by the bounded-delay condition $\nu_k^t\le\nu_{\max}$ and the minimum active cohort size $S$. Telescoping the resulting descent inequality over $T$ rounds gives $
\frac{1}{T}\sum_{t=0}^{T-1}
\mathbb{E}\|\nabla f(\boldsymbol{\theta}^{(t,0)})\|^2
\le
\mathcal{E}_{\mathrm{init}}
+
\mathcal{E}_{\mathrm{agg}}
+
\mathcal{E}_{\mathrm{var}}
+
\mathcal{E}_{\mathrm{drift}}
+
\mathcal{E}_{\mathrm{sys}} .
$
All terms except the structural aggregation bias $\mathcal{E}_{\text{agg}}$ and the system error floor $\mathcal{E}_{\text{sys}}$ decay at a rate of $\mathcal{O}(1/T)$ or $\mathcal{O}(1/\sqrt{T})$. Consequently, FedUCA converges to an error neighborhood defined by these non-vanishing terms. 
\end{proof}

FedUCA should not be interpreted as an unbiased variant of FedAvg. The server deliberately modifies aggregation weights to satisfy individual rationality constraints, rather than solely approximating the full-participation descent direction. Consequently, the aggregation-discrepancy term $\mathcal{E}_{agg}$ is not merely a proof artifact; it captures the mechanism-induced bias incurred when the server trades aggregation fidelity for sustained rational participation. In the limiting regime where participation is full and the FedUCA weights match the target objective weights, this discrepancy vanishes and the standard FL behavior is recovered.

The bound $\mathbb{E}\|(1/N)\mathbf{1}_N - \boldsymbol{\alpha}^t\|^2 \le (n_s - 1)/[N(n_s\delta + 1)]$ (Lemma~\ref{lem:agg_discrepancy}) reveals a quantitative tradeoff governing FedUCA's convergence. The Dirichlet concentration $\delta$ simultaneously controls two opposing forces: increasing $\delta$ tightens the aggregation discrepancy (the bound decays as $1/(n_s\delta + 1)$) but reduces the Jensen surplus that sustains participation (Proposition~\ref{prop:jgap}). The optimal $\delta$ balances these forces, which is consistent with the empirical participation peak at $\delta \in [0.5, 1.0]$ (Figure~\ref{fig:var_delta}) and the correlating with Jensen's-gap peak. Similarly, the strategy count $n_s$ enters the numerator linearly but also appears in the denominator through $n_s\delta$; for large $n_s$ the bound saturates at $\approx 1/(N\delta)$, confirming that additional strategies yield diminishing returns.

\section{Experiments}
\label{exp_sec}
\textbf{Datasets and Models:} We benchmark on three different datasets: CIFAR-10, CIFAR-100~\citep{cifar}, F-MNIST~\citep{xiao2017fashionmnistnovelimagedataset} and 20 news groups~\citep{20newsgroups} For the correct evaluation of a method's persuasive power, the design of the data split amongst clients is of utmost importance. We provide an extensive discussion of the data splits in~\ref{sec:data_prep} of the Appendix and request the readers to refer to the same. The data partition Dirichlet ($0.5, 1.0$) implies that training data is distributed according to the Dirichlet distribution based on the concentration parameter $0.5$, and the validation data uses $1.0$ concentration parameter similar to~\citep{acar2021federated}. We use the CVXPY solver~\citep{diamond2016cvxpy,agrawal2018rewriting}  to solve the optimization~\ref{main_opt} for FedUCA. For CIFAR-10, we adopt a CNN architecture akin to that of \citet{mcmahan2017communication} and~\citep{acar2021federated}, comprising two convolutional layers with 64 5 × 5 filters, followed by two fully connected layers with 394 and 192 units, respectively, and a softmax output layer. We use a fully-connected neural network architecture for F-MNIST with 2 hidden layers. The number of neurons in the layers are 200 and 100. For evaluation on CIFAR-100, we use a ResNet-18 model~\citep{he2016deep} pre-trained on ImageNet-1K~\citep{5206848} dataset where we use group normalization 
instead of batch normalization as in~\citep{acar2021federated}. While FedUCA theory is generic, for experiments we model the utilities as concave quadratic $u_k(x) = -a_kx^2 + b_kx + c_k$ with $a_k >0$. The hyperparameter details are in the Appendix~\ref{hyper_details}.\\

\begin{table*}[t]
  \caption{Server Model performance on CIFAR-10, F-MNIST, and CIFAR-100 under different data splits. The data partition Dirichlet ($0.5, 1.0$) implies that training data is distributed according to the Dirichlet distribution based on the concentration parameter $0.5$, and the validation data uses $1.0$.}
  \centering
  \setlength{\tabcolsep}{3.0pt}        
  \renewcommand{\arraystretch}{1.0} 
  \captionsetup{font=scriptsize}         
  \small                             
  \scriptsize                                  

  \begin{tabular}{@{}l|cc|cc|cc@{}}
    \toprule
    & \multicolumn{2}{c|}{\textbf{CIFAR-10}}
    & \multicolumn{2}{c|}{\textbf{F-MNIST}}
    & \multicolumn{2}{c}{\textbf{CIFAR-100}} \\
    \cmidrule(l){2-3}\cmidrule(l){4-5}\cmidrule(l){6-7}
    & \textbf{Dirichlet 0.5,1.0} & \textbf{Dirichlet 1.0,1.0}
    & \textbf{Dirichlet 0.5,1.0} & \textbf{Dirichlet 1.0,1.0}
    & \textbf{Dirichlet 0.5,1.0} & \textbf{Dirichlet 1.0,1.0} \\
    \midrule
    \textbf{FedAvg}
      & {\scriptsize $16.63_{\pm1.16}$} 
      & {\scriptsize $18.31_{\pm1.66}$}
      & {\scriptsize $53.26_{\pm2.78}$} 
      & {\scriptsize $54.64_{\pm2.05}$}
      & {\scriptsize $04.05_{\pm0.06}$} 
      & {\scriptsize $03.63_{\pm0.50}$} \\
    \midrule
    \textbf{IncFL}
      & {\scriptsize $32.35_{\pm0.96}$} & {\scriptsize $34.60_{\pm0.16}$}
      & {\scriptsize $37.07_{\pm0.20}$} & {\scriptsize $36.69_{\pm0.04}$}
      & {\scriptsize $04.18_{\pm0.41}$} & {\scriptsize $12.89_{\pm0.94}$} \\
    \midrule
    \textbf{FedProx}
      & {\scriptsize $17.44_{\pm0.86}$} & {\scriptsize $17.63_{\pm2.14}$}
      & {\scriptsize $54.68_{\pm2.01}$} & {\scriptsize $53.79_{\pm0.55}$}
      & {\scriptsize $06.35_{\pm2.91}$} & {\scriptsize $04.06_{\pm0.22}$} \\
    \midrule
    \textbf{AAggFF-S}
      & {\scriptsize $16.13_{\pm0.50}$} & {\scriptsize $17.90_{\pm0.33}$}
      & {\scriptsize $52.84_{\pm3.18}$} & {\scriptsize $68.92_{\pm6.22}$}
      & {\scriptsize $5.60_{\pm0.33}$} & {\scriptsize $04.53_{\pm0.86}$} \\
    \midrule
    \textbf{FedAvg-WS}
      & {\scriptsize $47.70_{\pm0.47}$}
      & {\scriptsize $51.63_{\pm  0.71}$} 
      & {\scriptsize $77.60_{\pm  0.57}$}
      & {\scriptsize $75.75_{\pm  0.23}$} 
      & {\scriptsize $33.50_{\pm0.26}$} 
      & {\scriptsize $48.93_{\pm0.05}$}  \\
    \midrule
    \textbf{FedUCA (Ours)}
      & {\scriptsize\bfseries\boldmath $66.89_{\pm 01.56}$} 
      & {\scriptsize  \bfseries\boldmath $71.86_{\pm 1.17}$}
      & {\scriptsize\bfseries\boldmath $83.18_{\pm 0.40}$} & {\scriptsize\bfseries\boldmath $83.48_{\pm 0.85}$} 
      & {\scriptsize\bfseries\boldmath $42.50_{\pm 0.29}$} & {\scriptsize\bfseries\boldmath $50.15_{\pm 0.32}$} \\
    \bottomrule
  \end{tabular}
  \label{tab:performance1}
\end{table*}
\textbf{Baselines and evaluation protocol:} Since FedUCA operates without monetary incentives, market-based incentive baselines are not directly applicable. We equip four standard FL methods, FedAvg, FedProx, IncFL, and AAggFF-S, with a rationality check (Eq.~\ref{RC}): a client accepts the global model only if it improves upon the local model on the client's held-out set. This preserves each method's original update rule and isolates the effect of aggregation design under rational behavior. To ensure a rigorous and fair comparison, we equip all baseline methods with the exact same stale update retention mechanism utilized by FedUCA. To further test whether baseline degradation is a transient initialization effect, we additionally evaluate FedAvg under a warm-start protocol (FedAvg-WS) with $K = 5$ rounds of forced full participation before the rationality check activates. All methods use identical architectures, data splits, training budgets, and participation dynamics; implementation details are in Appendix~\ref{sup:hyper}.
\begin{wrapfigure}{r}{0.6\textwidth}
  \vspace{-10pt}
  \centering
  \includegraphics[width=\linewidth]{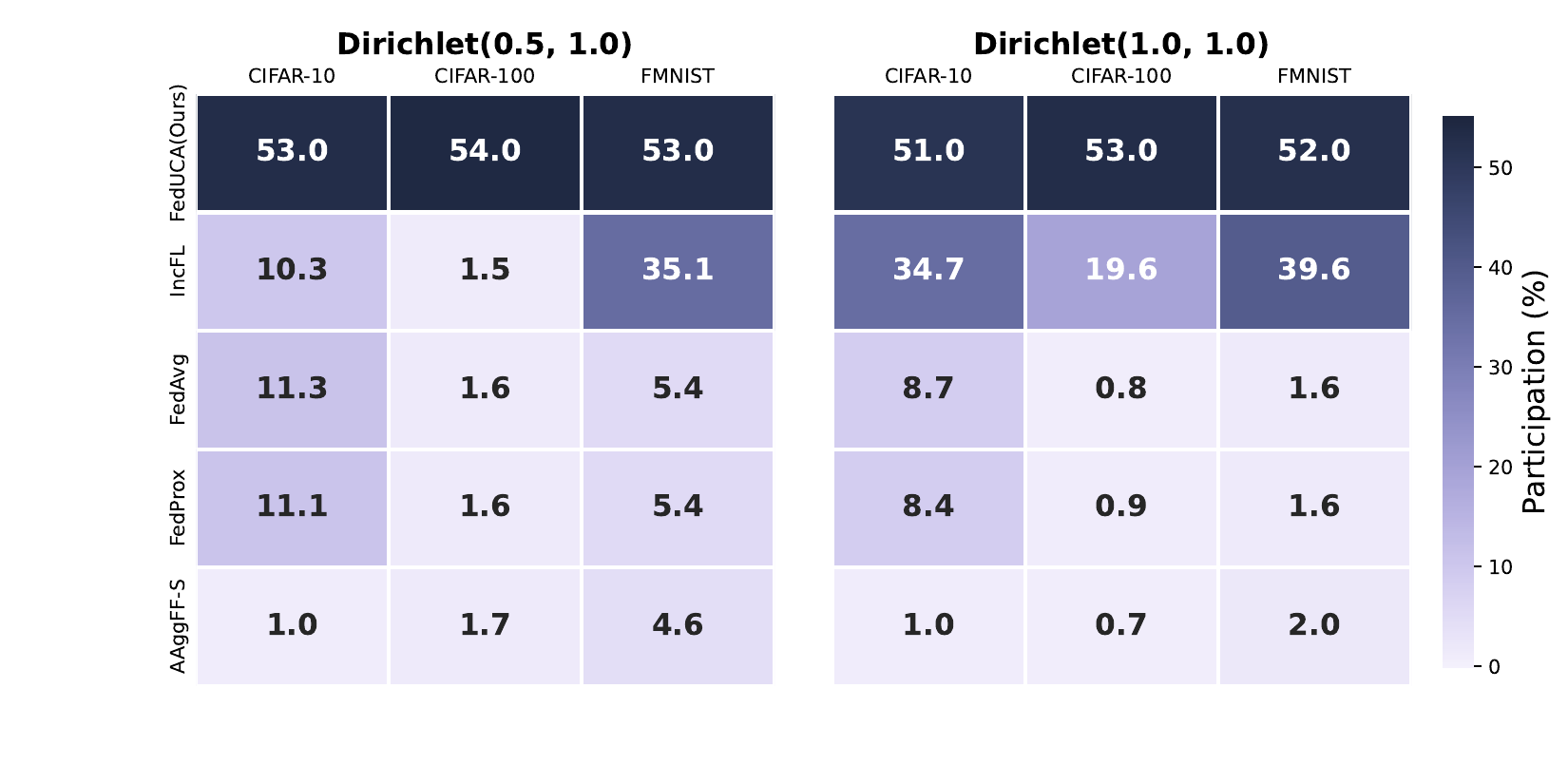}
  \caption{
    Comparison of participation rates across different FL methods on CIFAR-10, CIFAR-100, and FMNIST under varying degrees of data heterogeneity $\text{Dirichlet}(0.5,1.0)$ and $\text{Dirichlet}(1.0,1.0)$.
  }
  \label{fig:final_perf}
  \vspace{-10pt}
\end{wrapfigure}

\vspace{-0.1in}
\textbf{Results:} As shown in Table~\ref{tab:performance1} and Figure~\ref{fig:final_perf}, FedUCA achieves substantially higher accuracy and participation across the datasets and splits. Under rational participation, baselines suffer from a self-reinforcing collapse: the initial global model (an average of heterogeneous local models) often degrades individual client performance, clients reject it, the server receives fewer updates, and the next global model deteriorates further. This cycle drives participation below $12\%$ for FedAvg, FedProx, and AAggFF-S on CIFAR-10 and CIFAR-100 (Figure~\ref{fig:final_perf}). IncFL partially mitigates this but still plateaus at $\sim 35\%$ participation on CIFAR-10, Dir(1.0,1.0). FedUCA breaks this cycle by optimizing aggregation weights to satisfy individual rationality constraints directly, sustaining $51$-$54\%$ participation and achieving higher than the baselines.

\textbf{Warm-start evaluation:} 

To test whether the baseline gap is a transient initialization effect, 
we evaluate FedAvg under a warm-start protocol (FedAvg-WS) with $K = 5$ 
rounds of forced full participation. Under the heterogeneous 
Dir$(0.5, 1.0)$ splits, FedUCA leads FedAvg-WS by $+19.2$ points on 
CIFAR-10, $+5.6$ on F-MNIST, and $+9.0$ on CIFAR-100; under 
Dir$(1.0, 1.0)$, FedUCA leads FedAvg-WS by $+20.2$ points on CIFAR-10, 
$+7.7$ points on F-MNIST, and $+1.2$ points on CIFAR-100. FedUCA's 
advantage is consistent across both heterogeneous and near-IID regimes. 
Warm-start is not a benign baseline modification: in a rational 
cross-silo deployment, forced participation presupposes monetary 
compensation or regulatory authority~\citep{kang2019incentive}, system 
components external to the aggregation method. FedUCA replaces these 
external components endogenously through participation-management, 
achieving superior performance without requiring infrastructure outside 
the aggregation mechanism. We additionally evaluate on $20$ Newsgroups dataset in Appendix~\ref{20_news_group}. In~\ref {sup:bm_details} of the Appendix, we provide a detailed discussion on how we introduce the rationality check by clients when evaluated for the benchmarking methods. In the Sec~\ref{sec:data_prep}, we provide details on the data preparation.
\vspace{-0.1in}
\subsection{Impact on varying the Dirichlet Concentration $\delta$}
\vspace{-0.1in}
\begin{wrapfigure}{r}{0.5\textwidth}
  \vspace{-10pt}
  \centering
  \includegraphics[width=\linewidth]{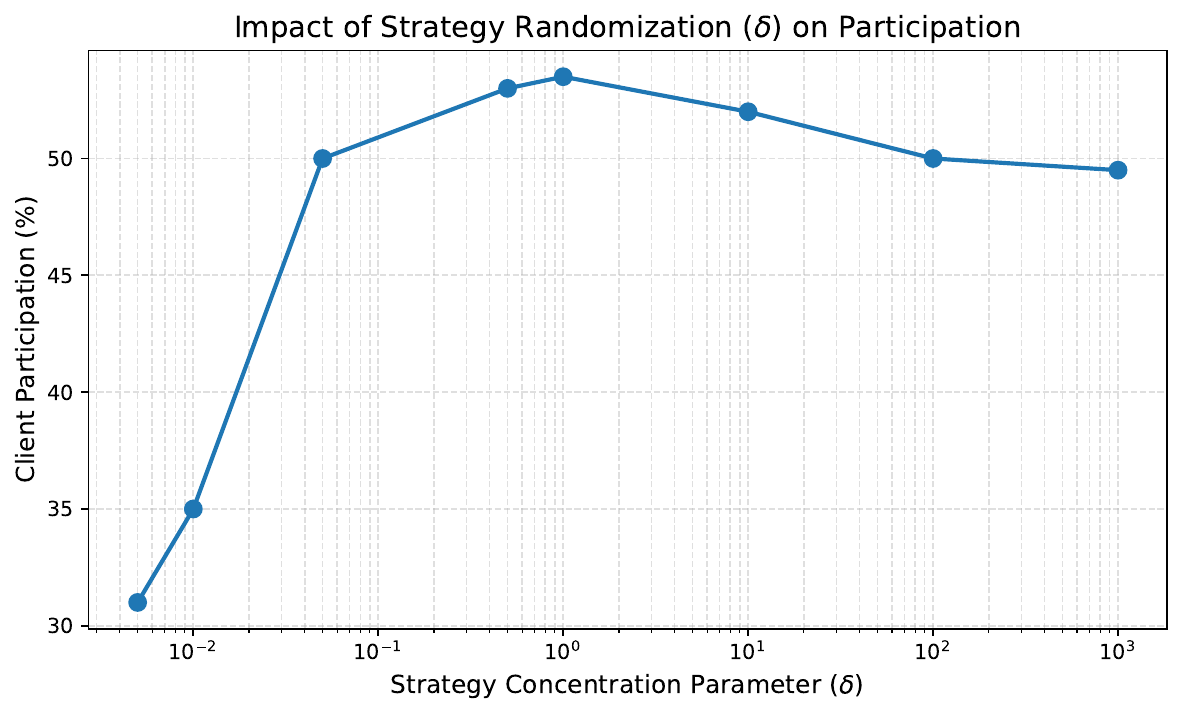}
  \caption{Impact of Dirichlet concentration $\delta$ on participation for CIFAR-10.}
  \label{fig:var_delta}
  \vspace{-10pt}
\end{wrapfigure}
Figure~\ref{fig:var_delta} illustrates the effect of the Dirichlet concentration parameter $\delta$. Very small $\delta$ concentrates strategies near simplex vertices, while very large $\delta$ concentrates them near the uniform vector. In both extremes, the induced Jensen surplus is limited. Intermediate values provide useful strategy diversity without collapsing to either extreme, yielding higher participation. This empirical trend provides a strong validation of our theoretical claim: strategic randomization is not a mere heuristic, but a mathematically grounded method to bridge the Jensen's gap and stabilize participation in rational FL. The participation rate exhibits the non-monotonic dependence on $\delta$ predicted by Proposition~\ref{prop:jgap}, rising from $\sim 30\%$ at $\delta=0.01$ to a peak of $\sim 54\%$ at $\delta=1$, before declining to $\sim 49\%$ at $\delta=1000$. Independent measurements of the Jensen gap on the same setup Appendix~\ref{emp:jen_gap}) show a similar non-monotonic trend, with larger gaps occurring in the same intermediate concentration regime. This alignment supports the proposed mechanism: intermediate Dirichlet concentrations induce useful strategy diversity, which increases the Jensen surplus available to Optimization~\ref{main_opt} and thereby improves participation feasibility.


\vspace{-0.05in}
\subsection{Comparison with single strategy}
\vspace{-0.05in}
\begin{wraptable}{r}{0.5\textwidth} 
    \vspace{-10pt} 
    \centering
    \caption{Server Accuracy between single and multiple strategies}
    \label{tab:k1_k5_data}
    \resizebox{\linewidth}{!}{ 
    \begin{tabular}{l c c}
        \toprule
        \multirow{2}{*}{\textbf{Strategies}} & \textbf{CIFAR-10} & \textbf{CIFAR-100} \\
        & (ACC (\%)/ Participation \%) & (ACC \% / Participation \%) \\
        \midrule
        $n_s=1$ & 63.80 (29) & 36.80 (37) \\
        $n_s=5$ & 66.89 (53) & 42.50 (53) \\
        \bottomrule
    \end{tabular}
    }
    \vspace{-10pt} 
\end{wraptable}

In the table.~\ref{tab:k1_k5_data} we compare the performance of the server model with $n_s=1$ vs $n_s=5$. By increasing the number of strategies ($n_s=5$), the solver optimizes the pessimistic surrogate, generating utility surplus via the Jensen's gap. This empirically confirms our theory that multiple strategies exploit the Jensen's gap and increase the participation. 
\vspace{-0.05in}
\section{Scope and Future work}
\vspace{-0.05in} FedUCA addresses cross-silo federated learning under rational, performance-based client participation, with concave client utilities standard in mechanism design~\citep{myerson1981optimal,bolton2005contract} and verified empirically in this setting (Appendix~\ref{app:emp_verify}). The framework assumes truthful reporting of utility parameters, which is supported in cross-silo deployments by repeated interaction and institutional accountability; extensions incorporating formal incentive-compatibility guarantees are a natural direction. On the theoretical side, Proposition~\ref{prop:convergence_decomposition} together with Lemma~\ref{lem:agg_discrepancy} couples the aggregation-discrepancy term to the Dirichlet concentration $\delta$, identifying a quantitative tradeoff with Jensen surplus. Bounding $S$ in terms of system parameters is a refinement we leave for future work.
\vspace{-0.13in}
\section{Conclusion}
\vspace{-0.13in}
We introduced FedUCA, which reformulates federated aggregation as a constrained feasibility problem enforcing individual rationality through utility-threshold constraints. A stochastic variant samples weights from a Dirichlet distribution, exploiting utility concavity and Jensen's gap to offset utility deficits, with the concentration parameter $\delta$ governing the expected gap. Experiments on CIFAR-10, CIFAR-100, F-MNIST, and 20 Newsgroups demonstrate gains in federation stability and global accuracy across modalities. By reframing aggregation as a feasibility problem under rational participation, FedUCA provides a practical, non-monetary alternative to equilibrium-based and incentive-driven approaches, enabling stable cross-silo collaboration without external rewards or strategic modeling.

\bibliography{main}
\bibliographystyle{plainnat}

\appendix

\section{Technical appendices and supplementary material}

\subsection{Notation}
\addcontentsline{toc}{section}{Notation}

\begin{table}[h]
\centering
\small
\caption{Summary of notation used in the paper.}
\begin{tabular}{ll}
\toprule
\textbf{Symbol} & \textbf{Meaning} \\
\midrule
$N$ & Number of clients \\
$k \in [N]$ & Client index \\
$t$ & Communication round index \\
$\boldsymbol{\theta}^t$ & Global server model at round $t$ \\
$\boldsymbol{\theta}_k^t$ & Local model/update of client $k$ at round $t$ \\
$f_k(\boldsymbol{\theta})$ & Local objective of client $k$ \\
$f(\boldsymbol{\theta})=\sum_{k=1}^N \beta_k f_k(\boldsymbol{\theta})$ & Global objective \\
$\boldsymbol{\beta}$ & Target objective weights over clients \\
$\alpha_k^t$ & Effective aggregation weight assigned to client $k$ at round $t$ \\
$\mathbf{s}=(s_1,\ldots,s_{n_s})$ & Probability vector over aggregation strategies \\
$n_s$ & Number of aggregation strategies \\
$\mathbf{R}\in\mathbb{R}^{n_s\times N}$ & Strategy matrix; row $i$ is one aggregation vector over clients \\
$r_k^i$ & Aggregation weight assigned to client $k$ under strategy $i$ \\
$\hat u_k(r)$ & Empirical utility of client $k$ at aggregation weight $r$ \\
$u_k(r;\zeta_k)$ & Fitted concave surrogate utility of client $k$ \\
$\zeta_k$ & Parameters of client $k$'s fitted utility function \\
$\gamma_k$ & Minimum participation threshold of client $k$ \\
$t_k$ & Slack variable for client $k$'s rationality constraint \\
$\rho$ & Utility-margin parameter in the IR constraint \\
$S_t$ & Set of clients participating at round $t$ \\
$S$ & Minimum number of participating clients per round \\
$I_k^t$ & Indicator that client $k$ participates at round $t$ \\
$\nu_k^t$ & Staleness delay of client $k$ at round $t$ \\
$\nu_{\max}$ & Maximum allowed staleness delay \\
$\mathbf{y}_k^t$ & Stale update tracker for client $k$ \\
$\mathcal{J}_k(\mathbf{s})$ & Jensen surplus for client $k$ under mixture $\mathbf{s}$ \\
$\delta$ & Dirichlet concentration parameter for sampling $\mathbf{s}$ \\
$\mathcal{E}_{init}$ & Transient optimization error term \\
$\mathcal{E}_{agg}$ & Aggregation-discrepancy / mechanism-induced bias term \\
$\mathcal{E}_{var}$ & Stochastic variance term \\
$\mathcal{E}_{drift}$ & Local and stale-update drift term \\
$\mathcal{E}_{sys}$ & System heterogeneity \\
\bottomrule
\end{tabular}
\label{tab:notation}
\end{table}

\subsection{FedUCA Illustration and Algorithm}
\label{app:FedUCA}
\begin{algorithm}[!h]
\SetCommentSty{small}
\label{FedUCA}
\caption{FedUCA}
\label{alg:FedUAI_Server}
\KwIn{$\boldsymbol{\theta}$, $\mathbf{s}$, $\mathbf{R}$ \Comment* {initial server model, initial aggregation strategy, corresponding initial aggregation weights}}
\KwOut{$\boldsymbol{\theta}^{T}$ \Comment* {final aggregated server model in $T$ communication rounds}}

\SetKwInOut{Hyperparameters}{Hyperparameters}
\Hyperparameters{$T$, $\tau$, $n_s$,$\eta_c$,$\eta_s$,$\eta$
                 \Comment*{communication rounds, epochs, aggregation strategies, client learning rate, server learning rate and overall learning rate}}

\textbf{Initialize:}\\     
\Indp                         
$\boldsymbol{\theta}^{0}\leftarrow\boldsymbol{\theta}$; $\mathbf{s}^{0}\leftarrow\mathbf{s}$; $\mathbf{R}^{0}\leftarrow\mathbf{R}$; $\mathbf{\gamma}\leftarrow0$\;
\Indm

\For{$t \gets 1$ \KwTo $T$}{
    Server broadcasts $\boldsymbol{\theta}^{t-1}$ to all $N$ clients with indices $[N]$\;
    
    
    \For{$k \gets 1$ \KwTo $N$}{
        $\mathbf{r}_{k}^{\,t-1} \leftarrow \mathbf{R}^{\,t-1}_{:,k}$\;
        compute $\alpha_k^{t-1} = \sum_{j=1}^{n_s} s_j^{t-1}r_k^{j,t-1}$ \;
        Server sends $\boldsymbol{\theta}^{t-1}$, $\mathbf{s}^{t-1}$, 
        $\alpha_{k}^{t-1}$ to client $k$\;

        $\boldsymbol{\theta}_{k}^{t}, \gamma_{k}^{t}, \zeta_k^{t} = $ 
        \textbf{ClientUpdate}$(\boldsymbol{\theta}^{t-1},\,\alpha_k^{t-1},\,\boldsymbol{\theta}_{k}^{t-1},\,
        \mathcal{D}_k,\,\mathcal{D}_k^{\text{val}},\,k)$\; \Comment*{if  participates} 
             \hspace{2in}{\textbf{or}} \\
          $\gamma_{k}^{t}, \zeta_k^{t} = $ 
       \textbf{ClientUpdate}$(\boldsymbol{\theta}^{t-1},\,\alpha_k^{t-1},\,\boldsymbol{\theta}_{k}^{t-1},\,
        \mathcal{D}_k,\,\mathcal{D}_k^{\text{val}},\,k)$\;   \Comment*{if doesn't participate} 
    }
    $\boldsymbol{\theta}^t, \mathbf{s}^t, \mathbf{R}^t \gets$ 
    \textbf{SU}({$(\boldsymbol{\theta}_{k}^{\,t},\gamma_{k}^1,\beta_k^t)_{k\in S_t},(\gamma_{k}^t,\beta_k^t)_{k \in S_t^c } $})\;
}
\end{algorithm}

\paragraph{Explanation of FedUCA.}
Algorithm~\ref{alg:FedUAI_Server} describes the full \textbf{FedUCA} mechanism, an optimization-based aggregation strategy that explicitly models rational client behavior and aims to satisfy client utility constraints through server-side aggregation design.

The server maintains three evolving objects: the global model $\boldsymbol{\theta}^t$, a probability vector $\mathbf{s}^t$ governing the server’s signaling strategy, and a matrix $\mathbf{R}^t$ whose columns represent client-specific aggregation weight strategies. These quantities are initialized prior to training and updated iteratively over $T$ communication rounds.

At each round $t$, the server broadcasts its intent to all $N$ clients. For each client $k$, the server extracts the corresponding aggregation   $\mathbf{r}_k^{t-1}$ from the current matrix $\mathbf{R}^{t-1}$, computes the aggregatin weight $\alpha_k^{t-1}$ and transmits to the client along with the current global model $\boldsymbol{\theta}^{t-1}$. This transmission constitutes the server’s signaling action.

Upon receiving the signal, each client executes the \textbf{ClientUpdate} procedure. In this step, client $k$ evaluates its  utility under the received aggregation weight using its validation data. Based on this evaluation, the client updates its local model and determines a utility threshold $\gamma_k^{t-1}$ reflecting its willingness to participate. The client then returns its updated local model and strategic information $\zeta_k^t$ to the server.

After collecting responses from all clients, the server invokes the Server Update \textbf{SU} routine. This step aggregates the local models while simultaneously sampling the distribution $\mathbf{s}^t$ and optimizing the aggregation weight matrix $\mathbf{R}^t$. The update is computed by solving a chance-constrained optimization problem that balances model improvement against the probability of satisfying client participation constraints. As a result, the server refines both the global model and its future signaling strategy.

This iterative process continues for $T$ rounds. By jointly updating the global model and the aggregation signaling mechanism, FedUCA enables the server to strategically influence client participation while accounting for rational, utility-driven behavior. The final output $\boldsymbol{\theta}^T$ represents the learned global model under this federated learning framework.

\subsection{Rationality Check for benchmark methods}
\label{sup:bm_details}

Algorithms~\ref{fedavg_client} and~\ref{fedavg} detail the integration of rationality check into FedAvg. Extending the rationality check to other algorithms follows a similar approach and is relatively straightforward, as it closely resembles the implementation used for FedAvg. The employed rationality check operates as follows: if the accuracy of the received global model is higher than that of the client's local model, the client accepts the global model and proceeds with training initialized from it. In the case the rationality check is not satisfied the client proceeds with its own training.

\begin{algorithm}[!h]
\caption{ FedAvg: \textit{Client Update}}
\label{fedavg_client}
\SetKwFunction{ClientUpdate}{ClientUpdate}
\SetKwProg{Fn}{Function}{:}{}
\Fn{\ClientUpdate{$\boldsymbol{\theta}^{t-1},\,\boldsymbol{\theta}_{m}^{t-1},\,\mathcal{D}_m,\,\mathcal{D}_m^{\text{val}},\,\text{client $m$}$}}{
   \BlankLine
    \
  \Comment{Rationality Check}
    \If{$(\mathbf{a}(\boldsymbol{\theta}^{t-1},\mathcal{D}_m^{\text{val}})> \mathbf{a}(\boldsymbol{\theta}_{m}^{t-1},\mathcal{D}_m^{\text{val}}))$} {
       
     $\boldsymbol{\theta}_{m}^{t} \gets {\boldsymbol{\theta}^{t-1}}$\\
      \For{$l \gets 1$ \KwTo $\tau$}{
        $\boldsymbol{\theta}_{m}^{t,l} \gets \boldsymbol{\theta}_{m}^{t,l-1} - \nabla f_k\!\bigl(\boldsymbol{\theta}_{m}^{t,l-1};\xi_m^{t,l-1} \bigr)$\Comment*[f]{mini‑batch SGD step}
    }
     \KwRet $\boldsymbol{\theta}_{m}^{t,\tau}$\;
    } 
    \Else{
      $\boldsymbol{\theta}_{m}^{t,0} \gets {\mathbf{w_m}^{t-1}} $\\
      \For{$l \gets 1$ \KwTo $\tau$}{
        $\boldsymbol{\theta}_{m}^{t,l} \gets \boldsymbol{\theta}_{m}^{t,l-1} - \nabla f_k\!\bigl(\boldsymbol{\theta}_{m}^{t,l-1};\xi_m^{t,l-1} \bigr)$\Comment*[f]{mini‑batch SGD step}
    }
        \KwRet \;
    }}


\end{algorithm}

\begin{algorithm}[!h]
\caption{FedAvg with Rationality}
\label{fedavg}
\KwIn{$\boldsymbol{\theta}^0,T,\kappa,\nabla{f_{k}(\boldsymbol{\theta}_k^0)} = 0$} 
\KwOut{$\boldsymbol{\theta}^{T}$} 


\For{$t \gets 1$ \KwTo $T$}{
    Server broadcasts to all $N$ clients with indices $[N]$ \;
    \For{$m \gets 1$ \KwTo $N$}{
        Server sends $\boldsymbol{\theta}^{t-1}$ to client $m$\;
        $\boldsymbol{\theta}_{m}^{\,t} \gets$ \textbf{ClientUpdate}$(\boldsymbol{\theta}^{t-1},m)$\;
    }
    $\boldsymbol{\theta}_k^t \gets\boldsymbol{\theta}_k^{t-1}$, $\nabla{f_k(\boldsymbol{\theta}_k^t)} \gets \nabla{f_k(\boldsymbol{\theta}_k^{t-1})} $,  if $k \notin S_t,   \forall  k \in [N]$\;
        
    $\boldsymbol{\theta}^t \gets \frac{1}{N} \sum_{k}\boldsymbol{\theta}_k^t$    
}

\end{algorithm}

\subsection{Empirical Verification of Concavity of the Utility}
\label{app:emp_verify}

\begin{figure}[htbp]
    \centering
    \includegraphics[width=0.8\textwidth]{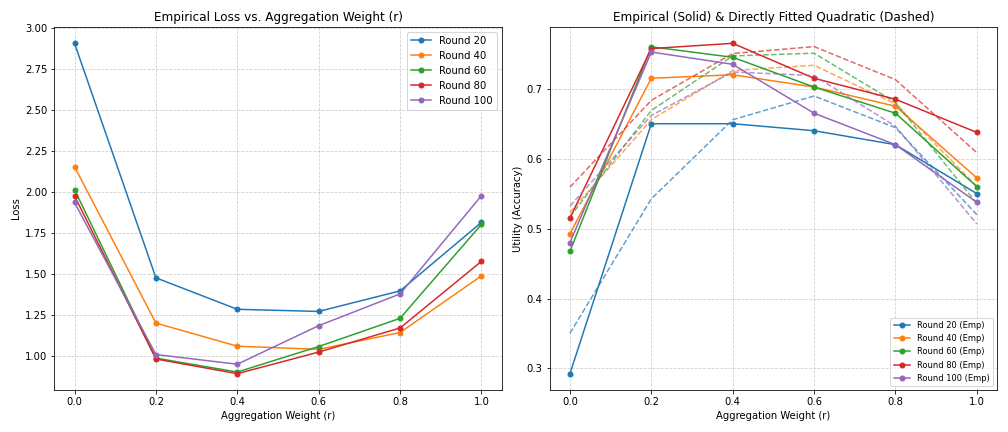}
    
    \vspace{10pt} 
    
    \includegraphics[width=0.8\textwidth]{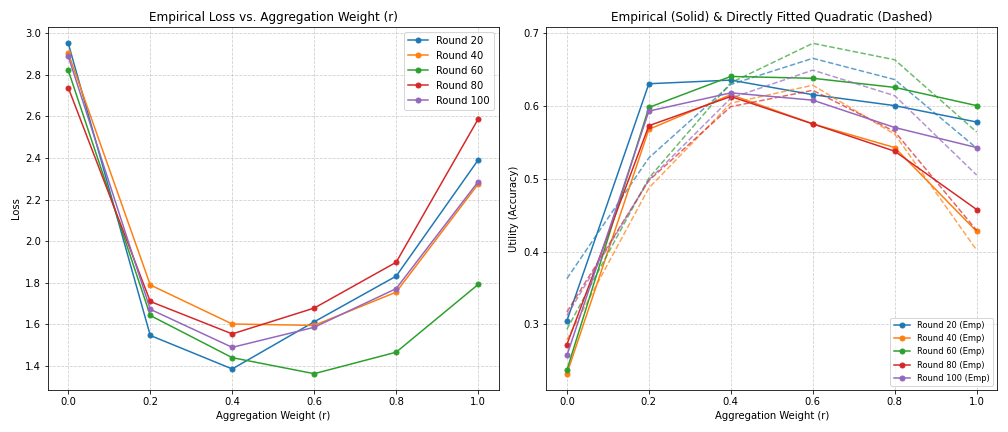}
    
    \caption{Representative empirical loss and utility profiles across two different clients, rounds, on the CIFAR-10 dataset. Across settings, loss curves exhibit convex-like behavior along the interpolation path, while utility curves remain smooth and unimodal. The dashed curves denote concave quadratic fits, supporting the surrogate utility model used by FedUCA.}
    \label{fig:client_dynamics_c10}
\end{figure}

\begin{figure}[htbp]
    \centering
    \includegraphics[width=0.8\textwidth]{illustrations/cifar100_client_idx_2.png}
    
    \vspace{10pt} 
    
    \includegraphics[width=0.8\textwidth]{illustrations/cifar100_client_idx_4.png}
    
    \caption{Representative empirical loss and utility profiles across two different clients, rounds, on the CIFAR-100 dataset. Across settings, loss curves exhibit convex-like behavior along the interpolation path, while utility curves remain smooth and unimodal. The dashed curves denote concave quadratic fits, supporting the surrogate utility model used by FedUCA.}
    \label{fig:client_dynamics_c100}
\end{figure}

\begin{table}[t]
\centering
\caption{Goodness-of-fit of concave quadratic surrogate for client utility across datasets and rounds.}
\label{tab:concavity_fit}
\begin{tabular}{lcc}
\toprule
Dataset & Avg.\ $R^2-Score$ \\
\midrule
CIFAR-10  & 0.83  \\
CIFAR-100 & 0.82  \\
\bottomrule
\end{tabular}
\label{r2_fit_concave}
\end{table}

Table~\ref{r2_fit_concave} reports the goodness-of-fit of the concave quadratic surrogate $u_k(r; \zeta_k) = -a_k r^2 + b_k r + c_k$ against the empirical utility profile $\hat u_k(r)$ across clients and communication rounds. Average $R^2$ values of $0.83$ on CIFAR-10 and $0.82$ on CIFAR-100 confirm that the concave quadratic family provides a strong fit to empirical utility profiles in the federated deep learning setting, supporting the modeling assumption introduced in Section~\ref{est_client_ut}.

Figures~\ref{fig:client_dynamics_c10} and~\ref{fig:client_dynamics_c100} display representative empirical loss and utility profiles for two clients across multiple communication rounds. Across settings, loss curves exhibit convex-like behavior along the interpolation path while utility curves remain smooth, unimodal, and well-approximated by the concave quadratic fits (dashed). This confirms that the concavity assumption, standard in mechanism design holds empirically in this setting, across diverse client distributions and training stages.


\subsection{Data Preparation}
\label{sec:data_prep}
For the correct evaluation of the persuasion capabilities of any FL algorithm, we propose, for a cross-silo setup, the data distribution amongst clients should allow for some individual training capabilities of clients while limiting their scope. This would allow for clients to be "self-interested", which could result in a poor server model if clients do not choose to participate. Such a distribution also enables the server to meaningfully influence participation through aggregation design. In general, the data distribution should satisfy:
\begin{itemize}
\item Clients have the option to train a local model to a reasonable accuracy if the server model does not benefit the clients.
\item If the clients do choose to participate in the federation there are some benefits to gain from the federated training.
\item It models real-world settings where the distribution of train and validation data split may not be identical.
\item Clients should not participate by default regardless of the aggregation method, as can occur in nearly i.i.d. settings; otherwise, it becomes difficult to compare how effectively different FL algorithms sustain participation.
\end{itemize}

The notation $(0.5, 1.0)$ indicates that a Dirichlet concentration parameter of $0.5$ is used to generate the training data split, while a concentration of $1.0$ is used for the validation data split. We first divide the dataset into disjoint sets of classes assigned to different clients. Within each client, the training data is sampled using the $0.5$ concentration, and the validation data using the $1.0$ concentration. The resulting training and validation split for the $(0.5, 1.0)$ setting is illustrated in Fig.\ref{fig:cifar_pt5_1heatmap}. A similar training and validation split for the $(1.0, 1.0)$ setting is illustrated in Fig.\ref{fig:cifar_1_1_heatmap}.

\begin{figure}[htbp]
    \centering
    \includegraphics[width=0.75\textwidth]{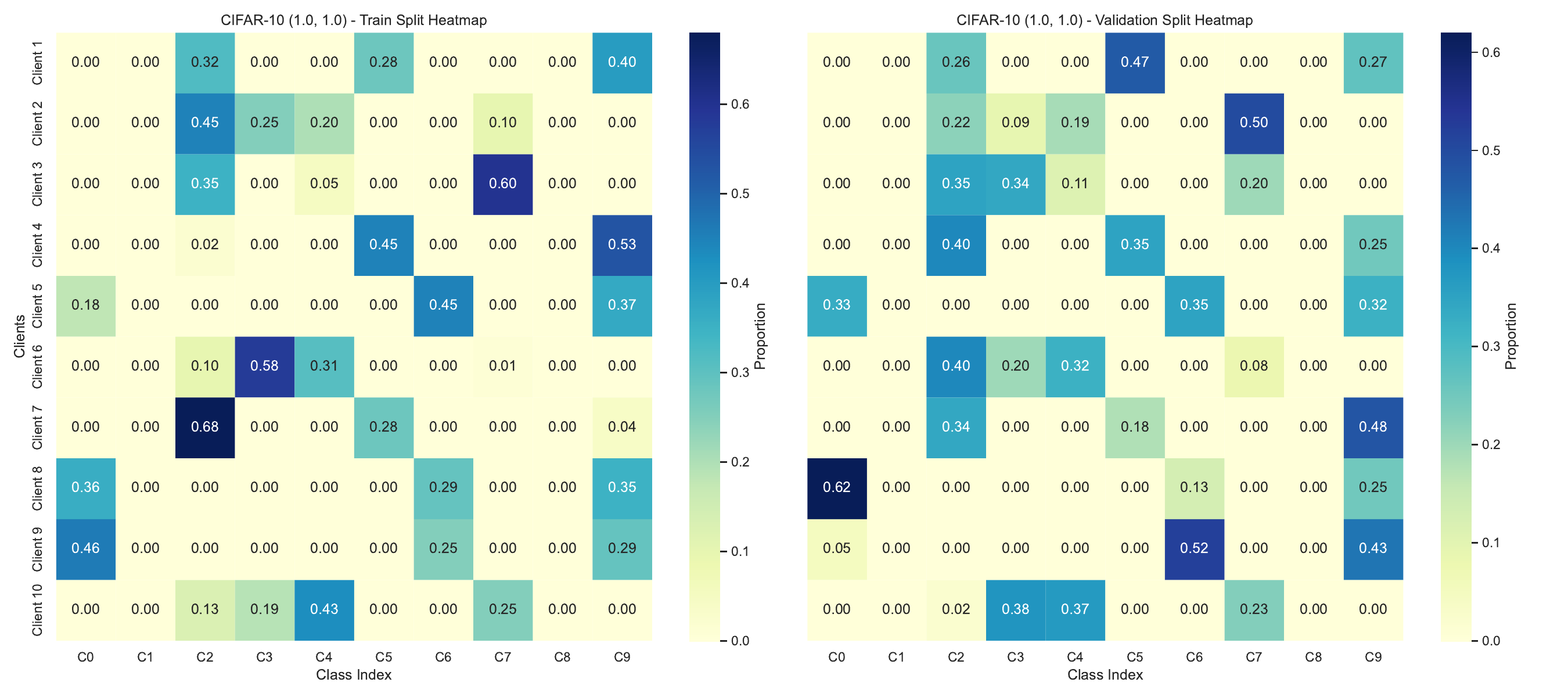} 
    \caption{Data distribution heatmap for CIFAR-10 (0.5, 1.0) showing Training and Validation splits across 10 clients.}
    \label{fig:cifar_pt5_1heatmap}
\end{figure}

\begin{figure}[htbp]
    \centering
    \includegraphics[width=0.75\textwidth]{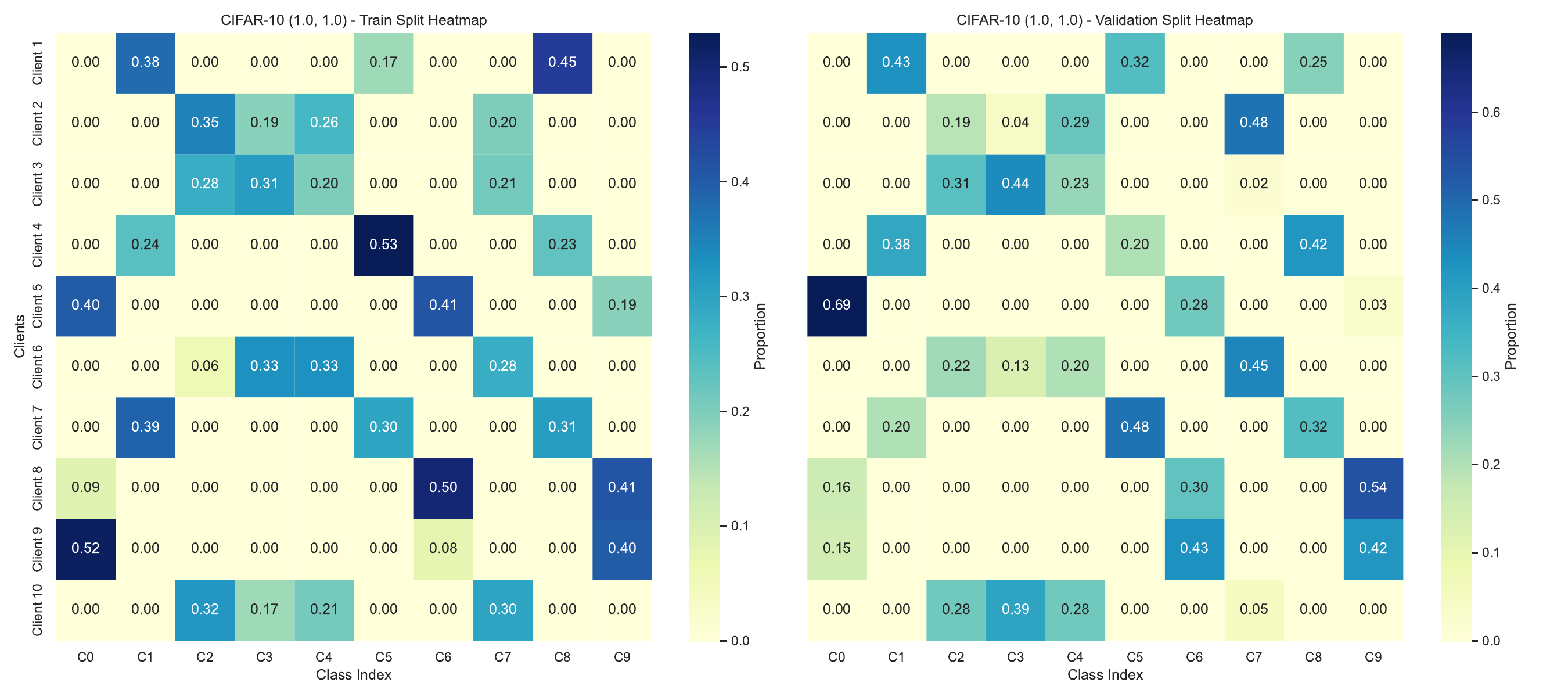} 
    \caption{Data distribution heatmap for CIFAR-10 (1.0, 1.0) showing Training and Validation splits across 10 clients.}
    \label{fig:cifar_1_1_heatmap}
\end{figure}

\subsection{F-MNIST and CIFAR-100}
For F-MNIST, we employ the same data distribution strategy as for CIFAR-10. For CIFAR-100, since we have 100 classes, we distribute 20 classes to each client in a similar fashion as CIFAR-10. 

\subsection{Experimental and Implementation Details}
\label{sup:hyper}
In all our experiments we prefer our utility to be the concave quadratic where the $\zeta_k := \{a_k,b_k,c_k\}$ and the utility is modeled as $u_k(r,\zeta_k) = -a_kr^2+b_kr+c_k$, $a_k$ is positive. Since these are parameterized by $3$ real numbers, this is communication efficient. We adopt hyperparameter settings similar to those used in FedDyn. Specifically, we set the number of local training epochs to $\tau = 5$. The local learning rate is $\eta_c = 0.1$, and the global learning rate is $\eta_s = 1.0$. For the CIFAR-10 and Fashion-MNIST (F-MNIST) datasets, we conduct training for a total of 100 communication rounds. We report the performance as the average accuracy over the final 10 rounds to ensure stability and reflect steady-state behavior. For the more challenging CIFAR-100 dataset, we extend the training to 100 communication rounds to allow sufficient convergence. \\
We use mini-batch SGD as the optimizer for all our experiments. To account for variability in initialization, we repeat each experiment using three different random seeds. We report both the mean and standard deviation of the performance metrics across these three runs. \\
All of our experiments are run on NVIDIA-RTX A5000 GPUs. For CIFAR-10, with 10 clients trained sequentially, the average time of execution for FedUCA is 5575s (the time sepnt in optimizer is 6s) for 100 communication rounds,  while for FedAvg we have an execution time of 4583s. The peak memory utilization for FedUCA is 5654 MB, while for FedAvg we get a maximum utilization of 5633 MB. For FedProx, we have an execution time of 4591s with a peak utilization of 5634 MB. IncFL has a peak memory utilization of 5640 MB with an average execution time of 4718s. 
For AAggFF-S we have an average execution time of 4761s with a peak memory utilization of 5640 MB.

\subsection{HyperParameter Settings}
\label{hyper_details}
\begin{table}[htbp]
\centering
\caption{Experimental setup and hyperparameter configurations for the benchmark datasets.}
\label{tab:hyperparameters}
\begin{tabular}{@{}lccccc@{}}
\toprule
\textbf{Dataset} & \textbf{Clients ($N$)} & \textbf{Rounds ($T$)} & \textbf{Epochs ($\tau$)} & \textbf{Model Architecture} & \textbf{Optimizer ($\eta$)} \\ \midrule
CIFAR-10  & 10 & 100 & 5 & 5-layer CNN  & SGD (0.1) \\
F-MNIST   & 10 & 100 & 5 & 2-layer MLP & SGD (0.1)  \\
CIFAR-100 & 20 & 100 & 5 & ResNet-18   & SGD (0.1) \\ \bottomrule
\end{tabular}
\end{table}

Table \ref{tab:hyperparameters} details the foundational experimental setup and hyperparameter configurations across our benchmark datasets. To ensure a rigorous evaluation across varying task complexities, we scale the model capacity and federation size accordingly: utilizing a 2-layer MLP for FMNIST and a 5-layer CNN for CIFAR-10 in a 10-client cross-silo environment, while deploying a deeper ResNet-18 backbone across 20 clients for the more challenging CIFAR-100 dataset. To explicitly isolate the performance impact of our proposed aggregation mechanism, local optimization dynamics are held strictly constant across all environments, employing standard SGD (learning rate $\eta = 0.1$) for $\tau = 5$ local epochs over $T = 100$ global communication rounds. For FedUCA, we set $n_s=5$ and $\delta=0.5$ by default.

\subsection{Experiments on Natural Language Data}
\label{20_news_group}
\begin{table}[htp]
\centering
\caption{Performance comparison on the NLP dataset (20 Newsgroups). We use Bi-directional LSTM as the model Architecture. We run for 50 communication rounds with 10 clients. FedUCA out performs all baselines.}   
\label{tab:nlp_results}
\begin{tabular}{@{}lcc@{}}
\toprule
\textbf{Method} & \textbf{Rwpr (\%)} & \textbf{Acc (\%)} \\ \midrule
IncFL   & 07.00 & $14.25_{\pm 0.20}$ \\
FedProx & 07.00 & $22.34_{\pm 0.40}$ \\
AggFL   & 07.60 & $22.82_{\pm 0.30}$ \\
FedAvg  & 06.50 & $16.25_{\pm 0.18}$ \\
\textbf{FedUCA} & \textbf{66.0} & $\mathbf{48.33_{\pm 0.57}}$ \\ \bottomrule
\end{tabular}
\end{table}

Table~\ref{tab:nlp_results} evaluates FedUCA beyond vision tasks on the 20 Newsgroups NLP dataset using a bidirectional LSTM. FedUCA achieves $48.33\%$ server accuracy, outperforming the strongest standard baseline, AggFL, by $25.51$ percentage points.


This shows that the utility-constrained aggregation continues to improve rational participation and global performance in an NLP setting. These results suggest that FedUCA's participation-management mechanism is not limited to vision datasets.

\subsection{Empirical Verification of Jensen's Gap}
\label{emp:jen_gap}

\begin{table}[htp]
\centering
\caption{Empirical verification of Proposition~\ref{prop:jgap}. We report the per-round total Jensen's gap $\sum_k \mathcal{J}_k^t$ measured over 100 communication rounds on CIFAR-10, summarized three ways: the mean over all rounds, the mean over the 50 rounds with the largest gap (Top-50), and the mean over the 25 rounds with the largest gap (Top-25). The Top-50 and Top-25 columns characterize the rounds in which the surplus mechanism is most active, typically rounds in which a substantial fraction of clients have binding IR constraints. The non-monotonic dependence on $\delta$ predicted by Proposition~\ref{prop:jgap} is observed across all three columns: the gap vanishes at extreme concentrations ($\delta = 0.01$ and $\delta = 100$) and peaks at moderate values $\delta \in [0.5, 1.0]$.} 
\label{tab:jensens_gap_distribution}
\begin{tabular}{@{}lccc@{}}
\toprule
\textbf{Dirichlet Concentration} ($\delta$) & \textbf{Mean} & \textbf{Top-50 Mean} & \textbf{Top-25 Mean} \\ \midrule
$0.01$ & 0.20 & 0.40 & 0.81 \\
$0.5$  & 2.48 & 4.10 & 5.32 \\
$1.0$  & 2.77 & 4.26 & 5.18 \\
$10$   & 1.19 & 1.60 & 1.82 \\
$100$  & 0.31 & 0.48 & 0.67 \\ \bottomrule
\end{tabular}
\end{table}

Table~\ref{tab:jensens_gap_distribution} reports the empirical Jensen's gap $\sum_k \mathcal{J}_k^t$ across communication rounds on CIFAR-10, for five Dirichlet concentrations spanning four orders of magnitude. We summarize the per-round gap three ways: the unconditional mean over all 100 rounds, and the mean over the top 50 and top 25 rounds. The latter two characterize the rounds in which the surplus mechanism is most active, which are typically the rounds in which the optimizer is working hardest to satisfy IR constraints for a heterogeneous set of clients.

Two observations align the data with Proposition~\ref{prop:jgap}. First, the dependence on $\delta$ is non-monotonic: the gap vanishes at $\delta = 0.01$ (where the Dirichlet collapses to a vertex) and at $\delta = 100$ (where it concentrates near the uniform vector), and peaks at $\delta \in [0.5, 1.0]$. Second, this peak location correlates with the participation maximum in Figure~\ref{fig:var_delta}, providing two independent measurements that confirm the theoretical prediction. The amplification across the Top-50 and Top-25 columns indicates that the surplus is concentrated in the rounds where the IR mechanism most needs to deliver, rather than spread uniformly across training.

\subsection{Empirical Verification of the Aggregation Discrepancy Bound}
\label{app:agg_verif}
\begin{figure}[htbp]
    \centering
    
    \begin{subfigure}{0.48\textwidth}
        \centering
        \includegraphics[width=\linewidth]{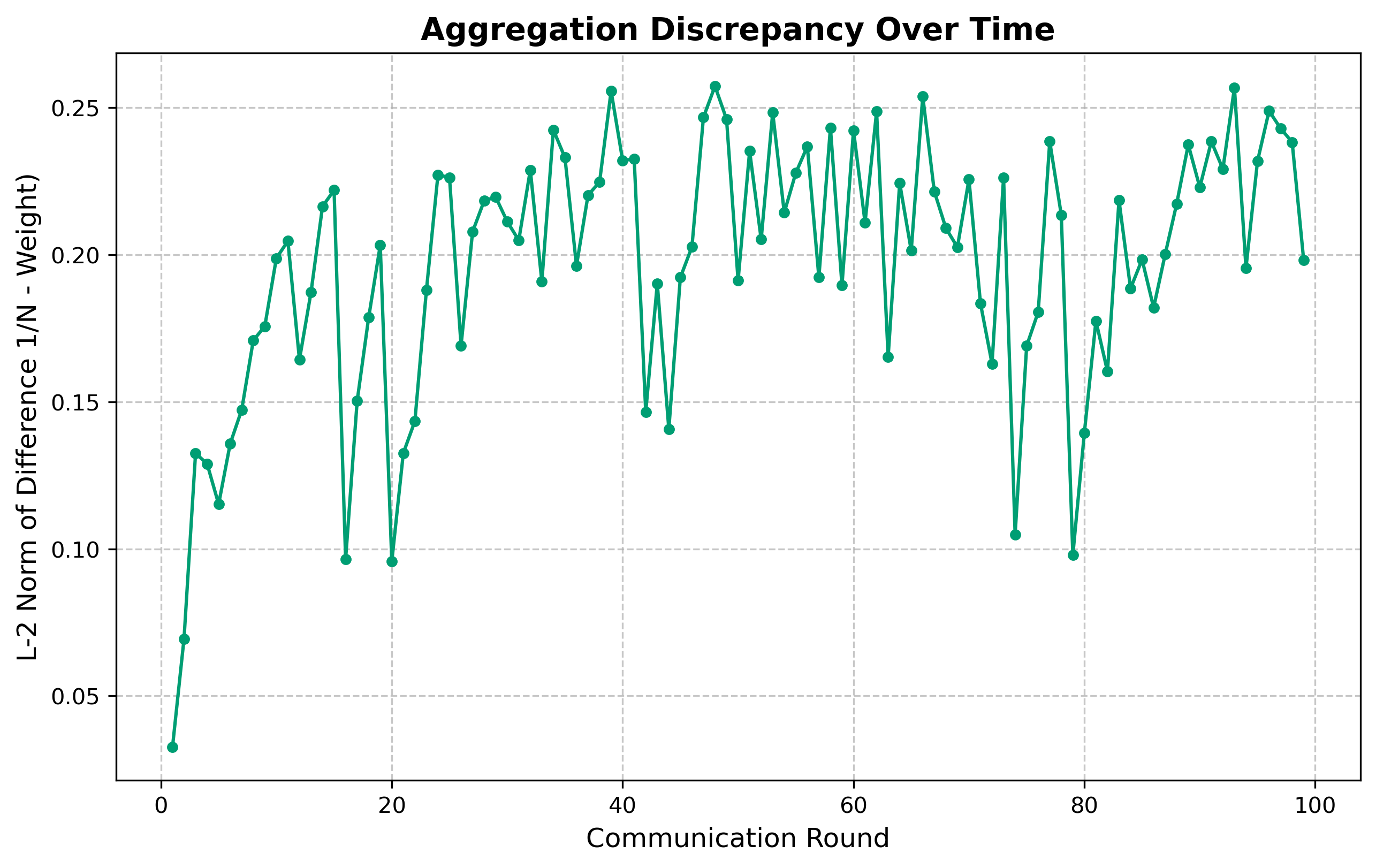}
        \caption{Results on CIFAR-10}
        \label{fig:cifar10}
    \end{subfigure}
    \hfill 
    \begin{subfigure}{0.48\textwidth}
        \centering
        \includegraphics[width=\linewidth]{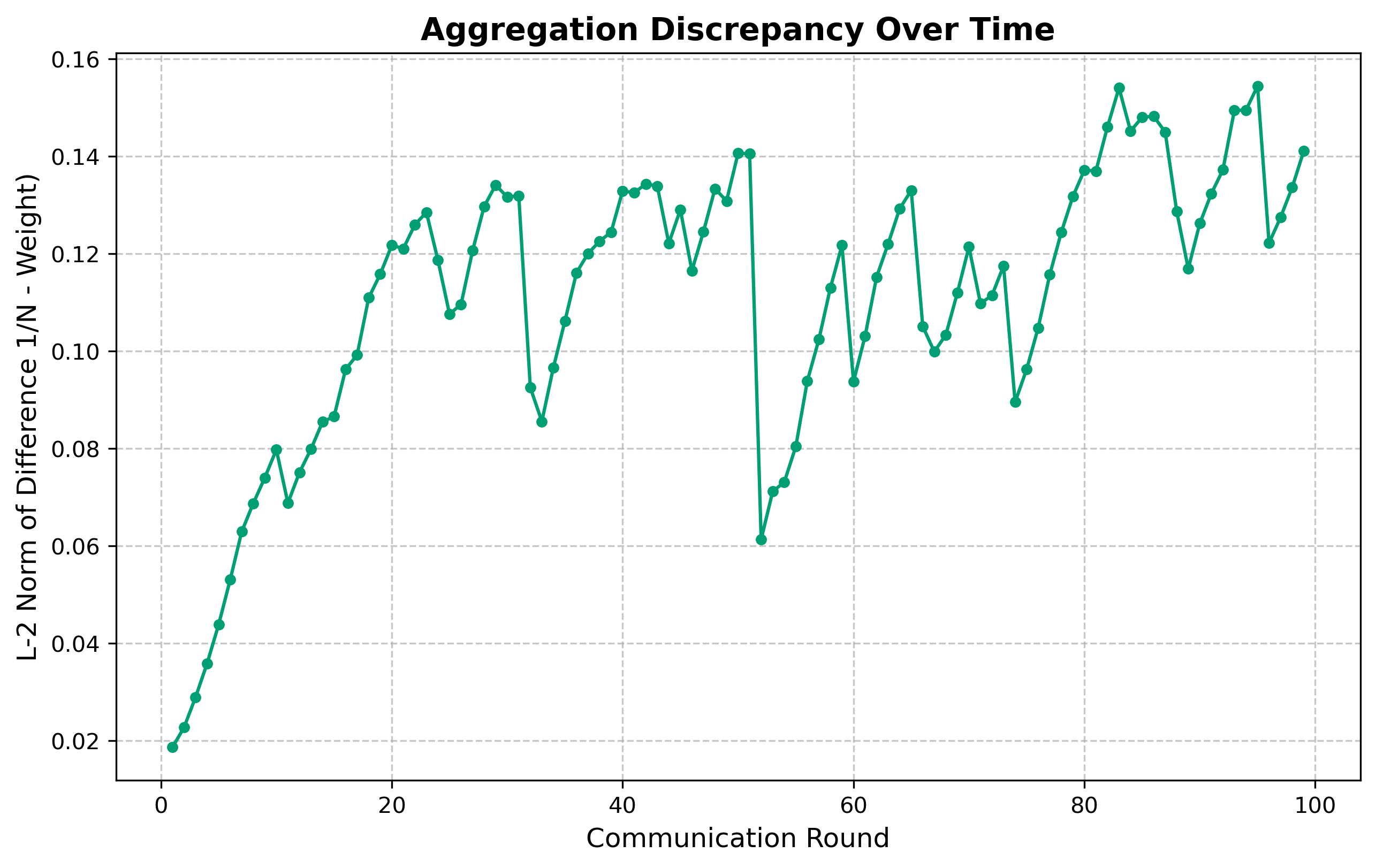}
        \caption{Results on CIFAR-100}
        \label{fig:cifar100}
    \end{subfigure}
    
 \caption{Empirical aggregation discrepancy $\|\mathbf{1}/N - \boldsymbol{\alpha}^t\|_2^2$ across communication rounds for FedUCA with $n_s = 5$ on (a) CIFAR-10 ($N=10$) and (b) CIFAR-100 ($N=20$). The dashed line marks the theoretical worst-case bound $(n_s-1)/N$ from Lemma~\ref{lem:agg_discrepancy}. Empirical values remain well below the bound throughout training. The gradual increase reflects growing heterogeneity in fitted client utilities $\zeta_k$ as local models diverge, which the IR mechanism translates into larger deviations from uniform aggregation; this is the intended behavior of utility-constrained aggregation, not a stability issue.}
    \label{fig:cifar_comparison_disc}
\end{figure}

Lemma~\ref{lem:agg_discrepancy} establishes the worst-case bound $\mathbb{E}\|\boldsymbol{\beta} - \boldsymbol{\alpha}^t\|^2 \le \frac{n_s-1}{N(n_s\delta+1)}$ on the aggregation discrepancy at every round, where $\beta = \mathbf{1}/N$. This bound is structural, it follows from the constraints of Optimization~\ref{main_opt} alone, and does not characterize the typical behavior of $\|\boldsymbol{\beta }- \boldsymbol{\alpha}^t\|^2$ in practice. We track this quantity empirically across all communication rounds for FedUCA with $n_s = 5$ on CIFAR-10 ($N = 10$) and CIFAR-100 ($N = 20$); results are shown in Figure~\ref{fig:cifar_comparison_disc}.


Two additional observations are worth noting. First, the discrepancy increases monotonically in the early rounds and saturates after roughly round 30. This trajectory reflects the dynamics of the utility-constrained aggregation: in the initial rounds, fitted client utilities $\{\zeta_k\}$ are similar across clients, and the optimizer can satisfy IR constraints with weights close to uniform. As local models diverge under heterogeneous data, client utility curves become more heterogeneous and the optimizer assigns weights further from $\beta$ to maintain participation. The saturation in later rounds indicates that this heterogeneity stabilizes rather than growing unboundedly. The behavior is consistent with the IR mechanism functioning as intended-deviations from uniform aggregation are the means by which Jensen's-gap surplus is allocated to satisfy individual rationality.

Second, CIFAR-100 exhibits smaller discrepancy than CIFAR-10 in absolute terms, consistent with the $1/N$ dependence in Lemma~\ref{lem:agg_discrepancy}: doubling the federation size halves the worst-case bound, and the empirical reduction roughly tracks this prediction. 


\subsection{Robustness to Strategic Misreporting}
\label{app:misreporting}

\begin{table}[h]
\centering
\caption{Impact of strategic threshold inflation on FedUCA (CIFAR-10, Dir(0.5, 1.0)). Universal inflation of the individual rationality threshold $\gamma_k^t$ shrinks the server's feasibility space, leading to participation collapse and severe accuracy degradation. Misreporting is therefore self-defeating.}
\label{tab:threshold_ablation}
\begin{tabular}{@{}lcc@{}}
\toprule
\textbf{Threshold Inflation} & \textbf{Participation (\%)} & \textbf{Accuracy (\%)} \\ \midrule
$0\%$ (truthful)             & 53.0                        & $66.89 \pm 1.56$       \\
$+5\%$                       & 40.0                        & $63.90 \pm 1.23$       \\
$+10\%$                      & 1.3                         & $10.80 \pm 0.01$       \\ \bottomrule
\end{tabular}
\end{table}

We evaluate FedUCA's robustness against strategic misreporting by simulating clients who attempt to free-ride by inflating their reported thresholds $\gamma_k$ by $5\%$ and $10\%$. We model universal inflation: under common knowledge of rationality, the same assumption that motivates the participation analysis throughout the paper, an incentive to inflate available to one client is available to all, so the rational analysis must consider the deviation in which every client inflates.

Inflated thresholds shrink the server's feasibility space (Optimization~\ref{main_opt}): the server cannot satisfy clients whose demands exceed what the aggregation can deliver, and drops them from the participating cohort. At $+5\%$ inflation, participation falls from $53\%$ to $40\%$ and global accuracy degrades by roughly $3$ points. At $+10\%$, the optimization becomes broadly infeasible, participation collapses to $1.3\%$, and accuracy degrades to random guessing ($10.80\%$).

The collapse mechanism enforces truthful reporting as a rational strategy. If any client believes inflation is profitable, every rational client believes the same and inflates simultaneously, but universal inflation triggers federation-wide collapse, in which no client receives utility. Truthful reporting is therefore the unique strategy profile in which all clients benefit; the optimization's feasibility structure makes any deviation from it self-defeating.

\subsection{Proofs of Propositions}
\begin{proposition}
Suppose $\epsilon = 0$, and all utilities $u_k(.)$ are concave then the solution to optimization~\ref{main_opt} satisfies either 
$\sum_{i=1}^{n_s}{ s_i{u_k(r_k^i)}}  =  \gamma_k -t_k$,  with $t_k > 0$ or $\sum_{i=1}^{n_s}{ s_i{u_k(r_k^i)}}  \geq  \gamma_k$, with $t_k = 0$. In the former case, the constraint implies $u_k(r_k^i) = \frac{s_i}{\sum_{j=1}^{n_s}s_j^2} (\gamma_k - t_k)$. If we further assume that $u_k(.)$  is strictly concave, the optimal solution $\mathbf{R}^*, \mathbf{t}^*$ is unique, if the former holds $\forall k \in [N] $.
\label{app:prop_kkt}
\end{proposition}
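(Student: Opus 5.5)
I will treat optimization~\eqref{main_opt} (with $\epsilon=0$) as a convex program in $(\mathbf{R},\mathbf{t})$ and read the statement off its KKT system. Convexity holds because the objective $\frac12\|\mathbf{t}\|_2^2$ is convex and all constraints on $\mathbf{R}$ are affine or nonnegativity constraints. Each IR constraint has the form $\gamma_k - t_k - \sum_i s_i u_k(r_k^i)\le 0$, and its left-hand side is convex since each $u_k$ is concave and $s_i\ge 0$.

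Slater's condition holds: take $\mathbf{R}=\frac1N\mathbf{1}_{n_s}\mathbf{1}_N^T$, which has strictly positive entries and satisfies both equality constraints, and take $t_k$ large. Hence KKT is necessary and sufficient. I introduce multipliers $\lambda_k\ge0$ for the IR constraints, $\mu_k\ge0$ for $t_k\ge0$, multipliers $\nu,\omega$ for the row-sum and column-sum equalities, and $\kappa_k^i\ge 0$ for $r_k^i\ge0$.

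\textbf{Stationarity in $t_k$.} This gives $t_k=\lambda_k+\mu_k$, and complementary slackness gives $\mu_k t_k=0$.
\begin{itemize}
\item If $t_k>0$, then $\mu_k=0$ and $\lambda_k=t_k>0$. The IR constraint is therefore active, so $\sum_i s_iu_k(r_k^i)=\gamma_k-t_k$.
\item If $t_k=0$, then feasibility of the constraint itself gives $\sum_i s_iu_k(r_k^i)\ge\gamma_k$.
\end{itemize}
This yields the dichotomy in the statement.

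\textbf{The displayed formula for $u_k(r_k^i)$.} I would read it as the minimum-norm point: the vector $v=(u_k(r_k^i))_i$ closest to the origin on the active hyperplane $\langle \mathbf{s},v\rangle=\gamma_k-t_k$ is $v=\mathbf{s}(\gamma_k-t_k)/\|\mathbf{s}\|^2$. This is exactly the stated formula, and one checks that $\sum_i s_i\cdot s_i(\gamma_k-t_k)/\sum_j s_j^2=\gamma_k-t_k$. I would present it as the particular solution consistent with the active constraint, i.e.\ the projection of the origin onto that hyperplane, rather than claim it is forced. This is the step I expect to be the main obstacle. The active constraint is a single scalar equation in $n_s$ unknowns, so it does not pin down each $u_k(r_k^i)$. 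To justify the formula beyond the minimum-norm interpretation, I would need the stationarity in $r_k^i$, namely $\lambda_k s_i u_k'(r_k^i)=\nu_i+\omega_k-\kappa_k^i$, which does not obviously give it.

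\textbf{Uniqueness.} This is the cleaner part. The objective is strictly convex in $\mathbf{t}$, so $\mathbf{t}^*$ is unique. Suppose $(\mathbf{R}^1,\mathbf{t}^*)$ and $(\mathbf{R}^2,\mathbf{t}^*)$ are both optimal with $\mathbf{R}^1\ne\mathbf{R}^2$, and take their midpoint. It is feasible for the affine constraints. If every $t_k^*>0$, every IR constraint is active at both points. Strict concavity of $u_k$ and $s_i>0$ (Dirichlet samples are almost surely interior) then make $\sum_i s_iu_k(\cdot)$ strictly larger at the midpoint for any $k$ whose column differs. That constraint therefore holds strictly, so $t_k$ can be decreased slightly while keeping feasibility, which lowers the objective and contradicts optimality.
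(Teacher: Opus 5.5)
Your proposal follows the paper's proof step for step (complementary slackness on $t_k=\lambda_k+\mu_k$ for the dichotomy, only a consistency check for the displayed formula, and a midpoint/strict-concavity contradiction for uniqueness). Your Slater check, your argument that $\mathbf{t}^*$ is unique, and your restriction of the strict inequality to a client whose column differs (using $s_i>0$) supply details the paper assumes without saying. Your doubt about the formula is justified: the paper's proof likewise only checks that $s_i(\gamma_k-t_k)/\sum_j s_j^2$ satisfies the active constraint, and the claimed implication is in fact false. For example, with $N=n_s=2$ every feasible $\mathbf{R}$ has $r_k^2=1-r_k^1$; take $u_k(r)=1+r-r^2$ (strictly concave, symmetric about $\tfrac12$, valued in $[1,\tfrac54]$) and $\gamma_k=10$. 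Then both clients have $t_k>0$ and $u_k(r_k^1)=u_k(r_k^2)>0$, whereas the formula would force $u_k(r_k^1)/u_k(r_k^2)=s_1/s_2\neq1$ for any non-uniform $\mathbf{s}$.
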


\begin{proof}

We form the lagrangian $L(\mathbf{t},\textbf{R})$ as 
\begin{align}
L(\mathbf{t},\textbf{R}, \boldsymbol{\lambda_1},\boldsymbol{\lambda_2},\beta_k^i,\xi_k,\mu_k ) &= \frac{1}{2}{\Vert \mathbf{t} \rVert}_2^2 + 
\boldsymbol{\lambda_1}^T(\mathbf{R}\mathbf{1}_{N} -\mathbf{1}_{n_s})+\boldsymbol{\lambda_2}^T(\mathbf{R}^T\mathbf{1}_{n_s}- \mathbf{1}_{N}) \\ \nonumber
&+ \sum_{i=1,k=1}^{n_s,N} \beta_{k}^{i} r_k^i  - \sum_{k=1}^{N} \xi_k(\sum_{i=1}^{n}{ s_i{u_k(r_k^i)}}  -   \gamma_k +t_k)
-\sum_{k=1}^{N}\mu_k t_k
\end{align}


by setting $\nabla_\mathbf{t}L = 0$ we have the following 

\begin{equation}
t_k = \mu_k + \xi_k
\label{grad_eq1}
\end{equation}

by setting $\nabla_{r_k^i}L = 0$ we have the following,  $\forall i \in [n_s], \forall k \in [N]$
\begin{equation}
\boldsymbol{\lambda}_1(i) + \boldsymbol{\lambda}_2(k) +\beta_k^i = s_i \nabla_{r_k^i}{u_k(r_k^i)}
\label{grad_eq2}
\end{equation}

By the feasibility of equality constraints, we have
\begin{equation}
\mathbf{R}\mathbf{1}_{N} = \mathbf{1}_{n_s}
\end{equation}

\begin{equation}
\mathbf{R}^T\mathbf{1}_{n_s} =  \frac{n_s}{N} \mathbf{1}_{N}
\end{equation}


By the feasibility of inequality constraints, we have $\forall k \in [N]$

\begin{equation}
\sum_{i=1}^{n}{ s_i{u_k(r_k^i)}}  \geq  \gamma_k -t_k
\end{equation}

By the complementary slackness of the inequality constraints, we have $\forall k \in [N]$

\begin{equation}
(\sum_{i=1}^{n}{ s_i{u_k(r_k^i)}}  -\gamma_k +t_k)\xi_k = 0
\label{slack1}
\end{equation}

\begin{equation}
\mu_k t_k = 0
\label{slack2}
\end{equation}

And  $\forall k \in [N]$, $\forall i \in [n_s]$

\begin{equation}
 r_k^i \beta_k^i = 0
 \label{slack3}
\end{equation}

From Eq.~\ref{slack2} we see that when $t_k > 0$ the Eq.~\ref{grad_eq1} will simplify to $t_k = \xi_k$. Since we have assumed $t_k > 0$ and so is $\xi_k$.

But the slackness assumption in Eq.~\ref{slack1} implies that 

\begin{equation}
\sum_{i=1}^{n}{ s_i{u_k(r_k^i)}}  = \gamma_k -t_k
\label{eq_may_sat}
\end{equation}

Suppose we assume that $\xi_k = 0$

This will imply from the condition in.~\ref{slack1} we have the following 

\begin{equation}
\sum_{i=1}^{n}{ s_i{u_k(r_k^i)}}  \geq \gamma_k -t_k
\label{eq_sat}
\end{equation}

But From the condition in.~\ref{grad_eq1} we have $t_k = \mu_k$ but the slackness also implies that $t_k\mu_k = 0$. This impies that $t_k = \mu_k = \xi_k = 0$.

The Eq.~\ref{eq_sat} simplifies to the following 

\begin{equation}
\sum_{i=1}^{n}{ s_i{u_k(r_k^i)}}  \geq \gamma_k
\label{eq_sat1}
\end{equation}

From Eq.~\ref{eq_sat1} and.~\ref{eq_may_sat} we conclude the first part of the proof.

The second part of the proof can be established by writing as system of equations with unknowns in $u_k(r_k^i)$. Its easy to verify that the condition $u_k(r_k^i) = \frac{s_i}{\sum_{j=1}^{n_s}s_j^2}(\gamma_k - t_k)$ satisfies the condition $\sum_{i=1}^{n}{ s_i{u_k(r_k^i)}}  =  \gamma_k -t_k$. 

Finally, We can  prove this uniqueness by contradiction. Suppose there are two distinct optimal strategy matrices, $\mathbf{R}_1^*$ and $\mathbf{R}_2^*$, that yield the same optimal (minimal) slack $\boldsymbol{t}^*$. Owing to the strict concavity of the utility function $u_k$ considered for a client $k$, let us form a convex combination 
\begin{equation*}
\mathbf{R} = \alpha \mathbf{R}_1^* + (1-\alpha) \mathbf{R}_2^*,  \forall \alpha \in (0, 1).
\end{equation*}
Evaluating the utility constraint under this mixed strategy yields:

\begin{equation*}
\sum_{i=1}^{n_s}s^i u_k(r_k^i) > \alpha \sum_{i=1}^{n_s}s^i u_k(r_{1,k}^{*i}) + (1-\alpha) \sum_{i=1}^{n_s}s^i u_k(r_{2,k}^{*i})
\end{equation*}


Finally we have 

\begin{equation*}
\sum_{i=1}^{n_s}s^i u_k(r_k^i) > \alpha (\gamma_k - t_k^*) + (1-\alpha) (\gamma_k -t_k^*) = \gamma_k - t_k^*
\end{equation*}


This demonstrates that under the combined strategy $\mathbf{R}$, the utility strictly exceeds the required threshold bounded by $t_k^*$. Consequently, the slack $t_k^*$ can be further reduced while still satisfying the individual rationality constraint. This directly contradicts the premise that $\boldsymbol{t}^*$ is the optimal (minimal) slack. Thus, the strategy solution $r_k^i$ generated by the server must be unique.

\end{proof}

\begin{proposition}
Let the expected Jensen's gap for client $k$ be defined as $\bar{\mathcal{J}}_k(\delta) \coloneqq \mathbb{E}_{\mathbf{s}\sim \text{Dir}(\delta)} [\mathcal{J}_k(\mathbf{s})]$ for a concentration parameter $\delta > 0$. The instantaneous gap is given by $\mathcal{J}_k(\mathbf{s}) \coloneqq u_k\big(\sum_{i=1}^{n_s} s_i r_k^i\big) - \sum_{i=1}^{n_s} s_i u_k(r_k^i)$, where $u_k(\cdot)$ is a strictly concave utility function and the aggregation weights $r_k^i$ are obtained by solving Optimization~\ref{main_opt}, where the constraint is binding. The expected gap $\bar{\mathcal{J}}_k(\delta)$ vanishes at the extreme concentration regimes, specifically when $\delta \le \delta_{min}$ or $\delta \ge \delta_{max}$ (where $\delta_{min} \ll 1$ and $\delta_{max} \gg 1$). Furthermore, $\bar{\mathcal{J}}_k(\delta)$ is a continuous function of $\delta$ on the open interval $(\delta_{min}, \delta_{max})$.
\label{app:jgap_dct}
\end{proposition}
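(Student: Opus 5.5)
The plan is to treat the strategy matrix $\mathbf{R}$ (hence the points $r_k^1,\dots,r_k^{n_s}\in[0,1]$) as fixed once Optimization~\ref{main_opt} has been solved. Then $\mathcal{J}_k(\mathbf{s})$ becomes a deterministic function of $\mathbf{s}$ on the simplex $\Delta_{n_s}$, and only the law $\mathrm{Dir}(\delta)$ varies with $\delta$. First I record the basic properties of $\mathcal{J}_k$ on $\Delta_{n_s}$:
\begin{enumerate}[label=(\roman*)]
\item it is nonnegative by Jensen's inequality, since $u_k$ is concave;
\item it is continuous, since $u_k$ is concave and finite on an open interval containing $[0,1]$, e.g.\ the quadratic family $-a_kr^2+b_kr+c_k$;
\item it is bounded by some $M<\infty$, because $\Delta_{n_s}$ is compact;
\item it vanishes at every vertex $e_i$, since $\mathcal{J}_k(e_i)=u_k(r_k^i)-u_k(r_k^i)=0$.
\end{enumerate}
Property (iii) supplies the integrable dominating function needed later.

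For the regime $\delta\to0$, I use the standard fact that the symmetric $\mathrm{Dir}(\delta)$ law converges weakly, as $\delta\to 0$, to the uniform mixture $\frac1{n_s}\sum_i \delta_{e_i}$ of point masses at the vertices. One way to see this is via the gamma representation $s_i=G_i/\sum_j G_j$ with $G_i\sim\mathrm{Gamma}(\delta,1)$: as $\delta\to0$, the largest $G_i$ dominates all the others with probability tending to one. Since $\mathcal{J}_k$ is bounded and continuous, weak convergence gives
\[
\bar{\mathcal{J}}_k(\delta)\to\frac1{n_s}\sum_{i=1}^{n_s}\mathcal{J}_k(e_i)=0 .
\]

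For $\delta\to\infty$, $\mathrm{Dir}(\delta)$ concentrates at the barycenter $\bar{\mathbf{s}}=\frac1{n_s}\mathbf 1$, because $\mathrm{Var}(s_i)=\frac{(n_s-1)}{n_s^2(n_s\delta+1)}\to0$. Weak convergence then gives $\bar{\mathcal{J}}_k(\delta)\to\mathcal{J}_k(\bar{\mathbf{s}})=u_k\big(\frac1{n_s}\sum_i r_k^i\big)-\frac1{n_s}\sum_i u_k(r_k^i)$. This limit is \emph{not} zero in general. To make it vanish I will invoke the constraint $\mathbf 1_{n_s}^T\mathbf R=\frac{n_s}{N}\mathbf 1_N^T$ together with the binding-constraint characterization of Proposition~\ref{app:prop_kkt}, and argue that the $r_k^i$ must nearly coincide in this regime. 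The idea is that as $\delta$ grows, $\mathbf s$ approaches the uniform vector, so the optimizer has no incentive to spread the $r_k^i$. Failing that, I will fall back on the weaker quantitative reading the paper uses: a smoothness bound $\mathcal{J}_k(\mathbf{s})\le \frac{a_k}{2}\,\mathrm{Var}_{\mathbf{s}}(r_k)$ for the quadratic $u_k$, combined with the Dirichlet variance rate, which shows the gap around $\bar{\mathbf{s}}$ is controlled. I expect this $\delta\to\infty$ step to be the main obstacle, since vanishing genuinely depends on how $\mathbf R$ responds to $\delta$. Once both limits are established, the existence of thresholds $\delta_{\min}\ll1$ and $\delta_{\max}\gg1$ beyond which $\bar{\mathcal{J}}_k$ is below any prescribed tolerance follows directly from the definition of a limit.

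For continuity on $(\delta_{\min},\delta_{\max})$, I write
\[
\bar{\mathcal{J}}_k(\delta)=\int_{\Delta_{n_s}}\mathcal{J}_k(\mathbf{s})\,p_\delta(\mathbf{s})\,d\mathbf{s},
\qquad
p_\delta(\mathbf s)=\frac{\Gamma(n_s\delta)}{\Gamma(\delta)^{n_s}}\prod_i s_i^{\delta-1}.
\]
For a sequence $\delta_m\to\delta_0$ inside a compact subinterval $[a,b]$, the densities converge pointwise because $\Gamma$ is continuous. They are also dominated by $C_{a,b}\prod_i s_i^{a-1}$ (using $s_i^{\delta-1}\le s_i^{a-1}$), which is integrable since $a>0$. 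Multiplying by the bound $M$ from (iii) and applying the Dominated Convergence Theorem yields $\bar{\mathcal{J}}_k(\delta_m)\to\bar{\mathcal{J}}_k(\delta_0)$. Finally, since $\bar{\mathcal{J}}_k\ge0$ and it is strictly positive whenever the $r_k^i$ are not all equal (strict concavity on a set of positive Dirichlet mass), it attains an interior maximum on the intermediate range.
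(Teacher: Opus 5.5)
\textbf{The $\delta\to\infty$ step is a genuine gap, and under your frozen-$\mathbf{R}$ premise the claim there is false.} With $\mathbf{R}$ fixed, weak convergence to $\bar{\mathbf{s}}=\tfrac1{n_s}\mathbf{1}_{n_s}$ and the column-sum constraint give $\bar{\mathcal{J}}_k(\delta)\to u_k(\tfrac1N)-\tfrac1{n_s}\sum_i u_k(r_k^i)$. Strict concavity makes this positive unless $r_k^i=\tfrac1N$ for all $i$.

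Your first remedy (the optimizer ``has no incentive to spread'' the $r_k^i$) needs $\mathbf{R}$ to respond to $\mathbf{s}$. That contradicts your premise, and you only announce it. The fallback cannot work either. For $u_k(r)=-a_kr^2+b_kr+c_k$ one has exactly $\mathcal{J}_k(\mathbf{s})=a_k\big(\sum_i s_i(r_k^i)^2-(\sum_i s_ir_k^i)^2\big)$, so your constant $\tfrac{a_k}{2}$ is also off by a factor $2$. This quantity is the spread of the points $r_k^i$ under the weights $\mathbf{s}$, not the Dirichlet variance of $\mathbf{s}$. With $\mathbf{R}$ fixed it tends to $\tfrac{a_k}{n_s}\sum_i(r_k^i-\tfrac1N)^2$, so it gives boundedness, not vanishing.

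\textbf{The missing idea is that $\mathbf{R}=\mathbf{R}^*(\mathbf{s})$ is re-solved after each draw of $\mathbf{s}$.} The paper treats it this way. It takes a measurable selection of the argmin via the Measurable Maximum Theorem, and at $\delta\to\infty$ it concludes $r_k^i=\tfrac1N$ from the binding relation with $s_i=\tfrac1{n_s}$.

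A direct version of that step goes as follows.
\begin{enumerate}
\item At $\bar{\mathbf{s}}$, every feasible $\mathbf{R}$ has $\tfrac1{n_s}\sum_i r_k^i=\tfrac1N$. Hence $\tfrac1{n_s}\sum_i u_k(r_k^i)\le u_k(\tfrac1N)$, with equality iff $\mathbf{r}_k=\tfrac1N\mathbf{1}_{n_s}$.
\item The matrix $\tfrac1N\mathbf{1}_{n_s}\mathbf{1}_N^T$ is feasible and attains all these bounds at once. So every minimizer gives client $k$ the slack $\max(0,\gamma_k-u_k(\tfrac1N))$.
\item When this slack is positive, i.e.\ $\gamma_k>u_k(\tfrac1N)$ (which is how the binding hypothesis must be read), the $k$-th column is forced to be uniform. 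Therefore $\mathcal{J}_k(\bar{\mathbf{s}})=0$.
\end{enumerate}
This route avoids the formula $u_k(r_k^i)=\frac{s_i}{\sum_j s_j^2}(\gamma_k-t_k)$ from Proposition~\ref{app:prop_kkt}. That formula is only one solution of the binding equation, not a consequence of it.

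To pass from $\bar{\mathbf{s}}$ to nearby $\mathbf{s}$, use Berge's maximum theorem: the objective is jointly continuous and the feasible set is constant and compact. It gives $\mathbf{r}_k(\mathbf{s})\to\tfrac1N\mathbf{1}_{n_s}$ as $\mathbf{s}\to\bar{\mathbf{s}}$ for any selection. Chebyshev with your variance formula, together with $0\le\mathcal{J}_k\le 2M_u$ where $M_u=\max_{[0,1]}|u_k|$, then yields $\bar{\mathcal{J}}_k(\delta)\to0$.

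\textbf{Two other steps must change once $\mathbf{R}$ depends on $\mathbf{s}$.} Your property (ii) is lost, because argmin selections need not be continuous.
\begin{itemize}
\item For $\delta\to0$, use a bound uniform over the feasible polytope. If $s_i\ge1-\varepsilon$, then $|\sum_j s_jr_k^j-r_k^i|\le\varepsilon$ and $|\sum_j s_ju_k(r_k^j)-u_k(r_k^i)|\le 2M_u\varepsilon$. So $\mathcal{J}_k(\mathbf{s})\le\omega(\varepsilon)+2M_u\varepsilon$, where $\omega$ is the modulus of continuity of $u_k$ on $[0,1]$.
\item For the continuity step, you need measurability of $\mathbf{s}\mapsto\mathcal{J}_k(\mathbf{s})$, which the measurable selection supplies. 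Your domination argument is otherwise the paper's and goes through unchanged, since it only uses boundedness.
\end{itemize}
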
 


\begin{proof}
We establish that $\bar{\mathcal{J}}_k(\delta)$ is well-defined and continuous on $(\delta_{\min},\delta_{\max})$.

The aggregation matrix $\mathbf{R}^*(\mathbf{s})$ is obtained from the parametric optimization problem in~\eqref{main_opt}. The feasible set $\mathcal{R}$ is a non-empty compact polytope defined by linear equality and inequality constraints, and is independent of $\mathbf{s}\in\Delta$.

Define
\[
V(\mathbf{R},\mathbf{s})
=
\frac{1}{2}\|\mathbf{t}(\mathbf{R},\mathbf{s})\|_2^2,
\qquad
t_k(\mathbf{R},\mathbf{s})
=
\max\Big(0,\gamma_k-\sum_{i=1}^{n_s}s_i u_k(r_k^i)\Big).
\]
For fixed $\mathbf{s}$, $V(\mathbf{R},\mathbf{s})$ is continuous in $\mathbf{R}$ because $u_k$ is continuous and the max operator is continuous. For fixed $\mathbf{R}$, $V(\mathbf{R},\mathbf{s})$ is continuous, hence measurable, in $\mathbf{s}$. Thus, $V$ is a Carathéodory function. Since the feasible correspondence is constant, compact-valued, and measurable, the Measurable Maximum Theorem. (Refer to Theorem 18.19 of~\citep{aliprantis1999infinite})
implies that the argmin correspondence admits a Lebesgue-measurable selection $\mathbf{R}^*(\mathbf{s})$.

Each coordinate $r_k^i(\mathbf{s})$ is a coordinate projection of $\mathbf{R}^*(\mathbf{s})$. Since coordinate projections are continuous, each $r_k^i(\mathbf{s})$ is measurable. Therefore, because $u_k$ is continuous, the mappings $u_k(r_k^i(\mathbf{s}))$ and
\[
u_k\!\left(\sum_{i=1}^{n_s}s_i r_k^i(\mathbf{s})\right)
\]
are measurable. Hence,
\[
\mathcal{J}_k(\mathbf{s})
=
u_k\!\left(\sum_{i=1}^{n_s}s_i r_k^i(\mathbf{s})\right)
-
\sum_{i=1}^{n_s}s_i u_k(r_k^i(\mathbf{s}))
\]
is measurable.

Moreover, since $r_k^i(\mathbf{s})\in[0,1]$ and $u_k$ is continuous on the compact interval $[0,1]$, there exists $M_u<\infty$ such that $|u_k(r)|\le M_u$ for all $r\in[0,1]$. Thus,
\[
|\mathcal{J}_k(\mathbf{s})|
\le
\left|u_k\!\left(\sum_i s_i r_k^i(\mathbf{s})\right)\right|
+
\sum_i s_i |u_k(r_k^i(\mathbf{s}))|
\le 2M_u.
\]
Let $M_J=2M_u$.

Now write
\[
\bar{\mathcal{J}}_k(\delta)
=
\int_{\Delta}\mathcal{J}_k(\mathbf{s})p(\mathbf{s};\delta)\,d\mathbf{s},
\qquad
p(\mathbf{s};\delta)
=
\frac{1}{B(\delta)}
\prod_{i=1}^{n_s}s_i^{\delta-1}.
\]

Let $\delta_n\to\delta\in(\delta_{\min},\delta_{\max})$. Since $\{\delta_n\}$ converges, there exist constants $0<a<b<\infty$ such that $\delta_n,\delta\in[a,b]$ for all sufficiently large $n$. For $\mathbf{s}$ in the interior of the simplex,
\[
p(\mathbf{s};\delta_n)\to p(\mathbf{s};\delta)
\]
pointwise. Boundary points have Lebesgue measure zero on the simplex, so this convergence holds almost everywhere.

Since $B(\cdot)$ is continuous and strictly positive on $[a,b]$, there exists
\[
M_B=\max_{\eta\in[a,b]}\frac{1}{B(\eta)}<\infty.
\]
Also, for $s_i\in(0,1)$ and $\delta_n\ge a$,
\[
s_i^{\delta_n-1}\le s_i^{a-1}.
\]
Therefore,
\[
|\mathcal{J}_k(\mathbf{s})p(\mathbf{s};\delta_n)|
\le
M_J M_B \prod_{i=1}^{n_s}s_i^{a-1}
\coloneqq G(\mathbf{s}).
\]
The function $G$ is integrable over $\Delta$ because $a>0$ and
\[
\int_{\Delta}\prod_{i=1}^{n_s}s_i^{a-1}\,d\mathbf{s}
=
B(a)
<
\infty.
\]

Hence, by the Dominated Convergence Theorem,
\[
\lim_{n\to\infty}\bar{\mathcal{J}}_k(\delta_n)
=
\int_{\Delta}\mathcal{J}_k(\mathbf{s})p(\mathbf{s};\delta)\,d\mathbf{s}
=
\bar{\mathcal{J}}_k(\delta).
\]
Thus, $\bar{\mathcal{J}}_k(\delta)$ is continuous on $(\delta_{\min},\delta_{\max})$.

\medskip
We now consider the extreme concentration regimes.

As $\delta \to 0$, only one of $s_i \approx 1$ and the remaining $s_{k} \approx 0, \forall k \ne i$. This amounts to using the single strategy, which is a deterministic case. In this case, we have the following. 

\begin{equation}
u_k(\sum_{i=1}^{n_s}s_ir_k^i) \approx u_k(r_k^i) 
\label{app:zjgap1}
\end{equation}

\begin{equation}
\sum_{i=1}^{n_s}s_iu_k(r_k^i) \approx u_k(r_k^i) 
\label{app:zjgap2}
\end{equation}

These two equations, Eq.~\ref{app:zjgap1} and Eq.~\ref{app:zjgap2}, imply the zero Jensen's gap.


As $\delta \to \infty$, all the strategies are $s_i \approx \frac{1}{n_s}$. From the proposition~\ref{app:prop_kkt} we know that the solution $r_k^i$ admits the following equality 

\begin{equation}
u_k(r_k^i) = \frac{s_i}{\sum_{j=1}^{n_s} s_i^2} ({\gamma_k - t_k})
\label{app:r_k_sol}
\end{equation}

Since all the $s_i$ admits to the same value the above equation simplifies to 
\begin{equation*}
u_k(r_k^i) =\gamma_k - t_k 
\end{equation*}
This is independent of $i$, Since $t_k$ is the slack, its adjusted so that the solution exists and satisfies the constraint $ \sum_{i=1}^{n_s} r_k^i = \frac{n_s}{N}$. Since they all are equal we have $r_k^i = \frac{1}{N}$.

We have $u_k(\sum_{i=1}^{n_s} s_i r_k^i) = u_k(\frac{1}{N})$ and $\sum_{i=1}^{n_s}s_iu_k(r_k^i) = u_k(\frac{1}{N})$. Resulting in the zero Jensen's gap.



Hence, the expected gap $\bar{\mathcal{J}}_k(\delta)$ vanishes in both extremes.
\end{proof}

\begin{proposition}
Suppose $u_k(\cdot)$ is concave, $J_k(s)\ge 0$, and $t \leq \underset{\mathbf{s}\sim {Dir}(\delta)} {\mathbb{E}} [\mathcal{J}_k(\mathbf{s})]$ and $n_s \le N$ then $Pr[{\mathcal{J}_k}(\mathbf{s}) > t] \geq { \underset{\mathbf{s}\sim {Dir}(\delta)} {\mathbb{E}^2} [\mathcal{J}_k(\mathbf{s})] -t)}/{M_J^2} $, for some $M_J \le1$.
\label{app:pz}
\end{proposition}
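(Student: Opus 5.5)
The plan is to apply the Paley--Zygmund inequality to the non-negative random variable $Z \coloneqq \mathcal{J}_k(\mathbf{s})$, $\mathbf{s}\sim\mathrm{Dir}(\delta)$. Measurability and boundedness are already settled in the proof of Proposition~\ref{app:jgap_dct}: the measurable-selection argument makes $Z$ a random variable, and $0\le Z\le M_J$ there. Non-negativity of $Z$ is also given as a hypothesis, and it follows from Jensen's inequality under concavity of $u_k$. Write $\mu\coloneqq\mathbb{E}[Z]$, and take $t$ with $0\le t\le \mu$ as in the statement.

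I will use the additive form of Paley--Zygmund rather than the multiplicative one, since the threshold $t$ is not a fraction of $\mu$. Split
\[
\mu=\mathbb{E}[Z\mathbf{1}\{Z\le t\}]+\mathbb{E}[Z\mathbf{1}\{Z>t\}]\le t+\mathbb{E}[Z\mathbf{1}\{Z>t\}].
\]
Apply Cauchy--Schwarz to the last term: $\mathbb{E}[Z\mathbf{1}\{Z>t\}]\le \sqrt{\mathbb{E}[Z^2]}\sqrt{\Pr[Z>t]}$. This gives
\[
\Pr[Z>t]\ \ge\ \frac{(\mu-t)^2}{\mathbb{E}[Z^2]}\ \ge\ \frac{(\mu-t)^2}{M_J^2},
\]
where the second step uses $\mathbb{E}[Z^2]\le M_J^2$. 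I will read the displayed bound in the statement, $(\mathbb{E}^2[\mathcal{J}_k]-t)/M_J^2$ with its unbalanced parenthesis, as $(\mathbb{E}[\mathcal{J}_k]-t)^2/M_J^2$. That is what the argument yields, and for $t=0$ it coincides with $\mathbb{E}^2[\mathcal{J}_k]/M_J^2$. The non-zero probability claimed in the sketch follows whenever $t<\mu$.

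The main obstacle is justifying $M_J\le 1$, which is sharper than the crude constant $2M_u$ from Proposition~\ref{app:jgap_dct}. I would argue directly on the gap: since $\sum_i s_i u_k(r_k^i)\ge \min_{r\in[0,1]}u_k(r)$, we get $0\le Z\le \max_{[0,1]}u_k-\min_{[0,1]}u_k$. Utilities are validation accuracies, or fitted surrogates of them restricted to $[0,1]$, so they take values in $[0,1]$, and the oscillation is therefore at most $1$.

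The hypothesis $n_s\le N$ appears to be used only to keep the constraint set of Optimization~\ref{main_opt} non-empty: the column sums $n_s/N\le 1$ must be compatible with entries in $[0,1]$. This guarantees every $r_k^i\in[0,1]$, so that the range argument above applies. If the fitted quadratic exits $[0,1]$, I would instead state the bound with $M_J$ equal to the oscillation of $u_k$ on $[0,1]$, which does not change the structure of the proof.
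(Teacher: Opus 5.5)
Your argument is correct and is essentially the paper's: the paper applies Paley--Zygmund with $\beta = t/\mathbb{E}[\mathcal{J}_k(\mathbf{s})]$ and then bounds $\mathbb{E}[\mathcal{J}_k(\mathbf{s})^2]\le M_J^2$ via Proposition~\ref{app:jgap_dct}, and your split-plus-Cauchy--Schwarz step is just the proof of that inequality. Your version also gives the strict event $\mathcal{J}_k(\mathbf{s})>t$ that the statement asks for, whereas the paper's quoted form only gives $\ge t$.

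Your oscillation bound $\max_{[0,1]}u_k-\min_{[0,1]}u_k$ sharpens the paper's $M_J=2M_u$, and it is what actually supports $M_J\le 1$ for $[0,1]$-valued utilities. One correction: $r_k^i\in[0,1]$ already follows from $\mathbf{R}\succeq 0$ with unit row sums, and the all-$1/N$ matrix is always feasible. So $n_s\le N$ is not what guarantees this, and that hypothesis plays no role in either proof.
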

\begin{proof}

By Paley-Zygmund inequality we have the following for any positive random variable $X$ and a constant $\beta$ we have the following

\begin{equation}
Pr[X \ge \beta \mathbb{E}[X]] \ge (1-\beta)^2 \frac{\mathbb{E}^2[X]}{\mathbb{E}[X^2]}
\label{pz_ineq}
\end{equation}

We note that from the proof of the proposition we have the following

\begin{equation}
{\mathcal{J}_k}(\mathbf{s}) = a_k (\mathbf{s}^T(\mathbf{r}_k)^2 - \mathbf{r}_k^{T} \mathbf{s}\mathbf{s}^T   \mathbf{r}_k)
\end{equation}

Since this is a Jensen's gap and its non negative and its a random variable, and by assumption, we have the 
$\frac{t}{\mathbb{E}[\mathcal{J}_k(\mathbf{s})]} \leq 1$

Now set $X = \mathcal{J}_k(\mathbf{s})$ and $\beta = \frac{t}{\mathbb{E}[\mathcal{J}_k(\mathbf{s})]}$ and substitute in the Eq.~\ref{pz_ineq}.

\begin{equation}
Pr[ \mathcal{J}_k(\mathbf{s}) \ge t ] \ge \frac{(\mathbb{E}[\mathcal{J}_k(\mathbf{s})]-t)^2} {\mathbb{E}[{\mathcal{J}_k(\mathbf{s})]}^2}
\label{app:des_ineq}
\end{equation}

We note that minimum value of ${{\mathcal{J}_k}(\mathbf{s})}^2$ is $0$ and from the previous proposition it can be seen that ${{\mathcal{J}_k}(\mathbf{s})} \le M_J$.

Plugging the above in the Eq.~\ref{app:des_ineq} gives the desired result.

\end{proof}

\subsection{Assumptions for Convergence analysis}
We state our assumptions as below, these  assumptions are commonly made in the FL 
literature such as~\cite{jhunjhunwala2022fedvarp,nguyen2022federated}   
\begin{assumption}
$\lVert \nabla f_k(\mathbf{x}) - \nabla f_k(\mathbf{y}) \rVert \leq L \lVert \mathbf{x} - \mathbf{y} \rVert$  
\label{assm:1}
\end{assumption}
\vspace{-0.1in}
This assumption implies that the loss function of the client $k$, i.e, $f_k(.)$, is Lipschitz Smooth.
\begin{assumption}
$\mathbb{E}_{\xi} {\lVert  \nabla f_k(\mathbf{w,\xi}) - \nabla{f_k( \boldsymbol{\theta})}  \rVert}^2 \leq \sigma ^2$ for all $i\in[N]$ and
$\mathbb{E}_{\xi}[\nabla f_k( \boldsymbol{\theta},\xi)] = \nabla f_k( \boldsymbol{\theta}) $
\label{assm:2}
\end{assumption}
\vspace{-0.1in}
This implies that the variance of the gradients is bounded and the gradients are unbiased.
\begin{assumption}
${\lVert \nabla{f_k( \boldsymbol{\theta})}  - \nabla{f\mathbf{(w})} \rVert}^2 \leq \sigma^2_g$. 
\label{assm:3}
\end{assumption}
\vspace{-0.1in}
This implies that local and global gradients are bounded, commonly referred to as bounded gradient dissimilarity assumption. 
\begin{assumption}
$\lVert \nabla{f_k( \boldsymbol{\theta})}\rVert^2 \leq G$ for all $i \in [N] $. 
\label{assm:4}
\end{assumption}
\vspace{-0.1in}
This assumption implies that local gradients are bounded. This is the same as loss functions being Lipschitz continuous and differentiable~\cite{nguyen2022federated}.
\begin{assumption}
The staleness of the client model at the server is bounded by a maximum delay of $\nu_{max}$.
\label{assm:5}
\end{assumption}

\begin{assumption}
Let $S_t$ denote the number of clients participating in the round $t$. The we assume $1\le S \le |S_t| \le N$.
\end{assumption}

The above assumption implies that while the exact subset of participating clients dynamically changes based on the state-dependent mechanism, the number of active clients per round is strictly bounded below by $S$.

\subsection{Update rules for the FedUCA}

The global objective function is $ f ( \boldsymbol{\theta})=\sum_{k=1}^N \beta_kf_k( \boldsymbol{\theta})$. 


We further define the following two notions:

\begin{equation}
\text{Averaged stochastic Gradient:}\quad 
\mathbf{d}_k^t=\frac{1}{\tau} \sum_{r=0}^{\tau-1} \nabla f_k( \boldsymbol{\theta}_k^{(t,r)},\xi_k^{t,r}),
\end{equation}
\begin{equation}
\text{Averaged Gradient:} \quad
\mathbf{h}_k^t=\frac{1}{\tau} \sum_{r=0}^{\tau-1} \nabla f _k( \boldsymbol{\theta}_k^{(t,r)}).
\end{equation}

\begin{equation}
\text{Client Update:}\quad 
 \boldsymbol{\theta}_k^{t,\tau} =  \boldsymbol{\theta}^{t,0} - \eta_c\tau \mathbf{d}_k^t
\label{local_update_eq}
\end{equation}

Then, the update of the global model between two rounds is as follows:
\begin{align}
 \boldsymbol{\theta}^{t+1,0} 
&=  \boldsymbol{\theta}^{t,0} - \eta \tau \left[  \sum_{k\in S_t} \alpha^t_k\mathbf{d}_k^t  +  \sum_{k\in S_t^c} \alpha^t_k \mathbf{y}_k^t\right] 
\label{last_up_eq}
\end{align}

We further define the total variance $\sigma^2_{total}$ as 
\begin{equation}
\sigma^2_{total}\coloneqq\sigma^2+\sigma^2_g+G
\label{tvar}
\end{equation}

In the above Eq.~\ref{last_up_eq} $\mathbf{y}_k^t \coloneqq \mathbf{d}_k^{t-\nu_k^t}$ and $\eta \coloneqq \eta_s\eta_c$. In the $\nu_k^t$ is the stale update of the client $k$ relative to the current round $t$.

Our analysis utilizes the proof techniques of the FedBuff~\cite{nguyen2022federated}.

\subsection{Convergence Analysis}
\label{conv_analysis}

\begin{lemma}[Expected-Aggregation Discrepancy]
\label{lem:agg_discrepancy}
Under the constraints of Optimization~\ref{main_opt}, if the strategies $s$ are drawn from a symmetric Dirichlet distribution $Dir(\delta \mathbf{1}_{n_s})$, the expected aggregation discrepancy is strictly bounded by:
\begin{equation}
\mathbb{E}_s||\frac{1}{N}\mathbf{1}_{N} - \boldsymbol{\alpha}^t||^2 \le \frac{n_s - 1}{N(n_s \delta + 1)},
\end{equation}

Further $\mathcal{E}_{agg} \leq  C\frac{n_s-1}{S(n_s\delta+1)}$, where  $\mathcal{E}_{agg} = \frac{12 K N G}{S} \big( \frac{1}{T} \sum_{t=0}^{T-1} \mathbb{E} \| \boldsymbol{\beta} - \boldsymbol{\alpha}^t \|^2 \big)$.

\end{lemma}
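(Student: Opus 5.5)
The plan is to express the discrepancy as a linear image of the centred mixture vector $\mathbf{s}-\frac{1}{n_s}\mathbf{1}_{n_s}$, bound that linear map uniformly over the feasible set of Optimization~\ref{main_opt}, and then use the second moments of the symmetric Dirichlet.

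\textbf{Step 1 (centring).} By definition $\alpha_k^t=\sum_i s_i r_k^i$, so $\boldsymbol{\alpha}^t=\mathbf{R}^T\mathbf{s}$, where $\mathbf{R}\in\mathbb{R}^{n_s\times N}$ is the strategy matrix of Optimization~\ref{main_opt}. The column constraint $\mathbf{1}_{n_s}^T\mathbf{R}=\frac{n_s}{N}\mathbf{1}_N^T$ gives
\[
\mathbf{R}^T\bigl(\tfrac{1}{n_s}\mathbf{1}_{n_s}\bigr)=\tfrac{1}{N}\mathbf{1}_N .
\]
Hence
\[
\boldsymbol{\alpha}^t-\tfrac{1}{N}\mathbf{1}_N=\mathbf{R}^T\bigl(\mathbf{s}-\tfrac{1}{n_s}\mathbf{1}_{n_s}\bigr).
\]

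\textbf{Step 2 (uniform operator-norm bound).} For any feasible $\mathbf{R}$ we have $\mathbf{R}\succeq 0$, every row sums to $1$, and every column sums to $n_s/N$. By the Schur/Hölder interpolation bound $\|\mathbf{R}^T\|_2^2\le \|\mathbf{R}^T\|_1\|\mathbf{R}^T\|_\infty$, this gives
\[
\|\mathbf{R}^T\|_2^2\le \Bigl(\max_i \textstyle\sum_k r_k^i\Bigr)\Bigl(\max_k \textstyle\sum_i r_k^i\Bigr)=1\cdot\frac{n_s}{N}.
\]
Therefore, deterministically,
\[
\bigl\|\tfrac{1}{N}\mathbf{1}_N-\boldsymbol{\alpha}^t\bigr\|^2\le \frac{n_s}{N}\,\bigl\|\mathbf{s}-\tfrac{1}{n_s}\mathbf{1}_{n_s}\bigr\|^2 .
\]
This step is the one I expect to need the most care. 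The optimizer $\mathbf{R}^*$ is computed after $\mathbf{s}$ is sampled, so it depends on $\mathbf{s}$, and one cannot simply take expectations through $\mathbf{R}$. The point of bounding the operator norm uniformly over the whole feasible polytope is that it removes this dependence entirely: no independence between $\mathbf{R}$ and $\mathbf{s}$ is needed.

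\textbf{Step 3 (Dirichlet moments).} For $\mathbf{s}\sim \mathrm{Dir}(\delta\mathbf{1}_{n_s})$ we have $\mathbb{E}s_i=1/n_s$ and
\[
\mathrm{Var}(s_i)=\frac{(n_s-1)}{n_s^2(n_s\delta+1)}.
\]
Summing over $i$ gives
\[
\mathbb{E}\bigl\|\mathbf{s}-\tfrac{1}{n_s}\mathbf{1}\bigr\|^2=\frac{n_s-1}{n_s(n_s\delta+1)}.
\]
Multiplying by $n_s/N$ from Step 2 yields the claimed bound $\frac{n_s-1}{N(n_s\delta+1)}$.

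\textbf{Step 4 (the $\mathcal{E}_{agg}$ bound).} The per-round bound holds conditionally on the history $H_t$, because $\mathbf{s}^t$ is a fresh Dirichlet draw in each round and Step 2 is deterministic. By the tower property it therefore holds for $\mathbb{E}\|\boldsymbol{\beta}-\boldsymbol{\alpha}^t\|^2$ with $\boldsymbol{\beta}=\frac1N\mathbf{1}_N$, and so for the time average as well. Substituting into the definition of $\mathcal{E}_{agg}$ gives
\[
\mathcal{E}_{agg}\le \frac{12KNG}{S}\cdot\frac{n_s-1}{N(n_s\delta+1)}=C\,\frac{n_s-1}{S(n_s\delta+1)},\qquad C=12KG .
\]
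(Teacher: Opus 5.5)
Your proposal is correct and follows the paper's proof essentially step for step. The shared steps are: centring via the column-sum constraint, the bound $\|\mathbf{R}^T\|_2^2\le\|\mathbf{R}^T\|_1\|\mathbf{R}^T\|_\infty = n_s/N$, the trace of the symmetric Dirichlet covariance, and substitution into $\mathcal{E}_{agg}$ with $C=12KG$. Your remark that the operator-norm bound is uniform over the feasible polytope, so the dependence of $\mathbf{R}^*$ on $\mathbf{s}$ is harmless, usefully sharpens the paper's wording: the paper calls $\mathbf{R}^T(\mathbf{s}-\mathbb{E}[\mathbf{s}])$ a zero-mean perturbation, which need not hold when $\mathbf{R}$ is chosen after $\mathbf{s}$, though its bound never actually uses that claim.
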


\begin{proof}

From the constraints of Optimization~\ref{main_opt}, the strategy matrix $\mathbf{R}$ satisfies the column-sum constraint $\mathbf{1}_{n_s}^T \mathbf{R} = \frac{n_s}{N} \mathbf{1}_N^T$. Taking the transpose yields $\mathbf{R}^T \mathbf{1}_{n_s} = \frac{n_s}{N} \mathbf{1}_N$. 
Since the mean of a symmetric Dirichlet distribution is $\mathbb{E}[\mathbf{s}] = \frac{1}{n_s}\mathbf{1}_{n_s}$, mapping this expected strategy through {any} feasible matrix $\mathbf{R}$ deterministically recovers the uniform target:
\begin{equation}
\mathbf{R}^T \mathbb{E}[\mathbf{s}] = \mathbf{R}^T \left(\frac{1}{n_s}\mathbf{1}_{n_s}\right) = \frac{1}{n_s}\left(\frac{n_s}{N}\mathbf{1}_N\right) = \frac{1}{N}\mathbf{1}_N = \frac{1}{N}\mathbf{1}_N
\end{equation}
Consequently, the realized aggregation weights $\boldsymbol{\alpha}^t = \mathbf{R}^T s$ can be expressed as a zero-mean perturbation around the ideal uniform target:
\begin{equation}
\boldsymbol{\alpha}^t = \mathbf{R}^T \mathbf{s} = \mathbf{R}^T(\mathbb{E}[\mathbf{s}] + (\mathbf{s} - \mathbb{E}[\mathbf{s}])) = \frac{1}{N}\mathbf{1}_N + \mathbf{R}^T(\mathbf{s} - \mathbb{E}[\mathbf{s}])
\end{equation}
Thus, the discrepancy norm simplifies exactly to $||\frac{1}{N}\mathbf{1}_N - \boldsymbol{\alpha}^t||_2^2 = ||\mathbf{R}^T(\mathbf{s} - \mathbb{E}[\mathbf{s}])||_2^2$.

Applying the sub-multiplicative property of induced matrix norms, we bound the discrepancy:
\begin{equation}
||\mathbf{R}^T(\mathbf{s} - \mathbb{E}[\mathbf{s}])||_2^2 \le ||\mathbf{R}^T||_2^2 \cdot ||\mathbf{s} - \mathbb{E}[\mathbf{s}]||_2^2
\end{equation}
Because $\mathbf{R}$ contains strictly non-negative elements, we bound its spectral norm ($||\mathbf{R}^T||_2^2$) using the inequality $||\mathbf{A}||_2^2 \le ||\mathbf{A}||_1 ||\mathbf{A}||_\infty$:
\begin{itemize}
    \item $||\mathbf{R}^T||_1 = \max_j \sum_i \mathbf{R}_{i,j} = \max (\text{row sums of } \mathbf{R}) = 1$.
    \item $||\mathbf{R}^T||_\infty = \max_i \sum_j \mathbf{R}_{i,j} = \max (\text{col sums of } \mathbf{R}) = \frac{n_s}{N}$.
\end{itemize}
Therefore, for \textit{any} feasible optimization outcome, the spectral norm is deterministically bounded by $||\mathbf{R}^T||_2^2 \le \frac{n_s}{N}$.

Taking the expectation over the Dirichlet distribution $\mathbf{s}$:
\begin{equation}
\mathbb{E}_\mathbf{s}||\frac{1}{N}\mathbf{1}_N - \boldsymbol{\alpha}^t||_2^2 \le \frac{n_s}{N} \mathbb{E}_\mathbf{s}||\mathbf{s} - \mathbb{E}[\mathbf{s}]||_2^2
\end{equation}
The expectation term is exactly the trace of the Dirichlet covariance matrix $\Sigma_s$. For $\mathbf{s} \sim Dir(\delta \mathbf{1}_{n_s})$, the trace is known analytically:
\begin{equation}
\text{Tr}(\Sigma_s) = \frac{n_s - 1}{n_s(n_s \delta + 1)}
\end{equation}
Substituting this into the inequality yields the final bound:
\begin{equation}
\mathbb{E}_s||\frac{1}{N}\mathbf{1}_N - \boldsymbol{\alpha}^t||_2^2 \le \frac{n_s}{N} \left( \frac{n_s - 1}{n_s(n_s \delta + 1)} \right) = \frac{n_s - 1}{N(n_s \delta + 1)}
\end{equation}

Finally substituting this, in $\mathcal{E}_{agg}$ gives the bound  

$\mathcal{E}_{agg} \leq C\frac{n_s-1}{S(n_s\delta+1)}$, where $C = 12KG$.
\end{proof}

\begin{lemma}[Point-wise Aggregation Discrepancy Bound]
\label{lem:as_agg_discrepancy}
Under the constraints of Optimization.~\ref{main_opt} with $\beta_k = 1/N$, $n_s > 1$ the FedUCA aggregation weights satisfy
\begin{equation}
    \|\boldsymbol{\beta} - \boldsymbol{\alpha}^t\|^2 \le \frac{n_s - 1}{N} \quad \text{almost surely, for every } t.
\end{equation}
In particular, $\mathcal{E}_{\text{agg}} \le  O\left(\frac{n_s - 1}{S}\right)$ for a constant $C>0$, where $\mathcal{E}_{agg} \coloneqq \frac{12 K N G}{S} \big( \frac{1}{T} \sum_{t=0}^{T-1} \mathbb{E} \| \boldsymbol{\beta} - \boldsymbol{\alpha}^t \|^2 \big)$
\end{lemma}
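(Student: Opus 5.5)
We prove the almost-sure bound with the same decomposition as in Lemma~\ref{lem:agg_discrepancy}, bounding the Dirichlet deviation pointwise instead of in expectation. The feasibility constraints of Optimization~\ref{main_opt} give $\mathbf{R}^T\mathbf{1}_{n_s} = \frac{n_s}{N}\mathbf{1}_N$. Set $\bar{\mathbf{s}} \coloneqq \frac{1}{n_s}\mathbf{1}_{n_s}$. Then $\mathbf{R}^T\bar{\mathbf{s}} = \frac{1}{N}\mathbf{1}_N = \boldsymbol{\beta}$ for every feasible $\mathbf{R}$. Hence, for every realization of $\mathbf{s}$ and every round $t$,
\[
\boldsymbol{\beta} - \boldsymbol{\alpha}^t = \mathbf{R}^T(\bar{\mathbf{s}} - \mathbf{s}),
\]
with no expectation taken. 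The fallback mechanism also outputs a matrix satisfying the constraints of $\mathbf{R}$, so this identity holds in every round.

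Next we bound the operator norm exactly as before. Since $\mathbf{R} \ge 0$, has unit row sums and column sums $n_s/N$, the inequality $\|\mathbf{R}^T\|_2^2 \le \|\mathbf{R}^T\|_1\|\mathbf{R}^T\|_\infty$ gives $\|\mathbf{R}^T\|_2^2 \le \frac{n_s}{N}$. Therefore
\[
\|\boldsymbol{\beta}-\boldsymbol{\alpha}^t\|^2 \le \frac{n_s}{N}\,\|\mathbf{s}-\bar{\mathbf{s}}\|^2 .
\]

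The remaining step, and the only new ingredient, is a deterministic bound on $\|\mathbf{s}-\bar{\mathbf{s}}\|^2$ over the simplex. Expanding,
\[
\|\mathbf{s}-\bar{\mathbf{s}}\|^2 = \|\mathbf{s}\|^2 - \frac{2}{n_s}\sum_i s_i + \frac{1}{n_s} = \|\mathbf{s}\|^2 - \frac{1}{n_s}.
\]
For $\mathbf{s}$ in the simplex, $\|\mathbf{s}\|^2 \le \|\mathbf{s}\|_1\|\mathbf{s}\|_\infty \le 1$, with equality exactly at the vertices. This gives $\|\mathbf{s}-\bar{\mathbf{s}}\|^2 \le \frac{n_s-1}{n_s}$, attained at a vertex. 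Combining,
\[
\|\boldsymbol{\beta}-\boldsymbol{\alpha}^t\|^2 \le \frac{n_s}{N}\cdot\frac{n_s-1}{n_s} = \frac{n_s-1}{N}
\]
surely, and in particular almost surely. This bound holds for every $\mathbf{s}$ in the simplex, so it covers Dirichlet draws of any concentration $\delta$. The hypothesis $n_s>1$ only ensures the bound is nontrivial; for $n_s=1$ the same argument gives zero discrepancy.

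For the consequence, substitute the pointwise bound into the definition of $\mathcal{E}_{\mathrm{agg}}$. Each summand satisfies $\mathbb{E}\|\boldsymbol{\beta}-\boldsymbol{\alpha}^t\|^2 \le \frac{n_s-1}{N}$, so the time average obeys the same bound. Hence
\[
\mathcal{E}_{\mathrm{agg}} \le \frac{12KNG}{S}\cdot\frac{n_s-1}{N} = \frac{12KG(n_s-1)}{S} = O\!\left(\frac{n_s-1}{S}\right),
\]
with $C = 12KG$.

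No step is a real obstacle. The only points needing care are that the row-sum and column-sum constraints, and hence the identity and the norm bound, must hold for whatever $\mathbf{R}$ the solver or fallback returns in each round, and that the simplex maximization is done pointwise rather than via the Dirichlet covariance trace.
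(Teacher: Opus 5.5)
Your proof is correct, but it takes a different route from the paper's. The paper never factors through $\mathbf{R}$ and $\mathbf{s}$. It uses only two facts: $\boldsymbol{\alpha}^t$ lies in the probability simplex, and every coordinate lies in $[0,n_s/N]$, since each $\alpha_k^t$ is a convex combination of entries $r_k^i\le n_s/N$. It then applies the scalar inequality $(x-\mu)^2\le (M-2\mu)x+\mu^2$ for $x\in[0,M]$, which follows from $x^2\le Mx$, with $M=n_s/N$ and $\mu=1/N$, and sums over $k$ using $\sum_k\alpha_k^t=1$. In effect it maximizes $\|\boldsymbol{\alpha}-\boldsymbol{\beta}\|^2$ over the capped simplex, which is fully elementary and needs no operator-norm inequality.

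You instead reuse the factorization behind Lemma~\ref{lem:agg_discrepancy}, $\|\boldsymbol{\beta}-\boldsymbol{\alpha}^t\|^2\le \frac{n_s}{N}\|\mathbf{s}-\bar{\mathbf{s}}\|^2$, and replace the Dirichlet covariance trace by the maximum of $\|\mathbf{s}-\bar{\mathbf{s}}\|^2$ over the simplex. This unifies the two lemmas: they differ only in whether $\|\mathbf{s}-\bar{\mathbf{s}}\|^2$ is averaged under $\mathrm{Dir}(\delta)$ or maximized. So the pointwise constant is exactly the $\delta\to0$ limit of the expected one. Your intermediate inequality also gives an $\mathbf{s}$-dependent bound that is far sharper than the worst case when the sampled mixture is near uniform. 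Both routes end at the same constant $(n_s-1)/N$ and the same $\mathcal{E}_{\mathrm{agg}}\le 12KG(n_s-1)/S$.

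One small aside, outside the lemma's hypothesis: your remark that $n_s=1$ gives zero discrepancy holds only if the column-sum constraint is retained. The paper's deterministic $n_s=1$ variant explicitly drops that constraint, so the identity $\mathbf{R}^T\bar{\mathbf{s}}=\boldsymbol{\beta}$ is not available there.
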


\begin{proof}
For any $x\in[0,M]$ and any $\mu\in[0,M]$, we have
\[
(x-\mu)^2=x^2-2\mu x+\mu^2 .
\]
Since $0\le x\le M$, it follows that $x^2\le Mx$. Hence,
\[
(x-\mu)^2
\le Mx-2\mu x+\mu^2
=
(M-2\mu)x+\mu^2 .
\]
Taking $M=\frac{n_s}{N}$ and $\mu=\frac{1}{N}$, we obtain
\[
\left(\alpha_k-\frac{1}{N}\right)^2
\le
\left(\frac{n_s}{N}-\frac{2}{N}\right)\alpha_k
+
\frac{1}{N^2}
=
\frac{n_s-2}{N}\alpha_k+\frac{1}{N^2}.
\]
Summing over $k\in[N]$ and using $\sum_{k=1}^{N}\alpha_k=1$ gives
\[
\sum_{k=1}^{N}
\left(\alpha_k-\frac{1}{N}\right)^2
\le
\frac{n_s-2}{N}\sum_{k=1}^{N}\alpha_k
+
\frac{N}{N^2}
=
\frac{n_s-2}{N}+\frac{1}{N}
=
\frac{n_s-1}{N}.
\]
This implies that
\[
\mathcal{E}_{\mathrm{agg}}
\le
12KG\left(\frac{n_s-1}{S}\right).
\]
\end{proof}

\begin{lemma}
The gradients of the client's loss $\nabla{f_k( \boldsymbol{\theta}_k^{t,r};\xi_k^{t,r})}$ satisfy the bound 
\begin{align*}
\mathbb{E}{\Bigl\|  \nabla{f_k( \boldsymbol{\theta}_k^{t,r};\xi_k^{t,r})} \Bigr\|}^2 \leq 3\sigma^2_{total}
\end{align*}
\label{sec:local_var_lemma}
\end{lemma}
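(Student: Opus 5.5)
We have to bound $\mathbb{E}\|\nabla f_k(\boldsymbol{\theta}_k^{t,r};\xi_k^{t,r})\|^2 \le 3\sigma^2_{total}$, where $\sigma^2_{total}=\sigma^2+\sigma^2_g+G$ as in Eq.~\ref{tvar}. The plan is to split the stochastic gradient into three pieces, each controlled by exactly one assumption, and then combine them with the elementary inequality $\|\mathbf{a}+\mathbf{b}+\mathbf{c}\|^2\le 3(\|\mathbf{a}\|^2+\|\mathbf{b}\|^2+\|\mathbf{c}\|^2)$, which is Jensen's inequality for the squared norm. Concretely, we write
\begin{equation*}
\nabla f_k(\boldsymbol{\theta}_k^{t,r};\xi_k^{t,r})
= \bigl(\nabla f_k(\boldsymbol{\theta}_k^{t,r};\xi_k^{t,r})-\nabla f_k(\boldsymbol{\theta}_k^{t,r})\bigr)
+ \bigl(\nabla f_k(\boldsymbol{\theta}_k^{t,r})-\nabla f(\boldsymbol{\theta}_k^{t,r})\bigr)
+ \nabla f(\boldsymbol{\theta}_k^{t,r}).
\end{equation*}

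We then bound the three terms in order.

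\textbf{First term.} We condition on the history up to local step $r$, so that $\boldsymbol{\theta}_k^{t,r}$ is fixed. The fresh sample $\xi_k^{t,r}$ is then independent of that history, and Assumption~\ref{assm:2} gives a bound of $\sigma^2$. Taking the outer expectation through the tower property preserves this bound.

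\textbf{Second term.} Assumption~\ref{assm:3} applies pointwise at every parameter value, so this term is at most $\sigma_g^2$.

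\textbf{Third term.} Since $f=\sum_j\beta_j f_j$ with $\boldsymbol{\beta}$ in the simplex, convexity of $\|\cdot\|^2$ gives
\begin{equation*}
\|\nabla f(\boldsymbol{\theta})\|^2\le\sum_j\beta_j\|\nabla f_j(\boldsymbol{\theta})\|^2\le G
\end{equation*}
by Assumption~\ref{assm:4}.

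Summing the three bounds yields $3(\sigma^2+\sigma_g^2+G)=3\sigma^2_{total}$.

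There is no real obstacle here. The only care needed is in the first term: the variance bound must be applied conditionally on $\boldsymbol{\theta}_k^{t,r}$, because that iterate is random and depends on earlier samples. This is a standard tower-property argument. Note also that the cross terms are never expanded; the factor $3$ absorbs them. This is why the bound is $3\sigma^2_{total}$ rather than the sharper $\sigma^2+\|\nabla f_k\|^2\le\sigma^2+G$, which one could obtain by using unbiasedness to drop the cross term and then bounding $\|\nabla f_k\|^2\le G$ directly. Either route establishes the stated inequality.
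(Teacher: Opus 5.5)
Your proof is correct and is essentially the argument the paper defers to (the bounded-local-gradient lemma of FedBuff, which it cites instead of proving): split the stochastic gradient into sampling noise, client--global discrepancy, and the global gradient, apply $\|\mathbf{a}+\mathbf{b}+\mathbf{c}\|^2\le 3(\|\mathbf{a}\|^2+\|\mathbf{b}\|^2+\|\mathbf{c}\|^2)$, and bound the three pieces by $\sigma^2$, $\sigma_g^2$, and $G$. Your convexity step $\|\nabla f\|^2\le\sum_j\beta_j\|\nabla f_j\|^2\le G$ correctly carries that argument over to the $\boldsymbol{\beta}$-weighted objective, and your remark that unbiasedness alone gives the sharper bound $\sigma^2+G$ is also right.
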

\begin{proof}
The  proof is provided in the~\cite{nguyen2022federated}
\end{proof}

\begin{lemma}
The iterates generated by FedUCA $ \boldsymbol{\theta}^t$ satisfy the following\\
\begin{align*}
\mathbb{E}_{\xi_t}{\Bigl\|  \nabla{f_k( \boldsymbol{\theta}^t)- \mathbf{h}_k^t} \Bigr\|}^2
&\leq {3\eta_{c}^2L^2} \sigma^2_{total} \tau (\tau-1)
\end{align*}
\label{sec:lm2}
\end{lemma}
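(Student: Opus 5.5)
We will prove the statement, which is the standard local-drift bound. The proof rests on one observation: every client in round $t$ starts its local SGD from the same point $\boldsymbol{\theta}^{t,0}=\boldsymbol{\theta}^t$. The local iterates evolve as $\boldsymbol{\theta}_k^{t,r}=\boldsymbol{\theta}^{t}-\eta_c\sum_{j=0}^{r-1}\nabla f_k(\boldsymbol{\theta}_k^{t,j};\xi_k^{t,j})$.

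\textbf{Step 1: reduce to iterate distances.} Since $\mathbf{h}_k^t=\frac{1}{\tau}\sum_{r=0}^{\tau-1}\nabla f_k(\boldsymbol{\theta}_k^{(t,r)})$ and $\nabla f_k(\boldsymbol{\theta}^t)=\frac1\tau\sum_{r=0}^{\tau-1}\nabla f_k(\boldsymbol{\theta}^t)$, Jensen's inequality (convexity of $\|\cdot\|^2$) gives
\[
\Bigl\|\nabla f_k(\boldsymbol{\theta}^t)-\mathbf{h}_k^t\Bigr\|^2\le \frac{1}{\tau}\sum_{r=0}^{\tau-1}\Bigl\|\nabla f_k(\boldsymbol{\theta}^t)-\nabla f_k(\boldsymbol{\theta}_k^{t,r})\Bigr\|^2 .
\]
Applying Assumption~\ref{assm:1} to each summand bounds it by $L^2\|\boldsymbol{\theta}^t-\boldsymbol{\theta}_k^{t,r}\|^2$.

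\textbf{Step 2: bound each iterate distance.} We have $\boldsymbol{\theta}^t-\boldsymbol{\theta}_k^{t,r}=\eta_c\sum_{j=0}^{r-1}\nabla f_k(\boldsymbol{\theta}_k^{t,j};\xi_k^{t,j})$. By Cauchy--Schwarz, $\|\sum_{j<r}a_j\|^2\le r\sum_{j<r}\|a_j\|^2$. Taking expectation over $\xi_t$ and invoking Lemma~\ref{sec:local_var_lemma} then yields
\[
\mathbb{E}\|\boldsymbol{\theta}^t-\boldsymbol{\theta}_k^{t,r}\|^2\le 3\eta_c^2 r^2\sigma^2_{total}.
\]
The stochastic gradients are not independent of the iterates, so this crude Cauchy--Schwarz route is deliberate: it avoids any variance-splitting argument.

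\textbf{Step 3: sum over local steps.} Combining the first two steps,
\[
\mathbb{E}\Bigl\|\nabla f_k(\boldsymbol{\theta}^t)-\mathbf{h}_k^t\Bigr\|^2\le\frac{3\eta_c^2L^2\sigma_{total}^2}{\tau}\sum_{r=0}^{\tau-1}r^2 .
\]
The constant must be matched to the claimed $\tau(\tau-1)$. The crude count gives $\frac1\tau\sum_{r<\tau} r^2=\frac{(\tau-1)(2\tau-1)}{6}$, and this is at most $\tau(\tau-1)$ because $2\tau-1\le 6\tau$. The stated bound therefore follows, with room to spare.

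\textbf{Main obstacle.} The only delicate point is the validity of Lemma~\ref{sec:local_var_lemma} along the local trajectory. We take it as given, as the paper does, following \citep{nguyen2022federated}. We also need to check that the rationality branch, in which a client may initialize from its own $\boldsymbol{\theta}_k^{t-1}$, does not break the common starting point. The lemma concerns the update in Eq.~\ref{local_update_eq}, which starts from $\boldsymbol{\theta}^{t,0}$, so we state the bound for that trajectory. Non-participating clients enter the analysis only through stale trackers, which are handled separately.
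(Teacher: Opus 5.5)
Your proof is correct and follows the paper's argument step for step: Jensen over the $\tau$ local steps, $L$-smoothness, unrolling the local iterates from the common starting point $\boldsymbol{\theta}^t$, Cauchy--Schwarz, Lemma~\ref{sec:local_var_lemma}, and then bounding $\frac{1}{\tau}\sum_{r=0}^{\tau-1} r^2$ by $\tau(\tau-1)$. Your explicit value $\frac{(\tau-1)(2\tau-1)}{6}$ for that sum simply justifies the final inequality, which the paper asserts without computation.
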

\begin{proof}

\begin{align*}
 \mathbb{E}_{\xi_t}\Bigl\|
\nabla f_k( \boldsymbol{\theta}^t) - \mathbf{h}_k^t
\Bigr\|^2
&=
\mathbb{E}_{\xi_t}\Bigl\|
\frac{1}{\tau} \sum_{r=0}^{\tau-1} \nabla f_k( \boldsymbol{\theta}^t)
-\nabla f_k( \boldsymbol{\theta}_k^{(t,r)})
\Bigr\|^2\\
& \leq 
\frac{1}{\tau}\sum_{r=0}^{\tau-1} \mathbb{E}_{\xi_t}\Bigl\| \nabla f_k( \boldsymbol{\theta}^t)
-\nabla f_k( \boldsymbol{\theta}_k^{(t,r)})
\Bigr\|^2 \\
& \overset{{a}}{\leq}  \frac{L^2}{\tau}\sum_{r=0}^{\tau-1} \mathbb{E}_{\xi_t}\Bigl\|   \boldsymbol{\theta}^t-  \boldsymbol{\theta}_k^{(t,r)}\Bigr\|^2\\
&=  \frac{L^2 \eta_c^2}{\tau}\sum_{r=0}^{\tau-1} \mathbb{E}_{\xi_t} \Bigl\| \sum_{m=0}^{r-1} \nabla f_k( \boldsymbol{\theta}_k^{t,m} ;\xi_k^{t,m})  \Bigr\|^2\\
& \overset{{b}}{\leq} \frac{L^2\eta_c^2}{\tau}\sum_{r=0}^{\tau-1} r  \sum_{m=0}^{r-1} \mathbb{E}_{\xi_t} \Bigl\| \nabla f_k( \boldsymbol{\theta}_k^{t,m} ;\xi_k^{t,m})  \Bigr\|^2\\
& \overset{{c}}{\leq} \frac{3L^2\eta_c^2}{\tau}\sum_{r=0}^{\tau-1} r^2 (\sigma^2 + \sigma^2_g + G)\\
&\leq 3L^2 \eta_c^2\sigma^2_{total} (\tau-1)\tau
\end{align*}

Here $a$ follows from the Assumption.~\ref{assm:1}. $b$ follows from the client update.~\ref{local_update_eq}, finally $c$ is via the Lemma.~\ref{sec:local_var_lemma}. The final result follows as $\alpha^t_k$ is simplex.
\end{proof}


Because the client's participation decision $I_k^t$ is evaluated deterministically based on the Optimization   parameters and rationality thresholds broadcasted in the history $H_t$, it is formally $H_t$-measurable. Consequently, $I_k^t$ behaves as a constant under the conditional expectation $\mathbb{E}[\cdot \mid H_t]$, rendering it independent of the specific   realization $\alpha_k^t$ for that round. This measurability allows us to treat the active cohort as a fixed set for the round $t$ analysis, cleanly isolating the stochasticity purely to the server's   draw.

\begin{lemma}
The iterates generated by FedUCA $ \boldsymbol{\theta}^t$ satisfy the following
\begin{align*}
   \mathbb{E} \Bigl\| \sum_{k=1}^{N} \alpha_k^t (I_k^t \mathbf{h}_k^t) \Bigr\|^2 
    &\leq  9L^2 \eta_c^2\sigma^2_{total} (\tau-1)\tau + 3 \sigma_g^2 + 3 \mathbb{E} \left[ \Bigl\| \nabla f( \boldsymbol{\theta}^{(t,0)}) \Bigr\|^2 \right]
\end{align*}
\label{sec:lm3}
\end{lemma}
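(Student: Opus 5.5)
The plan is to split each participating client's averaged gradient $\mathbf{h}_k^t$ into three pieces and control each piece with an earlier result. For every $k$ write
\[
\mathbf{h}_k^t = \big(\mathbf{h}_k^t - \nabla f_k(\boldsymbol{\theta}^{(t,0)})\big) + \big(\nabla f_k(\boldsymbol{\theta}^{(t,0)}) - \nabla f(\boldsymbol{\theta}^{(t,0)})\big) + \nabla f(\boldsymbol{\theta}^{(t,0)}).
\]
Multiply by $\alpha_k^t I_k^t$, sum over $k$, and apply $\|a+b+c\|^2\le 3\|a\|^2+3\|b\|^2+3\|c\|^2$. This yields three sums, each carrying the weights $w_k := \alpha_k^t I_k^t$.

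The key observation is that these weights are nonnegative and satisfy $\sum_k w_k \le \sum_k \alpha_k^t = 1$, since $\boldsymbol{\alpha}^t=\mathbf{R}^T\mathbf{s}$ with $\mathbf{R}$ row-stochastic and $\mathbf{s}$ in the simplex. As in the paragraph preceding the lemma, we condition on $H_t$, so that $I_k^t$ and $\alpha_k^t$ are fixed for the round. For a sub-probability weight vector, Jensen's inequality (Cauchy–Schwarz) gives
\[
\Big\|\sum_k w_k \mathbf{v}_k\Big\|^2 \le \Big(\sum_k w_k\Big)\sum_k w_k\|\mathbf{v}_k\|^2 \le \sum_k w_k \|\mathbf{v}_k\|^2 .
\]
We use this bound on each of the three sums in turn.

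\emph{Drift term.} Each summand satisfies $\mathbb{E}\|\nabla f_k(\boldsymbol{\theta}^{(t,0)})-\mathbf{h}_k^t\|^2 \le 3\eta_c^2L^2\sigma^2_{total}\tau(\tau-1)$ by Lemma~\ref{sec:lm2}. Since $\sum_k w_k \le 1$, the weighted sum is at most this quantity. With the factor $3$ from the splitting, this gives $9L^2\eta_c^2\sigma^2_{total}\tau(\tau-1)$.

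\emph{Heterogeneity term.} Assumption~\ref{assm:3} bounds each summand by $\sigma_g^2$, so this sum contributes $3\sigma_g^2$.

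\emph{Gradient term.} This sum equals $\big(\sum_k w_k\big)^2\|\nabla f(\boldsymbol{\theta}^{(t,0)})\|^2 \le \|\nabla f(\boldsymbol{\theta}^{(t,0)})\|^2$, which contributes $3\,\mathbb{E}\|\nabla f(\boldsymbol{\theta}^{(t,0)})\|^2$.

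Finally, take the total expectation over $H_t$ and over the draw of $\mathbf{s}$. Every bound above holds pointwise in $(\mathbf{s}, I^t)$, so the tower property completes the argument.

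The main obstacle is justifying that Lemma~\ref{sec:lm2} applies with $\boldsymbol{\theta}^t=\boldsymbol{\theta}^{(t,0)}$ for participating clients. These clients initialise local training at the global model, so the lemma does apply to them. Non-participating clients start from their own local models, which is why the indicator $I_k^t$ is essential. A secondary subtlety is that Assumption~\ref{assm:3} is stated relative to the unweighted $f$; we assume it holds for the $\boldsymbol{\beta}$-weighted $f$ as defined in the paper, which is the reading used in the statement.
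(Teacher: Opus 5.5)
Your proposal is correct and follows essentially the same route as the paper: the three-way split of $\mathbf{h}_k^t$, Lemma~\ref{sec:lm2} for the drift piece, Assumption~\ref{assm:3} for the heterogeneity piece, and $\sum_k I_k^t\alpha_k^t\le 1$ (with $I_k^t,\alpha_k^t$ treated as $H_t$-measurable) for the gradient piece. The only difference is the order of two steps: the paper first applies Jensen with the simplex weights $\alpha_k^t$ to push the norm inside and then splits each $\|\mathbf{h}_k^t\|^2$, whereas you split first and then apply Cauchy--Schwarz with the sub-probability weights $I_k^t\alpha_k^t$, which gives the same constants.
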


\begin{proof}
Let $V_{act} = \sum_{k=1}^N \alpha_k^t (I_k^t \mathbf{h}_k^t)$. Because the broadcasted   weights reside on the probability simplex ($\alpha_k^t \ge 0$, $\sum_{k=1}^N \alpha_k^t = 1$), we exploit the convexity of the squared $L_2$-norm. Applying Jensen's Inequality pushes the norm inside the summation:
\small{
\begin{align}
    \mathbb{E} \|V_{act}\|^2 &\le \mathbb{E} \left[ \sum_{k=1}^N \alpha_k^t \Bigl\| I_k^t \mathbf{h}_k^t \Bigr\|^2 \right] \label{eq:jensens_push}
\end{align}
}

Since $I_k^t \in \{0,1\}$, we have $\|I_k^t \mathbf{h}_k^t\|^2 = I_k^t \|\mathbf{h}_k^t\|^2$. By using the relaxed triangle inequality, we decompose this into the following
\small{
\begin{align}
    \mathbb{E} \|V_{act}\|^2 
    &\leq 3 \underbrace{\mathbb{E} \left[ \sum_{k=1}^{N} \alpha_k^t I_k^t \Bigl\| \mathbf{h}_k^t - \nabla f_k( \boldsymbol{\theta}^{(t,0)}) \Bigr\|^2 \right]}_{F_1} + 3 \underbrace{\mathbb{E} \left[ \sum_{k=1}^{N} \alpha_k^t I_k^t \Bigl\| \nabla f_k( \boldsymbol{\theta}^{(t,0)}) - \nabla f( \boldsymbol{\theta}^{(t,0)}) \Bigr\|^2 \right]}_{F_2} \notag \\
    &\quad + 3 \underbrace{\mathbb{E} \left[ \sum_{k=1}^{N} \alpha_k^t I_k^t \Bigl\| \nabla f( \boldsymbol{\theta}^{(t,0)}) \Bigr\|^2 \right]}_{F_3} \label{eq_F_normalized_split}
\end{align}
}

{
\small{
\begin{align}
    F_1 &= \mathbb{E} \left[ \sum_{k=1}^{N} I_k^t \alpha_k^t \Bigl\| \mathbf{h}_k^t - \nabla f_k( \boldsymbol{\theta}^{(t,0)}) \Bigr\|^2 \right] \notag \\
     &= \mathbb{E}_{H_t} \Bigg[ \sum_{k=1}^{N} \mathbb{E}_{\alpha_k^t \mid H_t} \big[ I_k^t \alpha_k^t \big]  \mathbb{E} \bigg[ \Bigl\| \mathbf{h}_k^t - \nabla f_k( \boldsymbol{\theta}^{(t,0)}) \Bigr\|^2 \mathrel{\Big|} H_t \bigg] \Bigg] \notag \\
    &\leq \Big( 3L^2 \eta_c^2\sigma^2_{total} (\tau-1)\tau \Big) \times \mathbb{E} \left[ \sum_{k=1}^{N} I_k^t \alpha_k^t \right] \notag \\
    &\leq 3L^2 \eta_c^2\sigma^2_{total} (\tau-1)\tau
    \label{eq_F1_norm}
\end{align}
}
}

The first equality follows via the conditional expectation. The second holds because the SGD path variance is independent of the   realization given $H_t$. The final inequality holds due to the simplex constraint, which guarantees $\sum I_k^t \alpha_k^t \le \sum \alpha_k^t = 1$.  

\begin{align}
    F_2 &= \mathbb{E} \left[ \sum_{k=1}^{N} I_k^t \alpha_k^t \Bigl\| \nabla f_k( \boldsymbol{\theta}^{(t,0)}) - \nabla f( \boldsymbol{\theta}^{(t,0)}) \Bigr\|^2 \right] \leq \sigma_g^2 \mathbb{E} \left[ \sum_{k=1}^{N} I_k^t \alpha_k^t \right]  \leq \sigma_g^2
    \label{eq_F2_norm}
\end{align}

\small{
\begin{align}
    F_3 &= \mathbb{E} \left[ \Bigl\| \nabla f( \boldsymbol{\theta}^{(t,0)}) \Bigr\|^2 \left( \sum_{k=1}^{N} I_k^t \alpha_k^t \right) \right] \notag \\
    &\leq \mathbb{E}\Bigl\| \nabla f( \boldsymbol{\theta}^{(t,0)}) \Bigr\|^2 \label{eq_F3_norm}
\end{align}
}
Combining $F_1,F_2, F_3$ we conclude the proof.
\end{proof}

\begin{lemma}
\begin{align}
\sum_{k=1}^N \mathbb{E} \Bigl\| \mathbf{h}_k^{t-\nu_k^t} \Bigr\|^2 \leq 3N L^2 \eta_c^2\sigma^2_{total} (\tau-1)\tau + 3N\sigma_g^2 + 3NG 
\end{align}
\label{sec:stale_h_norm}
\end{lemma}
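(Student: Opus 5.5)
We bound each summand $\mathbb{E}\|\mathbf{h}_k^{t-\nu_k^t}\|^2$ uniformly in $k$ and then sum over the $N$ clients, which produces the factor $N$ in every term.

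\textbf{Step 1 (decomposition).} Fix $k$ and write $t' = t-\nu_k^t$. By Assumption~\ref{assm:5}, $\nu_k^t \le \nu_{\max}$, so $t'$ is a well-defined earlier round. The staleness therefore only changes the round index, and every bound proved for a generic round $t$ transfers verbatim to round $t'$. Insert the global-model gradients and use the relaxed triangle inequality $\|a+b+c\|^2 \le 3\|a\|^2+3\|b\|^2+3\|c\|^2$:
\begin{align*}
\mathbb{E}\bigl\|\mathbf{h}_k^{t'}\bigr\|^2
&\le 3\,\mathbb{E}\bigl\|\mathbf{h}_k^{t'}-\nabla f_k(\boldsymbol{\theta}^{t'})\bigr\|^2
+3\,\mathbb{E}\bigl\|\nabla f_k(\boldsymbol{\theta}^{t'})-\nabla f(\boldsymbol{\theta}^{t'})\bigr\|^2
+3\,\mathbb{E}\bigl\|\nabla f(\boldsymbol{\theta}^{t'})\bigr\|^2 .
\end{align*}

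\textbf{Step 2 (term-by-term bounds).}
\begin{itemize}
\item The first term is the local-drift term. Lemma~\ref{sec:lm2}, applied at round $t'$, bounds it by $3\eta_c^2L^2\sigma^2_{total}\tau(\tau-1)$.
\item The second term is at most $\sigma_g^2$ by Assumption~\ref{assm:3}.
\item For the third term, $\nabla f=\sum_j\beta_j\nabla f_j$ with $\boldsymbol{\beta}$ a simplex. Convexity of $\|\cdot\|^2$ and Assumption~\ref{assm:4} give $\|\nabla f(\boldsymbol{\theta})\|^2\le\sum_j\beta_j\|\nabla f_j(\boldsymbol{\theta})\|^2\le G$.
\end{itemize}

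\textbf{Step 3 (summing).} Summing over $k\in[N]$ yields a bound of the stated form $N\bigl(\cdot\,L^2\eta_c^2\sigma^2_{total}\tau(\tau-1) + 3\sigma_g^2 + 3G\bigr)$, which matches the claim except for the constant on the first term.

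\textbf{Main obstacle: the constant on the drift term.} The naive route gives $3\cdot 3 = 9$ in front of $L^2\eta_c^2\sigma^2_{total}\tau(\tau-1)$, while the statement has $3$. I would try the following two fixes, in order.

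First, avoid multiplying the Lemma~\ref{sec:lm2} bound by $3$. Split $\mathbf{h}_k^{t'}$ as $\bigl(\mathbf{h}_k^{t'}-\nabla f_k(\boldsymbol{\theta}^{t'})\bigr)+\nabla f_k(\boldsymbol{\theta}^{t'})$ and apply Young's inequality with unequal weights. This gives a factor $(1+c)$ on the drift term and $(1+1/c)$ on the gradient term, and $\|\nabla f_k(\boldsymbol{\theta}^{t'})\|^2\le G$ directly by Assumption~\ref{assm:4}. If the resulting constants still overshoot, the $\sigma_g^2$ slack in the statement can absorb the surplus.

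Second, as a fallback, bound the drift term directly instead of through Lemma~\ref{sec:lm2}. Jensen over the $\tau$ local steps gives $\mathbb{E}\|\mathbf{h}_k^{t'}\|^2\le\frac{1}{\tau}\sum_r\mathbb{E}\|\nabla f_k(\boldsymbol{\theta}_k^{(t',r)})\|^2$. Each summand is at most $G$ by Assumption~\ref{assm:4}, and Lemma~\ref{sec:local_var_lemma} gives the alternative bound $3\sigma^2_{total}$. This route already gives $\mathbb{E}\|\mathbf{h}_k^{t'}\|^2 \le G$, which is dominated by the right-hand side of the statement, so the claimed inequality holds trivially.

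If neither fix recovers the exact constants cleanly, I would still present the three-term decomposition of Step 1, because it exposes the $L^2\eta_c^2\tau(\tau-1)$ dependence used later in the convergence bound, and note that the claimed inequality holds regardless by the second fix.
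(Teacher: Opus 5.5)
Your argument is correct, but the step that actually closes it differs from the paper's. The paper's proof is exactly your Steps~1--3: the same three-term split, with $L^2\eta_c^2\sigma^2_{total}\tau(\tau-1)$, $\sigma_g^2$ and $G$ plugged in for the three pieces.

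\textbf{The constant on the drift term.} The paper uses the drift bound \emph{without} the factor $3$ that appears in the statement of Lemma~\ref{sec:lm2}. This is legitimate, though the paper does not say why. The penultimate line of that lemma's proof gives $\frac{3L^2\eta_c^2\sigma^2_{total}}{\tau}\sum_{r=0}^{\tau-1}r^2=\tfrac12 L^2\eta_c^2\sigma^2_{total}(\tau-1)(2\tau-1)\le L^2\eta_c^2\sigma^2_{total}\tau(\tau-1)$. So your ``naive route'' already yields exactly the stated constants.

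\textbf{Your first fix.} This one cannot work. For every $c>0$ you get $(1+c)\cdot 3>3$. A surplus on the drift term also cannot be absorbed into $\sigma_g^2$, which is an unrelated quantity.

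\textbf{Your second fix.} This is the genuinely different argument, and it is stronger. Jensen over the $\tau$ local steps plus Assumption~\ref{assm:4} gives $\|\mathbf{h}_k^{t-\nu_k^t}\|^2\le G$ for every realization. Hence the left-hand side is at most $NG$, and the first two terms on the right are superfluous.
\begin{itemize}
\item It needs only Assumption~\ref{assm:4}, which the paper's route uses anyway for its third term. There you correctly supply the convexity step $\|\nabla f\|^2\le\sum_j\beta_j\|\nabla f_j\|^2\le G$ that the paper omits.
\item Because it is a pointwise bound, it does not care that $t-\nu_k^t$ is a trajectory-dependent round index. The paper, and your Step~1 remark, transfer the expectation bound of Lemma~\ref{sec:lm2} to such an index, and that transfer is not automatic.
\end{itemize}

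\textbf{What the paper's route buys.} Its decomposition is only presentational. It displays the $L^2\eta_c^2\tau(\tau-1)$ and $\sigma_g^2$ dependence that is later carried into $\Omega$ and $\Psi$ in the convergence bound.
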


\begin{proof}
\begin{align}
    \sum_{k=1}^N \mathbb{E} \Bigl\| \mathbf{h}_k^{t-\nu_k^t} \Bigr\|^2 &= \sum_{k=1}^N \mathbb{E} \Bigl\| \big(\mathbf{h}_k^{t-\nu_k^t} - \nabla f_k( \boldsymbol{\theta}^{(t-\nu_k^t,0)})\big) + \big(\nabla f_k( \boldsymbol{\theta}^{(t-\nu_k^t,0)}) - \nabla f( \boldsymbol{\theta}^{(t-\nu_k^t,0)})\big) \notag \\
    &\quad + \nabla f( \boldsymbol{\theta}^{(t-\nu_k^t,0)}) \Bigr\|^2 \notag \\
    &\leq 3 \sum_{k=1}^N \mathbb{E} \Bigl\| \mathbf{h}_k^{t-\nu_k^t} - \nabla f_k( \boldsymbol{\theta}^{(t-\nu_k^t,0)}) \Bigr\|^2  + 3 \sum_{k=1}^N \mathbb{E} \Bigl\| \nabla f_k( \boldsymbol{\theta}^{(t-\nu_k^t,0)}) - \nabla f( \boldsymbol{\theta}^{(t-\nu_k^t,0)}) \Bigr\|^2 \notag \\
    &\quad + 3 \sum_{k=1}^N \mathbb{E} \Bigl\| \nabla f( \boldsymbol{\theta}^{(t-\nu_k^t,0)}) \Bigr\|^2 \notag \\
    &\leq 3 \sum_{k=1}^N \Big( L^2 \eta_c^2\sigma^2_{total} (\tau-1)\tau \Big) + 3 \sum_{k=1}^N \Big( \sigma_g^2 \Big) + 3 \sum_{k=1}^N \Big( G \Big) \notag \\
    &= 3N L^2 \eta_c^2\sigma^2_{total} (\tau-1)\tau + 3N\sigma_g^2 + 3NG
\end{align}
\end{proof}

\begin{lemma}
Let the global objective gradient be defined as $\nabla f( \boldsymbol{\theta}^{(t,0)}) = \sum_{k=1}^N \beta_k \nabla f_k( \boldsymbol{\theta}^{(t,0)})$. Under the assumption that the expected squared gradient norm is bounded ($\mathbb{E} \|\nabla f_k( \boldsymbol{\theta})\|^2 \leq G$), the iterates generated by the FedUCA mechanism satisfy the following bound on the directional drift:
\begin{align}
    &\mathbb{E} \Bigl\| \sum_{k=1}^N I_k^t \alpha_k^t \left( \nabla f_k( \boldsymbol{\theta}^{(t,0)}) - \mathbf{h}_k^t \right) \Bigr\|^2 \notag \\
    &\quad \leq 3 L^2 \eta_c^2 \sigma_{total}^2 \tau(\tau-1)
\label{sec:lm_drift}
\end{align}
\end{lemma}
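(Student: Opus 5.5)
The plan is to reduce the weighted sum to a convex combination and then apply Lemma~\ref{sec:lm2} termwise, the same way $F_1$ was handled in the proof of Lemma~\ref{sec:lm3}. Write $w_k^t \coloneqq I_k^t\alpha_k^t$. Since $\alpha^t$ lies on the simplex and $I_k^t\in\{0,1\}$, we have $w_k^t\ge 0$ and $W^t\coloneqq\sum_k w_k^t\le 1$.

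\textbf{Step 1: Jensen.} If $W^t=0$ the left side vanishes, so assume $W^t>0$. Writing the sum as $W^t\sum_k (w_k^t/W^t)(\cdot)$, convexity of $\|\cdot\|^2$ gives
\begin{equation*}
\Bigl\|\sum_{k} w_k^t\bigl(\nabla f_k(\boldsymbol{\theta}^{(t,0)})-\mathbf{h}_k^t\bigr)\Bigr\|^2 \le (W^t)^2\sum_k \frac{w_k^t}{W^t}\bigl\|\nabla f_k(\boldsymbol{\theta}^{(t,0)})-\mathbf{h}_k^t\bigr\|^2 \le \sum_k w_k^t\bigl\|\nabla f_k(\boldsymbol{\theta}^{(t,0)})-\mathbf{h}_k^t\bigr\|^2,
\end{equation*}
where the last inequality uses $W^t\le 1$. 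This is exactly the inequality \eqref{eq:jensens_push} used for $V_{act}$.

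\textbf{Step 2: condition on $H_t$.} As argued before Lemma~\ref{sec:lm3}, $I_k^t$ is $H_t$-measurable. The weights $\alpha_k^t$ come from the server's draw of $\mathbf{s}$ and the solution of Optimization~\ref{main_opt}, and do not depend on the SGD noise $\xi_t$ of round $t$. So I would take the tower expectation $\mathbb{E}[\cdot]=\mathbb{E}_{H_t}\mathbb{E}[\cdot\mid H_t]$ and factor
\begin{equation*}
\mathbb{E}[w_k^t\,\|\cdot\|^2\mid H_t]=\mathbb{E}[w_k^t\mid H_t]\,\mathbb{E}_{\xi_t}[\|\cdot\|^2\mid H_t].
\end{equation*}

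\textbf{Step 3: apply Lemma~\ref{sec:lm2}.} Each inner term is bounded by $3\eta_c^2L^2\sigma^2_{total}\tau(\tau-1)$, where $\boldsymbol{\theta}^t$ there is the round start $\boldsymbol{\theta}^{(t,0)}$. Pulling this constant out leaves $\mathbb{E}\sum_k w_k^t\le 1$, which yields the claim.

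\textbf{Main obstacle.} The delicate point is the factorization in Step 2: it requires that the local SGD trajectory, and hence $\mathbf{h}_k^t$, is conditionally independent of the aggregation draw given $H_t$. In the protocol this holds, because clients train before $\mathbf{s}^{t}$ and $\mathbf{R}^{t}$ are sampled and solved for. If that ordering were unclear, I would instead avoid independence entirely: bound $\|\nabla f_k-\mathbf{h}_k^t\|^2$ in conditional expectation over $\xi_t$ uniformly in $k$, then use $w_k^t\le\alpha_k^t$ pointwise with $\sum_k\alpha_k^t=1$. This gives the same constant.
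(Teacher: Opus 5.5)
Your Steps 1--3 are exactly the paper's proof, and they rest on the same conditional-independence assumption that the paper simply asserts. The steps are: push the norm inside using the sub-simplex weights $I_k^t\alpha_k^t$ (you justify this more carefully than the paper, via normalization by $W^t$ and $W^t\le 1$), condition on $H_t$ to factor $\mathbb{E}[I_k^t\alpha_k^t\mid H_t]$ away from the drift, apply Lemma~\ref{sec:lm2}, and use $\sum_k I_k^t\alpha_k^t\le 1$.

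Two cautions on your closing paragraph. First, your stated reason for independence runs the wrong way. Independence holds because the weights applied to round-$t$ updates are fixed \emph{before} the round-$t$ local steps: in Algorithm~\ref{alg:feduca_routines}, $\mathbf{s}^{t+1},\mathbf{R}^{t+1}$ are produced only after aggregating. Weights solved \emph{after} round-$t$ training from the reported $\zeta_k^t,\gamma_k^t$ would depend on $\xi_t$. Second, your fallback does not give the same constant. Without independence, a per-client bound on $\mathbb{E}[\|\nabla f_k(\boldsymbol{\theta}^{(t,0)})-\mathbf{h}_k^t\|^2\mid H_t]$ does not control $\mathbb{E}[\sum_k\alpha_k^t\|\nabla f_k(\boldsymbol{\theta}^{(t,0)})-\mathbf{h}_k^t\|^2\mid H_t]$, since weights concentrating on the worst client yield an $\mathbb{E}\max_k$. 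You would need an almost-sure bound or accept an extra factor of $N$.
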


\begin{proof}
Bounding (Local SGD Drift):
 Let $M$ be the left hand side. Because the broadcasted   weights reside on the probability simplex ($\alpha_k^t \geq 0$, $\sum_{k=1}^N \alpha_k^t = 1$) and $I_k^t \in \{0, 1\}$, it holds that $\sum_{k=1}^N I_k^t \alpha_k^t \leq 1$. 
 
 Exploiting the convexity of the squared $L_2$-norm, we apply Jensen's Inequality to push the norm inside the summation, completely avoiding the dimensional penalty of Cauchy-Schwarz:
\begin{align}
    M &= \mathbb{E} \Bigg[ \Bigl\| \sum_{k=1}^N I_k^t \alpha_k^t \big( \nabla f_k( \boldsymbol{\theta}^{(t,0)}) - \mathbf{h}_k^t \big) \Bigr\|^2 \Bigg] \notag \\
    &\leq \mathbb{E} \Bigg[ \sum_{k=1}^N I_k^t \alpha_k^t \Bigl\| \nabla f_k( \boldsymbol{\theta}^{(t,0)}) - \mathbf{h}_k^t \Bigr\|^2 \Bigg] \label{eq:convex_drift}
\end{align}

We evaluate the expectation by conditioning on the history filtration $H_t$. Because $I_k^t$ is $H_t$-measurable, and the local SGD drift variance is conditionally independent of the specific   realization given $H_t$:
\begin{align}
    M &\leq \mathbb{E}_{H_t} \Bigg[ \sum_{k=1}^N \mathbb{E}_{\alpha_k^t \mid H_t} \big[ I_k^t \alpha_k^t \big]  \mathbb{E} \bigg[ \Bigl\| \nabla f_k( \boldsymbol{\theta}^{(t,0)}) - \mathbf{h}_k^t \Bigr\|^2 \mathrel{\Big|} H_t \bigg] \Bigg] \notag
\end{align}

Substituting the standard bounded local SGD drift $3L^2 \eta_c^2 \sigma_{total}^2 \tau(\tau-1)$:
\begin{align}
    M &\leq \Big( 3L^2 \eta_c^2 \sigma_{total}^2 \tau(\tau-1) \Big) \mathbb{E} \left[ \sum_{k=1}^N I_k^t \alpha_k^t \right] \leq 3L^2 \eta_c^2 \sigma_{total}^2 \tau(\tau-1)
\end{align}

The final inequality strictly holds because the sum of the filtered simplex weights is bounded by $1$ ($\sum_{k=1}^N I_k^t \alpha_k^t \leq \sum_{k=1}^N \alpha_k^t = 1$). This concludes the proof.
\end{proof}

\begin{lemma}
Let the global objective gradient be defined as $\nabla f( \boldsymbol{\theta}^{(t,0)}) = \sum_{k=1}^N \beta_k \nabla f_k( \boldsymbol{\theta}^{(t,0)})$. Under the assumption that the expected squared gradient norm is bounded ($\mathbb{E} \|\nabla f_k( \boldsymbol{\theta})\|^2 \leq G$) the drift introduced by the stale trajectory satisfies:
\begin{align}
    &\mathbb{E} \Bigl\| \sum_{k=1}^N (1 - I_k^t) \alpha_k^t \left( \nabla f_k( \boldsymbol{\theta}^{(t,0)}) - \mathbf{h}_k^{t-\nu_k^t} \right) \Bigr\|^2 \notag \\
    &\leq  (N - S) \Big( 12 L^2 \eta^2 \tau^2 \nu_{max}^2 N \sigma_{total}^2 + 3 L^2 \eta_c^2 \sigma_{total}^2 \tau(\tau-1) \Big)
\label{sec:lm_stale_drift}
\end{align}
\end{lemma}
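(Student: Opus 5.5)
The target is the stale-drift bound of Lemma~\ref{sec:lm_stale_drift}. I would split each stale term by inserting the gradient at the stale anchor:
\[
\nabla f_k(\boldsymbol{\theta}^{(t,0)}) - \mathbf{h}_k^{t-\nu_k^t}
= \big(\nabla f_k(\boldsymbol{\theta}^{(t,0)}) - \nabla f_k(\boldsymbol{\theta}^{(t-\nu_k^t,0)})\big)
+ \big(\nabla f_k(\boldsymbol{\theta}^{(t-\nu_k^t,0)}) - \mathbf{h}_k^{t-\nu_k^t}\big).
\]
The first piece is a staleness error in the global iterate. The second is ordinary local-SGD drift from round $t-\nu_k^t$. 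Before using this split, I would control the outer sum over non-participants.

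\textbf{Step 1: outer sum.} The plan is to condition on $H_t$, so that $I_k^t$ is fixed. At most $N-S$ indices have $I_k^t=0$. Cauchy--Schwarz over this set, together with $(\alpha_k^t)^2\le 1$ (or $\alpha_k^t\le 1$), gives
\[
\Big\|\sum_k (1-I_k^t)\alpha_k^t \mathbf{z}_k\Big\|^2 \le (N-S)\sum_{k\notin S_t}\|\mathbf{z}_k\|^2 .
\]
To land the factor $(N-S)$ in front of per-client quantities, each per-client term must then be bounded uniformly in $k$. I will also need to use $\alpha_k^t\le 1$ so that the weight does not introduce an extra $N$.

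\textbf{Step 2: drift term.} Splitting with $\|a+b\|^2\le 2\|a\|^2+2\|b\|^2$, the local-drift part follows directly from Lemma~\ref{sec:lm2} applied at round $t-\nu_k^t$. This gives $3L^2\eta_c^2\sigma_{total}^2\tau(\tau-1)$. Constant factors will have to be absorbed so that the coefficient stays $1$ as in the statement, possibly by using the weighted form of Jensen on the drift part instead of the factor-2 split.

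\textbf{Step 3: staleness term.} By Assumption~\ref{assm:1}, this part is at most $L^2\|\boldsymbol{\theta}^{(t,0)}-\boldsymbol{\theta}^{(t-\nu_k^t,0)}\|^2$. I would telescope the server update \eqref{last_up_eq} over at most $\nu_{\max}$ rounds (Assumption~\ref{assm:5}):
\[
\boldsymbol{\theta}^{(t,0)}-\boldsymbol{\theta}^{(t-\nu,0)} = -\eta\tau\sum_{j=t-\nu}^{t-1}\Big(\sum_{m\in S_j}\alpha_m^j\mathbf{d}_m^j+\sum_{m\notin S_j}\alpha_m^j\mathbf{y}_m^j\Big).
\]
Cauchy--Schwarz over the $\nu_{\max}$ rounds yields one factor $\nu_{\max}$. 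A second Cauchy--Schwarz over the $N$ clients yields a factor $N$. Each stochastic averaged gradient is bounded via Lemma~\ref{sec:local_var_lemma}, and Jensen across the $\tau$ local steps gives $\mathbb{E}\|\mathbf{d}\|^2\le 3\sigma_{total}^2$. Together these produce $\eta^2\tau^2\nu_{\max}^2 N\cdot 3\sigma_{total}^2$ times $L^2$. With the factor $2$ from the split or the relaxed triangle inequality, this reaches the target $12L^2\eta^2\tau^2\nu_{\max}^2N\sigma_{total}^2$ up to the constant bookkeeping (a factor $4$ instead of $2$ if a further split is used).

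\textbf{Main obstacle.} The hardest part is the constant and factor bookkeeping: getting exactly $12$ on the staleness part while keeping coefficient $1$ on the drift part. It also requires handling that the stale anchor $\boldsymbol{\theta}^{(t-\nu_k^t,0)}$ and $\nu_k^t$ are random, but $H_t$-measurable. I would handle this by conditioning on $H_t$ throughout. If needed, I would bound the drift part with its own Jensen-weighted sum, as in Lemma~\ref{sec:lm_drift}, rather than through the factor-2 split. This avoids inflating its constant, and all the slack goes into the staleness term.
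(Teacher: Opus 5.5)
The gap is in your Step~1. After Cauchy--Schwarz over the dropped set you discard the weights via $(\alpha_k^t)^2\le 1$. That leaves $(N-S)\sum_{k\notin S_t}\|\mathbf{z}_k\|^2$, and this sum still has up to $N-S$ terms, so a uniform per-client bound $B$ gives $(N-S)^2B$, not $(N-S)B$. Whatever room the staleness part has, the drift part $3L^2\eta_c^2\sigma_{total}^2\tau(\tau-1)$ has no spare factor to absorb the extra $N-S$. The argument as written therefore does not give the stated bound.

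Your remark that $\alpha_k^t\le 1$ prevents an extra $N$ has it backwards: the simplex structure of the weights you throw away is what prevents a second count factor. The paper keeps the weights. It bounds $(N-S)\sum_k(1-I_k^t)(\alpha_k^t)^2\|\mathbf{z}_k\|^2\le(N-S)\max_k\|\mathbf{z}_k\|^2\sum_k(\alpha_k^t)^2$ and uses $\sum_k(\alpha_k^t)^2\le(\sum_k\alpha_k^t)^2=1$, so $N-S$ appears exactly once. Convexity with the sub-probability weights $(1-I_k^t)\alpha_k^t$ would even give $\max_k\|\mathbf{z}_k\|^2$ directly.

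Be aware that the paper then replaces $\mathbb{E}\max_k\|\mathbf{z}_k\|^2$ by $\max_k\mathbb{E}\|\mathbf{z}_k\|^2$, which is not valid in general. Conditioning on $H_t$, as you propose, cannot repair this. The quantity $\mathbf{z}_k=\nabla f_k(\boldsymbol{\theta}^{(t,0)})-\mathbf{h}_k^{t-\nu_k^t}$ is itself $H_t$-measurable, so the per-client bounds, which hold only in unconditional expectation, cannot be invoked inside $\mathbb{E}[\cdot\mid H_t]$.

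Your concern about the drift constant is legitimate, but your suggested fixes do not resolve it. Any split $\|a+b\|^2\le(1+c)\|a\|^2+(1+c^{-1})\|b\|^2$ puts a coefficient larger than $1$ on the quoted bound of Lemma~\ref{sec:lm2}. The device of the active-drift lemma does not apply either: the stale term compares gradients at two different anchors, so some split is unavoidable.

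What works is the slack inside Lemma~\ref{sec:lm2}. Its proof actually gives $\frac{3L^2\eta_c^2\sigma_{total}^2}{\tau}\sum_{r=0}^{\tau-1}r^2=\frac12L^2\eta_c^2\sigma_{total}^2(\tau-1)(2\tau-1)$, and twice this is at most $3L^2\eta_c^2\sigma_{total}^2\tau(\tau-1)$. This is exactly what the paper's parameter-level split $\|\boldsymbol{\theta}^{(t,0)}-\boldsymbol{\theta}_k^{t-\nu_k^t,r}\|^2\le 2C_1+2C_2$, with $C_2\le 3\eta_c^2r^2\sigma_{total}^2$, exploits. Your gradient-level split is equivalent once you use this sharper value. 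Your Step~3 matches, and slightly improves, the paper's bound on $C_1$; the paper reaches the constant $12$ through an extra active/stale split inside the telescoped sum.
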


\begin{proof}
Let $D$ be the left hand side, representing the Stale Trajectory Drift. Instead of dividing by probability, we apply Cauchy-Schwarz strictly over the set of dropped clients $S_t^c = \{k : I_k^t = 0\}$. The physical size of this dropped cohort is $|S_t^c| = N - |S_t|$. Because the Optimization mechanism guarantees a minimum active cohort of $S \leq |S_t|$, the dropped cohort size is deterministically bounded from above by $N - S$:

\begin{align}
    D &= \mathbb{E} \Bigg[ \Bigl\| \sum_{k \in S_t^c} \alpha_k^t \left( \nabla f_k( \boldsymbol{\theta}^{(t,0)}) - \mathbf{h}_k^{t-\nu_k^t} \right) \Bigr\|^2 \Bigg] \notag \\
    &\leq \mathbb{E} \Bigg[ \underbrace{|S_t^c|}_{\leq N - S} \sum_{k=1}^N (1 - I_k^t) (\alpha_k^t)^2 \Bigl\| \nabla f_k( \boldsymbol{\theta}^{(t,0)}) - \mathbf{h}_k^{t-\nu_k^t} \Bigr\|^2 \Bigg] \notag \\
    &\leq (N - S) \mathbb{E}_{H_t} \Bigg[ \sum_{k=1}^N \mathbb{E}_{\alpha_k^t \mid H_t} \big[ (\alpha_k^t)^2 \big] \notag \\
    &\quad \times \underbrace{\mathbb{E} \big[ 1 - I_k^t \mid H_t, \alpha_k^t \big]}_{= 1 - I_k^t \leq 1} \notag \\
    &\quad \times \mathbb{E} \bigg[ \Bigl\| \nabla f_k( \boldsymbol{\theta}^{(t,0)}) - \mathbf{h}_k^{t-\nu_k^t} \Bigr\|^2 \mathrel{\Big|} H_t \bigg] \Bigg] \notag \\
    &\leq (N - S) \sum_{k=1}^N \mathbb{E} \left[ (\alpha_k^t)^2 \Bigl\| \nabla f_k( \boldsymbol{\theta}^{(t,0)}) - \mathbf{h}_k^{t-\nu_k^t} \Bigr\|^2 \right] \label{D2_cs}
\end{align}

Because the elements of the simplex strictly satisfy $\sum_{k=1}^N (\alpha_k^t)^2 \leq (\sum_{k=1}^N \alpha_k^t)^2 = 1$, we cleanly decouple the maximum element:
\begin{align*}
    D &\leq (N - S) \mathbb{E} \left[ \max_k \Bigl\| \nabla f_k( \boldsymbol{\theta}^{(t,0)}) - \mathbf{h}_k^{t-\nu_k^t} \Bigr\|^2 \sum_{k=1}^N (\alpha_k^t)^2 \right] \\
    &\leq (N - S) \max_k \mathbb{E} \Bigl\| \nabla f_k( \boldsymbol{\theta}^{(t,0)}) - \mathbf{h}_k^{t-\nu_k^t} \Bigr\|^2
\end{align*}

To bound the inner norm, we expand $\mathbf{h}_k^{t-\nu_k^t}$ and split it into the global server delay drift ($C_1$) and local SGD drift ($C_2$):
\begin{align}
    &\mathbb{E} \Bigl\| \nabla f_k( \boldsymbol{\theta}^{(t,0)}) - \mathbf{h}_k^{t-\nu_k^t} \Bigr\|^2 \notag \\
    &\leq \frac{2L^2}{\tau} \sum_{r=0}^{\tau-1} \bigg( \underbrace{\mathbb{E} \Bigl\|  \boldsymbol{\theta}^{(t,0)} -  \boldsymbol{\theta}^{(t-\nu_k^t,0)} \Bigr\|^2}_{C_1} \notag \\
    &\quad + \underbrace{\mathbb{E} \Bigl\|  \boldsymbol{\theta}^{(t-\nu_k^t,0)} -  \boldsymbol{\theta}_k^{t-\nu_k^t,r} \Bigr\|^2}_{C_2} \bigg)
\end{align}

Bounding $C_1$ (Server Delay Drift):
We unroll the global trajectory over $\nu_k^t$ rounds, splitting the update into the active set $S_m$ and the stale set $S_m^c$. Applying Cauchy-Schwarz to each set yields their respective true sizes $|S_m|$ and $(N - |S_m|)$, which seamlessly collapse to exactly $N$:
\begin{align}
    C_1 &= \mathbb{E} \Bigg\| \sum_{m=t-\nu_k^t}^{t-1} \eta \tau \bigg( \sum_{j \in S_m} q_j^m \mathbf{h}_j^m + \sum_{j \in S_m^c} q_j^m \mathbf{h}_j^{m-\delta_j} \bigg) \Bigg\|^2 \notag \\
    &\leq 2 \eta^2 \tau^2 \nu_k^t \sum_{m=t-\nu_k^t}^{t-1} \bigg( |S_m| \sum_{j \in S_m} \mathbb{E}[(q_j^m)^2] (3\sigma_{total}^2) \notag \\
    &\quad + (N - |S_m|) \sum_{j \in S_m^c} \mathbb{E}[(q_j^m)^2] (3\sigma_{total}^2) \bigg) \notag \\
    &\leq 6 \eta^2 \tau^2 \nu_k^t \sum_{m=t-\nu_k^t}^{t-1} \bigg( N \sigma_{total}^2 \sum_{j=1}^N \mathbb{E}[(q_j^m)^2] \bigg) \notag \\
    &\leq 6 \eta^2 \tau^2 \nu_{max}^2 N \sigma_{total}^2
\end{align}

Bounding $C_2$ (Local SGD Drift):
Using standard local update bounds:
\begin{align}
    C_2 &\leq 3 \eta_c^2 r^2 \sigma_{total}^2
\end{align}

Substituting $C_1$ and $C_2$:
\begin{align}
\mathbb{E} \Bigl\| \nabla f_k( \boldsymbol{\theta}^{(t,0)}) - \mathbf{h}_k^{t-\nu_k^t} \Bigr\|^2  &\leq \frac{2L^2}{\tau} \sum_{r=0}^{\tau-1} \Big( 6 \eta^2 \tau^2 \nu_{max}^2 N \sigma_{total}^2 + 3 \eta_c^2 r^2 \sigma_{total}^2 \Big) \notag \\
    &\leq 12 L^2 \eta^2 \tau^2 \nu_{max}^2 N \sigma_{total}^2 + 3 L^2 \eta_c^2 \sigma_{total}^2 \tau(\tau-1) \label{inner_stale_drift}
\end{align}


Substituting \eqref{inner_stale_drift} back into $D$ concludes the proof.
\end{proof}

\begin{lemma}
\begin{align}
\mathbb{E} \Bigl\| \mathbf{e}_{stale}^t \Bigr\|^2 \leq  (N-S) N  \frac{\sigma^2}{\tau} 
\end{align}
\label{lm:stale_err}
\end{lemma}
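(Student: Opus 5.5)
We take $\mathbf{e}_{stale}^t$ to be the stochastic-noise part of the stale contribution to the update \eqref{last_up_eq}:
\[
\mathbf{e}_{stale}^t \coloneqq \sum_{k\in S_t^c}\alpha_k^t\big(\mathbf{y}_k^t-\mathbf{h}_k^{t-\nu_k^t}\big)=\sum_{k=1}^N(1-I_k^t)\,\alpha_k^t\big(\mathbf{d}_k^{t-\nu_k^t}-\mathbf{h}_k^{t-\nu_k^t}\big).
\]
This is the term that remains after the stale drift has been split off in Lemma~\ref{sec:lm_stale_drift}. The plan has three steps. First, strip out the participation structure exactly as in Lemma~\ref{sec:lm_stale_drift}. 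Second, bound each client's per-round noise by a standard local-SGD variance computation. Third, sum over the stale cohort.

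\textbf{Step 1 (Cauchy--Schwarz over the stale cohort).} The sum has at most $|S_t^c|=N-|S_t|\le N-S$ nonzero terms. Cauchy--Schwarz over these terms gives
\[
\mathbb{E}\|\mathbf{e}_{stale}^t\|^2\le (N-S)\sum_{k=1}^N \mathbb{E}\Big[(1-I_k^t)(\alpha_k^t)^2\,\|\mathbf{d}_k^{t-\nu_k^t}-\mathbf{h}_k^{t-\nu_k^t}\|^2\Big].
\]
We then condition on $H_t$. Since $I_k^t$ is $H_t$-measurable, $\alpha_k^t$ depends only on the server's Dirichlet draw and the solution of Optimization~\ref{main_opt}, and the stale noise was generated in an earlier round, the expectation factorizes as in Lemma~\ref{sec:lm_drift}. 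We then bound $1-I_k^t\le 1$ and $(\alpha_k^t)^2\le 1$. The second bound is crude, and it is the source of the extra factor $N$ in the statement. The sharper route via $\sum_k(\alpha_k^t)^2\le 1$ would remove that factor, but the crude bound suffices for the claim.

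\textbf{Step 2 (per-client noise).} Fix $k$ and the stale round $m=t-\nu_k^t$. We have
\[
\mathbf{d}_k^m-\mathbf{h}_k^m=\frac{1}{\tau}\sum_{r=0}^{\tau-1}\boldsymbol{\epsilon}_k^{m,r},\qquad \boldsymbol{\epsilon}_k^{m,r}\coloneqq\nabla f_k(\boldsymbol{\theta}_k^{(m,r)},\xi_k^{m,r})-\nabla f_k(\boldsymbol{\theta}_k^{(m,r)}).
\]
Let $\mathcal{F}^{m,r}$ be the filtration generated up to local step $r$. By Assumption~\ref{assm:2}, each $\boldsymbol{\epsilon}_k^{m,r}$ has zero conditional mean given $\mathcal{F}^{m,r}$ and second moment at most $\sigma^2$. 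So the $\boldsymbol{\epsilon}_k^{m,r}$ form a martingale-difference sequence, and all cross terms $\mathbb{E}\langle\boldsymbol{\epsilon}_k^{m,r},\boldsymbol{\epsilon}_k^{m,r'}\rangle$ with $r<r'$ vanish by the tower property. This gives
\[
\mathbb{E}\|\mathbf{d}_k^m-\mathbf{h}_k^m\|^2=\frac{1}{\tau^2}\sum_{r=0}^{\tau-1}\mathbb{E}\|\boldsymbol{\epsilon}_k^{m,r}\|^2\le\frac{\sigma^2}{\tau}.
\]

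\textbf{Step 3 (combine).} Substituting the per-client bound into Step 1 and summing the $N$ terms gives $(N-S)\,N\,\sigma^2/\tau$, which is the claim.

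The main obstacle is justifying the factorization in Step 1. We must argue that the random delay $\nu_k^t$ and the current weights $\alpha_k^t$ are determined by information independent of the stale noise, or at least that this noise does not bias the bound. The plan is to condition on $H_t$ together with the stale round index and observe that the Step 2 bound holds uniformly over every stale round $m\ge t-\nu_{\max}$ (Assumption~\ref{assm:5}). The bound is therefore unaffected by which stale round is selected.
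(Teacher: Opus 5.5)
\textbf{The gap is in your final step: moving from a fixed round to the random stale round.} Everything before that matches the paper.

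Your outline is the paper's own. You apply Cauchy--Schwarz over $S_t^c$ with $|S_t^c|\le N-S$, use the pointwise bounds $1-I_k^t\le 1$ and $(\alpha_k^t)^2\le 1$, then sum $\sigma^2/\tau$ over $N$ clients. Your martingale-difference computation $\mathbb{E}\|\mathbf{d}_k^m-\mathbf{h}_k^m\|^2\le\sigma^2/\tau$ for a \emph{fixed} round $m$ is correct. It is also more than the paper supplies: its step (e) simply invokes $\mathbb{E}\|\mathbf{d}_k^{t-\nu_k^t}-\mathbf{h}_k^{t-\nu_k^t}\|^2\le\sigma^2/\tau$ as a ``standard assumption''.

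The factorization you worry about in Step 1 is not needed. Since $(1-I_k^t)(\alpha_k^t)^2\le 1$ pointwise and the norm is nonnegative, $\mathbb{E}[(1-I_k^t)(\alpha_k^t)^2\|\mathbf{d}_k^{t-\nu_k^t}-\mathbf{h}_k^{t-\nu_k^t}\|^2]\le\mathbb{E}\|\mathbf{d}_k^{t-\nu_k^t}-\mathbf{h}_k^{t-\nu_k^t}\|^2$ holds with no conditioning at all. So the only real obstacle is the random index.

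Your resolution of that obstacle does not work. A uniform bound $\sup_m\mathbb{E}\|X_m\|^2\le c$ over deterministic rounds does not give $\mathbb{E}\|X_M\|^2\le c$ when the random index $M$ depends on the $X_m$. Here it does depend on them:
\begin{itemize}
\item $t-\nu_k^t$ is the \emph{last} round in which client $k$ participated, so the event $\{\nu_k^t=t-m\}$ requires $I_k^{m+1}=0$.
\item $I_k^{m+1}$ is decided by $u_k(\alpha_k^m,\zeta_k^m)\ge\gamma_k^m$.
\item $\zeta_k^m$ and $\gamma_k^m$ are fitted from $\boldsymbol{\theta}_k^{m}$, that is, from the very noise $\boldsymbol{\epsilon}_k^{m,r}$ you are bounding.
\end{itemize}
This makes $t-\nu_k^t$ a last-exit time rather than a predictable time. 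Conditioning on it can select high-noise rounds, and your martingale identity then no longer yields $\sigma^2/\tau$.

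To close the step you have two options:
\begin{itemize}
\item Sum over the candidate delays, $\|\mathbf{d}_k^{t-\nu_k^t}-\mathbf{h}_k^{t-\nu_k^t}\|^2\le\sum_{j=0}^{\nu_{\max}}\|\mathbf{d}_k^{t-j}-\mathbf{h}_k^{t-j}\|^2$. This gives $(\nu_{\max}+1)(N-S)N\sigma^2/\tau$, not the stated constant.
\item Add an assumption, for example that delays are independent of the local sampling noise, or that the variance bound holds directly at the stale index. This is what the paper's proof tacitly does.
\end{itemize}

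The same caveat applies to your aside that $\sum_k(\alpha_k^t)^2\le 1$ would remove the factor $N$. The weight $\alpha_k^t$ is built from the reported $\zeta_k,\gamma_k$, so it is correlated with the stale noise, and that sharpening would also need an independence assumption.
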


\begin{proof}
Let $S_t^c = \{k : I_k^t = 0\}$ denote the set of dropped clients in round $t$. The size of this dropped cohort is $|S_t^c| = N - |S_t|$. Because the Optimization mechanism guarantees a minimum active cohort of $|S_t| \geq S$, the dropped set size is strictly bounded from above by $|S_t^c| \leq N - S$. 

\begin{align}
    \mathbb{E} \Bigl\| \mathbf{e}_{stale}^t \Bigr\|^2 
    &= \mathbb{E} \Biggl[ \Bigl\| \sum_{k \in S_t^c} \alpha_k^t (\mathbf{d}_k^{t-\nu_k^t} - \mathbf{h}_k^{t-\nu_k^t}) \Bigr\|^2 \Biggr] \notag \\
    &\overset{(a)}{\leq} \mathbb{E} \Biggl[ \underbrace{|S_t^c|}_{\leq N-S} \sum_{k \in S_t^c} (\alpha_k^t)^2 \Bigl\| \mathbf{d}_k^{t-\nu_k^t} - \mathbf{h}_k^{t-\nu_k^t} \Bigr\|^2 \Biggr] \notag \\
    &\overset{(b)}{=} \mathbb{E}_{H_t} \Biggl[ |S_t^c| \sum_{k=1}^N (1 - I_k^t) \mathbb{E}_{\alpha_k^t \mid H_t} \left[ (\alpha_k^t)^2 \right] \notag \\
    &\qquad \times \mathbb{E} \bigg[ \Bigl\| \mathbf{d}_k^{t-\nu_k^t} - \mathbf{h}_k^{t-\nu_k^t} \Bigr\|^2 \mathrel{\Big|} H_t \bigg] \Biggr] \notag \\
    &\overset{(c)}{\leq} (N-S) \sum_{k=1}^N \mathbb{E} \Biggl[ (1 - I_k^t) (\alpha_k^t)^2 \Bigl\| \mathbf{d}_k^{t-\nu_k^t} - \mathbf{h}_k^{t-\nu_k^t} \Bigr\|^2 \Biggr] \notag \\
    &\overset{(d)}{\leq} (N-S) \sum_{k=1}^N \mathbb{E} \Biggl[ (1) \cdot (1) \cdot \Bigl\| \mathbf{d}_k^{t-\nu_k^t} - \mathbf{h}_k^{t-\nu_k^t} \Bigr\|^2 \Biggr] \notag \\
    &\overset{(e)}{\leq} (N-S) \sum_{k=1}^N \frac{\sigma^2}{\tau} \notag \\
    &= (N-S) N \frac{\sigma^2}{\tau} \label{eq_stale_noise_bound}
\end{align}

(a) Cauchy-Schwarz Inequality over Dropped Set: We restrict the summation strictly to the dropped client set $S_t^c$ and apply Cauchy-Schwarz. This seamlessly extracts the maximum possible dropped cohort size $(N-S)$ as a coefficient without dividing by dropout probabilities.

(b) Filtration and Measurability: Because the dropout indicator $I_k^t$ is evaluated deterministically given the history $H_t$, it is $H_t$-measurable. Similarly, the past local SGD noise is already realized within the history filtration, allowing us to factor it conditionally.

(c) Tower Property and Set Re-expansion: We transition back to the unconditional expectation, re-expanding the summation to all $N$ clients by using the measurable indicator $(1 - I_k^t)$.

(d) Deterministic Upper Bounds: The indicator is strictly bounded by $(1 - I_k^t) \leq 1$. Furthermore, to decouple the current   weight $\alpha_k^t$ from the past gradient noise (which influenced the current model $ \boldsymbol{\theta}^{(t,0)}$), we apply the deterministic simplex bound $\alpha_k^t \leq 1$, meaning $(\alpha_k^t)^2 \leq 1$.

(e) Unconditional Local Variance: With the random variables decoupled, we apply the standard assumption for the variance of the local SGD updates: $\mathbb{E}\|\mathbf{d}_k^{t-\nu_k^t} - \mathbf{h}_k^{t-\nu_k^t}\|^2 \leq \sigma^2/\tau$. Summing this constant upper bound over all $N$ clients yields the final result.
\end{proof}

\begin{lemma}
Let $\bar{\mathbf{v}}^t = \sum_{k=1}^N \Big( I_k^t \alpha_k^t \mathbf{h}_k^t + (1 - I_k^t) \alpha_k^t \mathbf{h}_k^{t-\nu_k^t} \Big)$. The unified drift satisfies:
\begin{align}
 \mathbb{E} \Bigl\| \nabla f( \boldsymbol{\theta}^{(t,0)}) - \bar{\mathbf{v}}^t \Bigr\|^2 &\leq 3 \, \underbrace{\mathbb{E} \Bigl\| \sum_{k=1}^N \left( \beta_k - \alpha_k^t \right) \nabla f_k( \boldsymbol{\theta}^{(t,0)}) \Bigr\|^2}_{\text{Unified Bias}} + 9 L^2 \eta_c^2 \sigma_{total}^2 \tau(\tau-1) \notag \\
 &\quad + 3(N - S) \Big( 12 L^2 \eta^2 \tau^2 \nu_{max}^2 N \sigma_{total}^2 + 3 L^2 \eta_c^2 \sigma_{total}^2 \tau(\tau-1) \Big)
\label{sec:lm_unified_drift}
\end{align}
\end{lemma}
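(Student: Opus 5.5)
The plan is to decompose $\nabla f(\boldsymbol{\theta}^{(t,0)}) - \bar{\mathbf{v}}^t$ into three pieces by adding and subtracting $\sum_{k=1}^N \alpha_k^t \nabla f_k(\boldsymbol{\theta}^{(t,0)})$, which we split according to participation as $\sum_k I_k^t \alpha_k^t \nabla f_k(\boldsymbol{\theta}^{(t,0)}) + \sum_k (1-I_k^t)\alpha_k^t \nabla f_k(\boldsymbol{\theta}^{(t,0)})$. Using $\nabla f(\boldsymbol{\theta}^{(t,0)}) = \sum_k \beta_k \nabla f_k(\boldsymbol{\theta}^{(t,0)})$, this gives
\begin{align*}
\nabla f(\boldsymbol{\theta}^{(t,0)}) - \bar{\mathbf{v}}^t
&= \sum_{k=1}^N (\beta_k - \alpha_k^t)\nabla f_k(\boldsymbol{\theta}^{(t,0)})
+ \sum_{k=1}^N I_k^t \alpha_k^t\big(\nabla f_k(\boldsymbol{\theta}^{(t,0)}) - \mathbf{h}_k^t\big) \\
&\quad + \sum_{k=1}^N (1-I_k^t)\alpha_k^t\big(\nabla f_k(\boldsymbol{\theta}^{(t,0)}) - \mathbf{h}_k^{t-\nu_k^t}\big).
\end{align*}
This identity is exact, since $I_k^t + (1-I_k^t) = 1$ recombines the middle additions into $\sum_k \alpha_k^t \nabla f_k$.

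Next I would apply the relaxed triangle inequality $\|a+b+c\|^2 \le 3\|a\|^2 + 3\|b\|^2 + 3\|c\|^2$ and take expectations. The first term is kept as is and becomes the ``Unified Bias'' with coefficient $3$. The second term is exactly the quantity controlled by Lemma~\ref{sec:lm_drift}, bounded by $3L^2\eta_c^2\sigma_{total}^2\tau(\tau-1)$. Multiplied by $3$, it gives the $9L^2\eta_c^2\sigma_{total}^2\tau(\tau-1)$ term. The third term is exactly the stale trajectory drift of Lemma~\ref{sec:lm_stale_drift}, bounded by $(N-S)\big(12L^2\eta^2\tau^2\nu_{max}^2 N\sigma_{total}^2 + 3L^2\eta_c^2\sigma_{total}^2\tau(\tau-1)\big)$. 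Multiplied by $3$, it gives the last term of the claim.

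The only point requiring care is to verify that the hypotheses of the two invoked lemmas match the conditioning used here. Both lemmas are stated unconditionally in expectation and rely on $I_k^t$ being $H_t$-measurable, which holds in our setting. So no new conditioning argument is needed, and the expectations simply add.

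I expect no substantive obstacle; the proof is bookkeeping. The main thing to check is that the decomposition uses $\mathbf{h}_k^t$ (not $\mathbf{d}_k^t$) for active clients and $\mathbf{h}_k^{t-\nu_k^t}$ for stale ones, so that the stochastic noise is handled separately (Lemma~\ref{lm:stale_err}) and does not appear here. I would also deliberately leave the bias term unbounded rather than applying Cauchy--Schwarz, since it is later bounded via $G$ and Lemma~\ref{lem:agg_discrepancy}.
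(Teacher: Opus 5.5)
Your proposal is correct and matches the paper's proof essentially step for step. You add and subtract $\sum_k \alpha_k^t \nabla f_k(\boldsymbol{\theta}^{(t,0)})$ split by participation, apply the relaxed triangle inequality with constant $3$, and collapse $I_k^t + (1-I_k^t) = 1$ to leave the unified bias untouched. You then bound the active and stale terms by the local-drift and stale-trajectory-drift lemmas, which gives the factors $9$ and $3(N-S)$ exactly as the paper does.
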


\begin{proof}
We expand the total combined update inside the directional drift norm, adding and subtracting the ideal global gradients evaluated at the current server model $\nabla f_k( \boldsymbol{\theta}^{(t,0)})$. Applying the relaxed triangle inequality ($\|a+b+c\|^2 \leq 3\|a\|^2 + 3\|b\|^2 + 3\|c\|^2$) separates the terms:

\begin{align}
 &\mathbb{E} \Bigl\| \nabla f( \boldsymbol{\theta}^{(t,0)}) - \sum_{k=1}^N I_k^t \alpha_k^t \mathbf{h}_k^t  - \sum_{k=1}^N (1 - I_k^t) \alpha_k^t \mathbf{h}_k^{t-\nu_k^t} \Bigr\|^2 \notag \\
 &\leq 3 \underbrace{\begin{aligned} &\mathbb{E} \Bigl\| \sum_{k=1}^N \beta_k \nabla f_k( \boldsymbol{\theta}^{(t,0)})- \sum_{k=1}^N \Big( I_k^t + (1 - I_k^t) \Big) \alpha_k^t \nabla f_k( \boldsymbol{\theta}^{(t,0)}) \Bigr\|^2 \end{aligned}}_{U_1 \text{ (Bias)}} \notag \\
 &\quad + 3 \underbrace{\mathbb{E} \Bigl\| \sum_{k=1}^N I_k^t \alpha_k^t \left( \nabla f_k( \boldsymbol{\theta}^{(t,0)}) - \mathbf{h}_k^t \right) \Bigr\|^2}_{U_2 \text{ (Active Drift)}} \notag \\ 
 &\quad + 3 \underbrace{\mathbb{E} \Bigl\| \sum_{k=1}^N (1 - I_k^t) \alpha_k^t \left( \nabla f_k( \boldsymbol{\theta}^{(t,0)}) - \mathbf{h}_k^{t-\nu_k^t} \right) \Bigr\|^2}_{U_3 \text{ (Stale Drift)}}
\end{align}

For the Unified Bias ($U_1$), we observe that the indicator variables for the active and dropped sets share the exact same ideal gradient evaluation. We factor it out conditioning on the history $H_t$:
\begin{align}
 U_1 &= \mathbb{E}_{H_t} \Bigg[ \mathbb{E} \bigg[ \Bigl\| \sum_{k=1}^N \beta_k \nabla f_k( \boldsymbol{\theta}^{(t,0)})  - \sum_{k=1}^N \Big( I_k^t + (1 - I_k^t) \Big) \alpha_k^t \nabla f_k( \boldsymbol{\theta}^{(t,0)}) \Bigr\|^2 \mathrel{\Big|} H_t \bigg] \Bigg]
\end{align}

Because $I_k^t + (1 - I_k^t) = 1$ strictly, the participation indicators perfectly collapse. The sampling strategy reconstructs the full gradient space, leaving only the structural discrepancy between the objective weights $\beta_k$ and the server's   realization $\alpha_k^t$:
\begin{align}
 U_1 &= \mathbb{E} \Bigl\| \sum_{k=1}^N \left( \beta_k - \alpha_k^t \right) \nabla f_k( \boldsymbol{\theta}^{(t,0)}) \Bigr\|^2 \label{perfect_bias_collapse}
\end{align}

For $U_2$, we substitute the bounded active local SGD drift from Lemma \ref{sec:lm_drift}. Because the coefficient is multiplied by $3$, we obtain:
\begin{align}
 U_2 &\leq 3 \Big( 3 L^2 \eta_c^2 \sigma_{total}^2 \tau(\tau-1) \Big) = 9 L^2 \eta_c^2 \sigma_{total}^2 \tau(\tau-1)
\end{align}

For $U_3$, we substitute the bounded stale trajectory drift from Lemma \ref{sec:lm_stale_drift}, applying the multiplier:
\begin{align}
 U_3 &\leq 3(N - S) \Big( 12 L^2 \eta^2 \tau^2 \nu_{max}^2 N \sigma_{total}^2  + 3 L^2 \eta_c^2 \sigma_{total}^2 \tau(\tau-1) \Big)
\end{align}

Combining the optimized bounds for $U_1$, $U_2$, and $U_3$ completes the proof.
\end{proof}

\label{supp:proof}
\begin{proposition}
\label{thm:decomposed_convergence}
Let the sequence of global server models be denoted by $ \boldsymbol{\theta}^{(t,0)}$ for $t \in \{0, \dots, T-1\}$. Assume the global objective is bounded below by $f^*$, with an initial suboptimality gap of $D_0  = f( \boldsymbol{\theta}^{(0,0)}) - f^*$. Let the target objective weights be defined as the vector $\boldsymbol{\beta} = [\alpha_1, \dots, \alpha_N]^T$, and the dynamically assigned active   weights at round $t$ be defined as $\boldsymbol{\alpha}^t = [\alpha_1^t, \dots, {\alpha}_N^t]^T$.
Given the learning rate condition $-\frac{\eta\tau}{4}(1-12L\eta\tau) \leq -\eta\tau\frac{S}{8K}$ and assuming locally bounded gradients ($\|\nabla f_k\|^2 \leq G$), the average expected gradient norm is bounded by five discrete error components:

\small{
\begin{align}
    \frac{1}{T} \sum_{t=0}^{T-1} \mathbb{E} \Bigl\| \nabla f( \boldsymbol{\theta}^{(t,0)}) \Bigr\|^2 &\leq \mathcal{E}_{init} + \mathcal{E}_{agg}  + \mathcal{E}_{ var} + \mathcal{E}_{drift} + \mathcal{E}_{sys} 
    \label{eq:decomposed_convergence}
\end{align}
}
\normalsize

here, $f(\boldsymbol{\theta}) = \sum_{k=1}^{N}\beta_kf_k(\boldsymbol{\theta})$, $\sum_{k=1}^{N} \beta_k = 1$ and  the discrete error components are defined as follows: the optimization rate is $\mathcal{E}_{init} = \frac{8K D_0}{\eta \tau S T}$, the structural Optimization bias is $\mathcal{E}_{agg} = \frac{12 K N G}{S} \big( \frac{1}{T} \sum_{t=0}^{T-1} \mathbb{E} \| \boldsymbol{\beta} - \boldsymbol{\alpha}^t \|^2 \big)$, the stochastic   variance is $\mathcal{E}_{var} = \frac{8 K L \eta \sigma^2}{S} \big( \frac{1}{T} \sum_{t=0}^{T-1} \mathbb{E} \| \boldsymbol{\alpha}^t \|^2 \big)$, the temporal drift penalty is $\mathcal{E}_{drift} = \frac{8K}{\eta \tau S} \Omega$, and the irreducible system error floor is $\mathcal{E}_{sys} = \frac{8K}{\eta \tau S} \big( \Psi + \eta (N-S) N \sigma^2 \big) + \frac{8K L \eta \sigma^2 N(N-S)}{S}$.


with $\Omega$ representing the aggregate temporal drift penalties (local updates and server delays) and $\Psi$ representing the structural data heterogeneity.
\label{full_prop_app}
\end{proposition}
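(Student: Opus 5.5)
We follow the FedBuff-style template the paper has already set up. Let $\mathbf{v}^t=\sum_{k}\big(I_k^t\alpha_k^t\mathbf{d}_k^t+(1-I_k^t)\alpha_k^t\mathbf{y}_k^t\bigr)$ denote the realized update in Eq.~\eqref{last_up_eq}, so that $\boldsymbol{\theta}^{(t+1,0)}=\boldsymbol{\theta}^{(t,0)}-\eta\tau\mathbf{v}^t$. Its noiseless counterpart is $\bar{\mathbf{v}}^t$, as in Lemma~\ref{sec:lm_unified_drift}. Write $\mathbf{v}^t=\bar{\mathbf{v}}^t+\mathbf{e}_{act}^t+\mathbf{e}_{stale}^t$, where $\mathbf{e}_{act}^t=\sum_k I_k^t\alpha_k^t(\mathbf{d}_k^t-\mathbf{h}_k^t)$ and $\mathbf{e}_{stale}^t$ is the quantity controlled in Lemma~\ref{lm:stale_err}.

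Apply the $L$-smoothness descent lemma (Assumption~\ref{assm:1}) to $f$:
\[
\mathbb{E}f(\boldsymbol{\theta}^{(t+1,0)})\le \mathbb{E}f(\boldsymbol{\theta}^{(t,0)})-\eta\tau\,\mathbb{E}\langle\nabla f(\boldsymbol{\theta}^{(t,0)}),\mathbf{v}^t\rangle+\tfrac{L\eta^2\tau^2}{2}\mathbb{E}\|\mathbf{v}^t\|^2 .
\]

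\textbf{Inner product.} Condition on $H_t$. This makes $I_k^t$ fixed, and by Assumption~\ref{assm:2} the fresh noise $\mathbf{e}_{act}^t$ has zero mean. For the stale noise we do not use unbiasedness; we absorb it through Young's inequality and Lemma~\ref{lm:stale_err}. This produces the $\eta(N-S)N\sigma^2$ piece of $\mathcal{E}_{sys}$. On the remaining term use the polarization identity
\[
-\langle a,b\rangle=\tfrac12\|a-b\|^2-\tfrac12\|a\|^2-\tfrac12\|b\|^2
\]
with $a=\nabla f(\boldsymbol{\theta}^{(t,0)})$ and $b=\bar{\mathbf{v}}^t$. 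The term $\|a-b\|^2$ is then bounded by Lemma~\ref{sec:lm_unified_drift}. Its unified bias is handled by Cauchy--Schwarz:
\[
\Big\|\sum_k(\beta_k-\alpha_k^t)\nabla f_k\Big\|^2\le \|\boldsymbol{\beta}-\boldsymbol{\alpha}^t\|^2\sum_k\|\nabla f_k\|^2\le NG\|\boldsymbol{\beta}-\boldsymbol{\alpha}^t\|^2,
\]
which is the source of $\mathcal{E}_{agg}$. The local-drift terms go into $\Omega$, and the $(N-S)$ stale-drift terms go into $\Psi$ and $\Omega$.

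\textbf{Quadratic term.} Split $\|\mathbf{v}^t\|^2\le 3\|\bar{\mathbf{v}}^t\|^2+3\|\mathbf{e}_{act}^t\|^2+3\|\mathbf{e}_{stale}^t\|^2$. For the active part, given $H_t$ the client noises are independent across clients, so the cross terms vanish and $\mathbb{E}\|\mathbf{e}_{act}^t\|^2\le\frac{\sigma^2}{\tau}\mathbb{E}\|\boldsymbol{\alpha}^t\|^2$. This gives $\mathcal{E}_{var}$. The stale part again uses Lemma~\ref{lm:stale_err}, giving the $L\eta\sigma^2N(N-S)/S$ term of $\mathcal{E}_{sys}$. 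For $\|\bar{\mathbf{v}}^t\|^2$, split into active and stale pieces and use Lemma~\ref{sec:lm3} and Lemma~\ref{sec:stale_h_norm}. Lemma~\ref{sec:lm3} returns a multiple of $\mathbb{E}\|\nabla f(\boldsymbol{\theta}^{(t,0)})\|^2$, which must be absorbed back into the descent term.

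\textbf{Main obstacle.} The hardest step is this absorption, i.e.\ getting a clean negative coefficient on $\mathbb{E}\|\nabla f\|^2$ under the stated condition
\[
-\tfrac{\eta\tau}{4}(1-12L\eta\tau)\le-\eta\tau\tfrac{S}{8K}.
\]
We would collect all gradient-norm coefficients, namely $-\frac{\eta\tau}{2}$ from polarization and $+\frac{3L\eta^2\tau^2}{2}\cdot3$ from the quadratic term, together with the Young split. We then check that they sum to at most $-\frac{\eta\tau}{4}(1-12L\eta\tau)$, and invoke the assumed condition to replace this by $-\eta\tau S/(8K)$. The constant $K$ is treated as the one implicitly fixed by this condition. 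The remaining risk is whether the $\|\bar{\mathbf{v}}\|^2$ contribution carries a $\nabla f$ term that the $-\frac12\|b\|^2$ from polarization can cancel. We would try to keep that term rather than discard it.

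\textbf{Conclusion.} Sum the inequality over $t=0,\dots,T-1$ and telescope, using $f\ge f^*$ to give $D_0$. Multiply by $8K/(\eta\tau ST)$; this yields $\mathcal{E}_{init}$, and every residual term maps onto $\mathcal{E}_{agg}$, $\mathcal{E}_{var}$, $\mathcal{E}_{drift}=\frac{8K}{\eta\tau S}\Omega$ or $\mathcal{E}_{sys}$ with the stated prefactors. We then define $\Omega$ and $\Psi$ as whatever aggregates remain: the $\tau(\tau-1)$ and $\nu_{\max}^2$ terms go into $\Omega$, and the $\sigma_g^2$, $G$ terms into $\Psi$.
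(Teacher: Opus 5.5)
Your architecture matches the paper's proof up to the quadratic term. The shared steps are: the descent lemma; the decomposition $\mathbf{v}^t=\bar{\mathbf{v}}^t+\mathbf{e}_{act}^t+\mathbf{e}_{stale}^t$; zero-mean fresh noise; Young's inequality plus Lemma~\ref{lm:stale_err}, giving $\eta(N-S)N\sigma^2$; polarization with the unified-drift lemma and Cauchy--Schwarz, giving $\frac{3\eta\tau}{2}NG\|\boldsymbol{\beta}-\boldsymbol{\alpha}^t\|^2$ and hence $\frac{12KNG}{S}$ after rescaling; and telescoping.

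The gap sits exactly at the absorption step you flag. Your three-way split turns the smoothness term into $\frac{3L\eta^2\tau^2}{2}\big(\mathbb{E}\|\bar{\mathbf{v}}^t\|^2+\mathbb{E}\|\mathbf{e}_{act}^t\|^2+\mathbb{E}\|\mathbf{e}_{stale}^t\|^2\big)$, and the check you propose then fails.

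\begin{itemize}
\item Lemma~\ref{sec:lm3} contributes $3\,\mathbb{E}\|\nabla f(\boldsymbol{\theta}^{(t,0)})\|^2$, so the gradient coefficient is at least $-\frac{\eta\tau}{4}+\frac{9}{2}L\eta^2\tau^2=-\frac{\eta\tau}{4}(1-18L\eta\tau)$.
\item If you also pay the factor $2$ needed to separate the active and stale parts of $\bar{\mathbf{v}}^t$, it becomes $-\frac{\eta\tau}{4}(1-36L\eta\tau)$.
\item Neither is bounded by $-\frac{\eta\tau}{4}(1-12L\eta\tau)$. The stated condition budgets exactly $\frac{\eta\tau}{4}\cdot 12L\eta\tau=3L\eta^2\tau^2$.
\end{itemize}

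The same factor $3$ also breaks the prefactors you say you recover. The term $\frac{3L\eta^2\tau^2}{2}\cdot\frac{\sigma^2}{\tau}\mathbb{E}\|\boldsymbol{\alpha}^t\|^2$, multiplied by $\frac{8K}{\eta\tau S}$, gives $\frac{12KL\eta\sigma^2}{S}$, not the $\frac{8KL\eta\sigma^2}{S}$ of $\mathcal{E}_{var}$. The $L\eta\sigma^2N(N-S)/S$ piece of $\mathcal{E}_{sys}$ likewise comes out with $12K$ instead of $8K$.

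The paper avoids this by splitting only two ways: the fresh sum $\sum_k I_k^t\alpha_k^t\mathbf{d}_k^t$ and the stale sum $\sum_k(1-I_k^t)\alpha_k^t\mathbf{d}_k^{t-\nu_k^t}$, each with coefficient $L\eta^2\tau^2$.

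\begin{itemize}
\item The fresh sum is expanded by a variance decomposition into $\mathbb{E}\|\sum_k I_k^t\alpha_k^t\mathbf{h}_k^t\|^2+\frac{\sigma^2}{\tau}\mathbb{E}\|\boldsymbol{\alpha}^t\|^2$, with no extra factor. Lemma~\ref{sec:lm3} then adds exactly $3L\eta^2\tau^2\,\mathbb{E}\|\nabla f\|^2$. Together with $-\frac{\eta\tau}{4}$ this gives precisely $-\frac{\eta\tau}{4}(1-12L\eta\tau)$, and the noise carries the $8K$ prefactor.
\item The stale sum is handled by Cauchy--Schwarz over the dropped set, $(\alpha_k^t)^2\le 1$, and $\mathbb{E}\|\mathbf{d}\|^2\le\sigma^2/\tau+\mathbb{E}\|\mathbf{h}\|^2$. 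This yields $L\eta^2\tau N(N-S)\sigma^2$ plus $(N-S)$ times Lemma~\ref{sec:stale_h_norm}, which feeds $\Psi$ and $\Omega$.
\item The term $-\frac{\eta\tau}{2}\|\bar{\mathbf{v}}^t\|^2$ is simply discarded.
\end{itemize}

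Your hedge of keeping that term does rescue the gradient coefficient. Since $S/(8K)>0$, the stated condition forces $12L\eta\tau<1$. Then $\frac{\eta\tau}{2}(3L\eta\tau-1)\|\bar{\mathbf{v}}^t\|^2\le 0$, and Lemma~\ref{sec:lm3} is not even needed. It does not, however, repair the $12K$ noise prefactors. For those you must stop paying a separate factor on $\mathbf{e}_{act}^t$, for instance by adopting the paper's two-way split with its variance decomposition.
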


\begin{proof}

 The following inequality follows from smoothness assumptions.
\small{
\begin{align}
    &\mathbb{E} \left[ f ( \boldsymbol{\theta}^{(t+1,0)}) \right] - \mathbb{E} \left[ f ( \boldsymbol{\theta}^{(t,0)}) \right] \leq \mathbb{E} \left[ \left\langle \nabla f ( \boldsymbol{\theta}^{(t,0)}),  \boldsymbol{\theta}^{(t+1,0)} -  \boldsymbol{\theta}^{(t,0)} \right\rangle \right] + \frac{L}{2} \mathbb{E} \left[ \Bigl\|  \boldsymbol{\theta}^{(t+1,0)} -  \boldsymbol{\theta}^{(t,0)} \Bigr\|^2 \right] \notag \\
    &\leq -\tau \eta \underbrace{\mathbb{E} \left[ \left\langle \nabla f ( \boldsymbol{\theta}^{(t,0)}), \mathbf{v}^t \right\rangle \right]}_{T_{1}} \notag + L\tau^2\eta^2 \underbrace{\mathbb{E} \left[ \Bigl\| \sum_{k=1}^N I_k^t \alpha_k^t \mathbf{d}_k^t \Bigr\|^2 \right]}_{T_2 \text{ (Fresh Variance)}} \notag  + L\tau^2\eta^2 \underbrace{\mathbb{E} \left[ \Bigl\| \sum_{k=1}^N (1 - I_k^t) \alpha_k^t \mathbf{d}_k^{t-\nu_k^t} \Bigr\|^2 \right]}_{T_3 \text{ (Stale Variance)}} \label{main_eq_unified}
\end{align}
}
Where the total update is defined as $\mathbf{v}^t = \mathbf{e}_{fresh}^t + \mathbf{e}_{stale}^t +\bar{\mathbf{v}}^t$ here $\mathbf{e}_{fresh}^t = \sum_{k=1}^{N} I_k^t \alpha_k^t (\mathbf{d}_k^t - \mathbf{h}_k^t)$  and $ \mathbf{e}_{stale}^t =  \sum_{k = 1}^{N} (1-I_k^t) \alpha_k^t (\mathbf{d}_k^{t-\nu_k^t} - \mathbf{h}_k^{t-\nu_k^t})$. $\bar{\mathbf{v}} = \sum_{k =1}^{N}  I_k^t \alpha_k^t \mathbf{h}_k^t + \sum_{k = 1}^{N}  (1-I_k^t) \alpha_k^t \mathbf{h}_k^{t-\nu_k^t}$  

The expectation is taken over the randomness of history $H_t$ up to time $t$ or the Filtration up to time $t$.


We now bound $-\eta \tau T_1$
\small{
\begin{align}
    T_1 &= \mathbb{E} \left[ \left\langle \nabla f ( \boldsymbol{\theta}^{(t,0)}), \bar{\mathbf{v}}^t + \mathbf{e}_{fresh}^t + \mathbf{e}_{stale}^t \right\rangle \right] = \mathbb{E} \left[ \left\langle \nabla f ( \boldsymbol{\theta}^{(t,0)}), \bar{\mathbf{v}}^t \right\rangle \right] + \underbrace{\mathbb{E} \left[ \left\langle \nabla f ( \boldsymbol{\theta}^{(t,0)}), \mathbf{e}_{fresh}^t \right\rangle \right]}_{= \mathbf{0} \text{ (Tower Rule)}} \notag  \\ 
    &+ \mathbb{E} \left[ \left\langle \nabla f ( \boldsymbol{\theta}^{(t,0)}), \mathbf{e}_{stale}^t \right\rangle \right] \notag \\
    &= \frac{1}{2} \mathbb{E} \Bigl\| \nabla f( \boldsymbol{\theta}^{(t,0)}) \Bigr\|^2 + \frac{1}{2} \mathbb{E} \Bigl\| \bar{\mathbf{v}}^t \Bigr\|^2 - \frac{1}{2} \mathbb{E} \Bigl\| \nabla f( \boldsymbol{\theta}^{(t,0)}) - \bar{\mathbf{v}}^t \Bigr\|^2 + \mathbb{E} \left[ \left\langle \nabla f ( \boldsymbol{\theta}^{(t,0)}), \mathbf{e}_{stale}^t \right\rangle \right] 
\end{align}

}

By Young's inequality we have the following 

{\small
\begin{align}
    \mathbb{E} \left[ \left\langle \nabla f ( \boldsymbol{\theta}^{(t,0)}), \mathbf{e}_{stale}^t \right\rangle \right] \geq -\frac{1}{4} \mathbb{E} \Bigl\| \nabla f( \boldsymbol{\theta}^{(t,0)}) \Bigr\|^2 - \mathbb{E} \Bigl\| \mathbf{e}_{stale}^t \Bigr\|^2 \notag
\end{align}

\begin{align}
    -\eta \tau T_1 &\leq -\frac{\eta \tau}{4} \mathbb{E} \Bigl\| \nabla f( \boldsymbol{\theta}^{(t,0)}) \Bigr\|^2  + \frac{\eta \tau}{2} \mathbb{E} \Bigl\| \nabla f( \boldsymbol{\theta}^{(t,0)}) - \bar{\mathbf{v}}^t \Bigr\|^2  \notag\\
    &+ \eta \tau \mathbb{E} \Bigl\| \mathbf{e}_{stale}^t \Bigr\|^2 \label{T1_final_upper_bound_corrected}
\end{align}

We bound $T_2$ as below

\begin{align}
    T_2 &= \mathbb{E} \left[ \Bigl\| \sum_{k=1}^N I_k^t \alpha_k^t \mathbf{d}_k^t \Bigr\|^2 \right] \overset{(a)}{=} \mathbb{E} \left[ \Bigl\| \sum_{k=1}^N I_k^t \alpha_k^t \mathbf{h}_k^t \Bigr\|^2 \right] + \sum_{k=1}^N \mathbb{E} \left[ (I_k^t \alpha_k^t)^2 \Bigl\| \mathbf{d}_k^t - \mathbf{h}_k^t \Bigr\|^2 \right] \notag \\
    &\overset{(b)}{\leq} \mathbb{E} \left[ \Bigl\| \sum_{k=1}^N I_k^t \alpha_k^t \mathbf{h}_k^t \Bigr\|^2 \right] + \frac{\sigma^2}{\tau} \sum_{k=1}^N \mathbb{E} \left[ I_k^t (\alpha_k^t)^2 \right] \overset{(c)}{=} \mathbb{E} \left[ \Bigl\| \sum_{k=1}^N I_k^t \alpha_k^t \mathbf{h}_k^t \Bigr\|^2 \right] + \frac{\sigma^2}{\tau} \sum_{k=1}^N \mathbb{E}_{H_t} \left[ \mathbb{E}[I_k^t \mid H_t] (\alpha_k^t)^2 \right] \notag \\
    &\leq  \mathbb{E} \left[ \Bigl\| \sum_{k=1}^N I_k^t \alpha_k^t \mathbf{h}_k^t \Bigr\|^2 \right] + \frac{\sigma^2}{\tau} \sum_{k=1}^N \mathbb{E} \left[ (\alpha_k^t)^2 \right]\label{final_T2_simplified}
\end{align}
}
\normalsize

(a) Exact Variance Decomposition: The fresh local sampling noise $\mathbf{e}_k^t = \mathbf{d}_k^t - \mathbf{h}_k^t$ is conditionally zero-mean given $H_t$ and independent across clients, making all cross-terms evaluate strictly to zero. We legally separate the expected trajectory from the sampling variance.\\
(b) Unconditional Noise Bound: We substitute the standard local variance bound $\mathbb{E}[\|\mathbf{d}_k^t - \mathbf{h}_k^t\|^2] \leq \sigma^2/\tau$. Because $(I_k^t)^2$ is a binary indicator, it simplifies exactly to $I_k^t$. \\
(c) Tower Rule Filtration: We apply the Tower Rule ($\mathbb{E}[\cdot] = \mathbb{E}_{H_t}[\mathbb{E}[\cdot \mid H_t]]$) specifically to the noise sum. The algorithm's   weight generation for the current round ($\alpha_k^t$) is entirely $H_t$-measurable, so it factors outside the inner conditional expectation.


We now bound $T_3$ as below:
\begin{align}
    T_3 &= \mathbb{E} \left[ \Bigl\| \sum_{k=1}^N (1 - I_k^t) \alpha_k^t \mathbf{d}_k^{t-\nu_k^t} \Bigr\|^2 \right] = \mathbb{E} \left[ \Bigl\| \sum_{k \in S_t^c} \alpha_k^t \mathbf{d}_k^{t-\nu_k^t} \Bigr\|^2 \right] \notag \\
    &\overset{(a)}{\leq} \mathbb{E} \left[ \underbrace{|S_t^c|}_{\leq N-S} \sum_{k \in S_t^c} (\alpha_k^t)^2 \Bigl\| \mathbf{d}_k^{t-\nu_k^t} \Bigr\|^2 \right] \notag \\
    &\overset{(b)}{=} \mathbb{E}_{H_t} \left[ |S_t^c| \sum_{k=1}^N (1 - I_k^t) \mathbb{E}_{\alpha_k^t \mid H_t} \left[ (\alpha_k^t)^2 \right] \mathbb{E} \bigg[ \Bigl\| \mathbf{d}_k^{t-\nu_k^t} \Bigr\|^2 \mathrel{\Big|} H_t \bigg] \right] \notag \\
    &\overset{(c)}{\leq} (N-S) \sum_{k=1}^N \mathbb{E} \left[ (1 - I_k^t) (\alpha_k^t)^2 \Bigl\| \mathbf{d}_k^{t-\nu_k^t} \Bigr\|^2 \right] \notag \\
    &\overset{(d)}{\leq} (N-S) \sum_{k=1}^N \mathbb{E} \left[ (1) \cdot (1) \cdot \Bigl\| \mathbf{d}_k^{t-\nu_k^t} \Bigr\|^2 \right] \notag \\
    &\overset{(e)}{\leq} (N-S) \sum_{k=1}^N \left( \frac{\sigma^2}{\tau} + \mathbb{E} \Bigl\| \mathbf{h}_k^{t-\nu_k^t} \Bigr\|^2 \right) \notag \\
    &= (N-S) N  \frac{\sigma^2}{\tau} + (N-S)  \sum_{k=1}^N \mathbb{E} \Bigl\| \mathbf{h}_k^{t-\nu_k^t} \Bigr\|^2 \label{final_T3_perfected}
\end{align}

(a) Cauchy-Schwarz Inequality over Dropped Set: We restrict the summation strictly to the dropped client set $S_t^c = \{k : I_k^t = 0\}$ and apply Cauchy-Schwarz. Because the Optimization mechanism guarantees a minimum active cohort of $S$, the dropped set size is deterministically bounded from above by $|S_t^c| \leq N - S$. This cleanly extracts the coefficient without requiring division by dropout probabilities.

(b) Filtration and Measurability: We condition on the history $H_t$. The dropout indicator $I_k^t$ and the active set size $|S_t^c|$ are evaluated deterministically given $H_t$, making them $H_t$-measurable. Furthermore, the past stale update $\mathbf{d}_k^{t-\nu_k^t}$ is fully realized within the history filtration. This allows us to cleanly decouple the expectation of the current   weights $\alpha_k^t$.

(c) Tower Property and Set Re-expansion: We transition back to the unconditional expectation, re-expanding the summation to all $N$ clients by multiplying by the indicator $(1 - I_k^t)$, and applying the structural upper bound $(N-S)$.

(d) Deterministic Upper Bounds: The indicator is strictly bounded by $(1 - I_k^t) \leq 1$. To safely eliminate the   weights, we apply the deterministic simplex bound $\alpha_k^t \leq 1$, which strictly implies $(\alpha_k^t)^2 \leq 1$. 

(e) Local SGD Second Moment: We apply the standard variance decomposition for local SGD updates ($\mathbb{E}\|\mathbf{x}\|^2 = \text{Var}(\mathbf{x}) + \|\mathbb{E}[\mathbf{x}]\|^2$). The variance of the local update is bounded by $\sigma^2/\tau$, and its expected drift is exactly $\mathbf{h}_k^{t-\nu_k^t}$. Summing these components yields the final bound.

By substituting $T_1, T_2, T_3$ we get the following 

\small{
\begin{align}
    \mathbb{E} \left[ f ( \boldsymbol{\theta}^{(t+1,0)}) \right] - \mathbb{E} \left[ f ( \boldsymbol{\theta}^{(t,0)}) \right] &\leq {-\frac{\eta \tau}{4} \mathbb{E} \Bigl\| \nabla f( \boldsymbol{\theta}^{(t,0)}) \Bigr\|^2} + {\frac{\eta \tau}{2} \mathbb{E} \Bigl\| \nabla f( \boldsymbol{\theta}^{(t,0)}) - \bar{\mathbf{v}}^t \Bigr\|^2}+ {\eta \tau \mathbb{E} \Bigl\| \mathbf{e}_{stale}^t \Bigr\|^2} \notag \\
    &\quad + {L\tau^2\eta^2 \mathbb{E} \left[ \Bigl\| \sum_{k=1}^N I_k^t \alpha_k^t \mathbf{h}_k^t \Bigr\|^2 \right] + L\tau^2\eta^2 (N-S) \sum_{k=1}^N \mathbb{E} \Bigl\| \mathbf{h}_k^{t-\nu_k^t} \Bigr\|^2} \notag \\
    &\quad + {L\tau\eta^2\sigma^2 \left( \sum_{k=1}^N \mathbb{E} \left[ (\alpha_k^t)^2 \right] + N(N-S) \right)} \label{main_eq_simplified}
\end{align}
}
\normalsize
Substituting the bounded components from Lemma \ref{lm:stale_err}, Lemma \ref{sec:lm_unified_drift}, Lemma \ref{sec:lm3}, and Lemma \ref{sec:stale_h_norm} into the master descent inequality \eqref{main_eq_simplified}, we collect and group the terms by their functional source. 

Extracting the global gradient norms yields the primary descent condition. The remaining terms strictly bound the variance injected by the Optimization mechanism, local client drift, and stale updates:

\small{
\begin{align}
\mathbb{E} \left[ f ( \boldsymbol{\theta}^{(t+1,0)}) \right] - \mathbb{E} \left[ f ( \boldsymbol{\theta}^{(t,0)}) \right] &\leq \underbrace{-\eta \tau \left( \frac{1}{4} - 3L\eta\tau \right) \mathbb{E} \Bigl\| \nabla f( \boldsymbol{\theta}^{(t,0)}) \Bigr\|^2}_{\text{Descent Condition}} \notag \\
    &\quad + \underbrace{\frac{3\eta \tau}{2} \mathbb{E} \Bigl\| \sum_{k=1}^N \left( \beta_k - \alpha_k^t \right) \nabla f_k( \boldsymbol{\theta}^{(t,0)}) \Bigr\|^2}_{\text{Optimization Structural Bias}} \notag \\
    &\quad + \underbrace{\frac{9\eta \tau}{2} L^2 \eta_c^2 \sigma_{total}^2 \tau(\tau-1) \Big( 1 + (N-S) \Big)}_{\text{Client Drift (Unified)}} \notag \\
    &\quad + \underbrace{3 L^3 \tau^3 \eta^2 \eta_c^2 \sigma_{total}^2 (\tau-1) \Big( 3 + N(N-S) \Big)}_{\text{Client Drift (Aggregate Norms)}} \notag \\
    &\quad + \underbrace{18 L^2 \eta^3 \tau^3 \nu_{max}^2 N(N-S) \sigma_{total}^2}_{\text{Server Delay Penalty}} \notag \\
    &\quad + \underbrace{3 L\tau^2\eta^2 \Big( \sigma_g^2 \big[ 1 + N(N-S) \big] + N(N-S) G \Big)}_{\text{Heterogeneity Bound}} \notag \\
    &\quad + \underbrace{\eta (N-S) N \sigma^2 + L\tau\eta^2\sigma^2 \left( \sum_{k=1}^N \mathbb{E} \big[ (\alpha_k^t)^2 \big] + N(N-S) \right)}_{\text{Stochastic Noise Floor}} \label{eq:fully_expanded_descent}
\end{align}
}
\normalsize
By using the learning rate condition $-\frac{\eta\tau}{4}(1-12L\eta\tau) \leq -\eta\tau\frac{S}{8K}$ to bound the descent step, and grouping the variance components into stochastic noise ($\Phi_{noise}^t$), system heterogeneity ($\Phi_{het}$), and stale delay penalties ($\Phi_{stale}$), we obtain the following single-round descent bound:

By bounding the local gradients $\|\nabla f_k\|^2 \leq G$ and applying the Cauchy-Schwarz inequality, we decouple the structural bias. Using the learning rate condition $-\frac{\eta\tau}{4}(1-12L\eta\tau) \leq -\eta\tau\frac{S}{8K}$, the single-round descent bound simplifies to:

\small{
\begin{align}
    &\mathbb{E} \left[ f ( \boldsymbol{\theta}^{(t+1,0)}) \right] - \mathbb{E} \left[ f ( \boldsymbol{\theta}^{(t,0)}) \right] \notag \\
    &\leq -\eta \tau \frac{S}{8K} \mathbb{E} \Bigl\| \nabla f( \boldsymbol{\theta}^{(t,0)}) \Bigr\|^2 \notag \\
    &\quad + \frac{3\eta \tau N G}{2} \sum_{k=1}^N \mathbb{E} \left[ (\beta_k - \alpha_k^t)^2 \right] \notag \\
    &\quad + \Phi^t + \Psi + \Omega \label{eq:dc_single_step}
\end{align}
}
\normalsize

where the system variance components—stochastic noise ($\Phi^t$), system heterogeneity ($\Psi$), and stale delay drift ($\Omega$)—are defined as:

\small{
\begin{align}
    \Phi^t &= \eta (N-S) N \sigma^2 \notag \\
    &\quad + L\tau\eta^2\sigma^2 \left( \sum_{k=1}^N \mathbb{E} \big[ (\alpha_k^t)^2 \big] + N(N-S) \right) \\
    \Psi &= 3 L\tau^2\eta^2 \Big( \sigma_g^2 \big[ 1 + N(N-S) \big] + N(N-S) G \Big) \\
    \Omega &= \frac{9\eta \tau}{2} L^2 \eta_c^2 \sigma_{total}^2 \tau(\tau-1) \Big( 1 + (N-S) \Big) \notag \\
    &\quad + 3 L^3 \tau^3 \eta^2 \eta_c^2 \sigma_{total}^2 (\tau-1) \Big( 3 + N(N-S) \Big) \notag \\
    &\quad + 18 L^2 \eta^3 \tau^3 \nu_{max}^2 N(N-S) \sigma_{total}^2
\end{align}
}
\normalsize

Assuming the global objective is bounded below by $f^*$, we define the initial suboptimality gap as $D_0 = f( \boldsymbol{\theta}^{(0,0)}) - f^*$. Telescoping \eqref{eq:dc_single_step} over $T$ rounds and dividing by $T \cdot \eta \tau \frac{S}{8K}$, we obtain the final convergence bound:

\small{
\begin{align}
    &\frac{1}{T} \sum_{t=0}^{T-1} \mathbb{E} \Bigl\| \nabla f( \boldsymbol{\theta}^{(t,0)}) \Bigr\|^2 \notag \\
    &\leq \frac{8K D_0}{\eta \tau S T} \notag \\
    &\quad + \frac{12 K N G}{S T} \sum_{t=0}^{T-1} \sum_{k=1}^N \mathbb{E} \left[ (\beta_k - \alpha_k^t)^2 \right] \notag \\
    &\quad + \frac{8K}{\eta \tau S T} \sum_{t=0}^{T-1} \Big( \Phi^t + \Psi + \Omega \Big) \label{eq:dc_final_convergence}
\end{align}
}
\normalsize

By defining the probability weight vectors as $\boldsymbol{\beta} = [\alpha_1, \dots, \alpha_N]^T$ and $\boldsymbol{\alpha}^t = [\tilde{q}_1^t, \dots, \tilde{q}_N^t]^T$, we can completely decompose the convergence bound of FedUCA into five interpretable driving forces:

\small{
\begin{align}
    \frac{1}{T} \sum_{t=0}^{T-1} \mathbb{E} \Bigl\| \nabla f( \boldsymbol{\theta}^{(t,0)}) \Bigr\|^2 &\leq \mathcal{E}_{init} + \mathcal{E}_{agg} \notag \\
    &\quad + \mathcal{E}_{var} + \mathcal{E}_{drift} + \mathcal{E}_{sys} \label{eq:dc_decomposed_convergence}
\end{align}
}
\normalsize

where the discrete error components are defined as follows: the optimization rate is $\mathcal{E}_{init} = \frac{8K D_0}{\eta \tau S T}$, the structural Optimization bias is $\mathcal{E}_{agg} = \frac{12 K N G}{S} \big( \frac{1}{T} \sum_{t=0}^{T-1} \mathbb{E} \| \boldsymbol{\beta} - \boldsymbol{\alpha}^t \|^2 \big)$, the stochastic   variance is $\mathcal{E}_{ var} = \frac{8 K L \eta \sigma^2}{S} \big( \frac{1}{T} \sum_{t=0}^{T-1} \mathbb{E} \| \boldsymbol{\alpha}^t \|^2 \big)$, the temporal drift penalty is $\mathcal{E}_{drift} = \frac{8K}{\eta \tau S} \Omega$, and the irreducible system error floor is $\mathcal{E}_{sys} = \frac{8K}{\eta \tau S} \big( \Psi + \eta (N-S) N \sigma^2 \big) + \frac{8K L \eta \sigma^2 N(N-S)}{S}$.
\end{proof}



\end{document}